\title{\huge Improving Implicit Regularization of  SGD with Preconditioning 
for Least Square Problems}
\author{
Junwei Su, Difan Zou$^{\dagger}$, Chuan Wu  \\
Department of Computer Science, University of Hong Kong \\
\texttt{\{jwsu,dzou,cwu\}@cs.hku.hk} 
}
\begin{document}

\maketitle
\begin{abstract}
Stochastic gradient descent (SGD) exhibits strong algorithmic regularization effects in practice and plays an important role in the generalization of modern machine learning. However, prior research has revealed instances where the generalization performance of SGD is worse than ridge regression due to uneven optimization along different dimensions. Preconditioning offers a natural solution to this issue by rebalancing optimization across different directions. Yet, the extent to which preconditioning can enhance the generalization performance of SGD and whether it can bridge the existing gap with ridge regression remains uncertain. In this paper, we study the generalization performance of SGD with preconditioning for the least squared problem. We make a comprehensive comparison between preconditioned SGD and (standard \& preconditioned) ridge regression. Our study makes several key contributions toward understanding and improving SGD with preconditioning. First, we establish excess risk bounds (generalization performance) for preconditioned SGD and ridge regression under an arbitrary preconditions matrix. Second, leveraging the excessive risk characterization of preconditioned SGD and ridge regression, we show that (through construction) there exists a simple preconditioned matrix that can make SGD comparable to (standard \& preconditioned) ridge regression. Finally, we show that our proposed preconditioning matrix is straightforward enough to allow robust estimation from finite samples while maintaining a theoretical improvement. Our empirical results align with our theoretical findings, collectively showcasing the enhanced regularization effect of preconditioned SGD.
\let\thefootnote\relax\footnotetext{$\dagger$: corresponding authors}
\end{abstract}

\section{Introduction}
Stochastic Gradient Descent (SGD) plays a pivotal role in the field of deep learning, due to its simplicity and efficiency in handling large-scale data and complex model architectures. Its significance extends beyond these operational advantages. Research has revealed that SGD may be a key factor enabling deep neural networks, despite being overparameterized, to still achieve remarkable generalization across various machine learning applications~\citep{tian2023recent}. A particularly intriguing aspect of SGD is its inherent ability to provide \emph{implicit regularization}~\citep{zhou2020towards,wu2020direction,pesme2021implicit,dauber2020can,li2022implicit,sekhari2021sgd,azizan2018stochastic,smith2021on}. This means that even in the absence of explicit regularizers, SGD can naturally guide overparameterized models towards solutions that generalize well on unseen data. This implicit regularization characteristic of SGD is a critical area of study, enhancing the understanding of why and how these large machine learning models perform effectively in real-world scenarios.

A growing body of work has emerged, dedicated to understanding and characterizing the generalization performance of SGD in the context of the linear regression problem. These studies have unveiled a close relationship between the implicit regularization inherent to SGD and the explicit $L_2$-type regularization (ridge)~\citep{zhang2017understanding,tikhonov1963solution,ali2019continuous,ali2020implicit,gunasekar2018characterizing,suggala2018connecting,zou2021benefits}. In particular,  ~\citep{zou2021benefits} has presented an instance-based comparison study between the generalization performance (measured by \emph{excessive risk}) of SGD and ridge regression. They show that (single-pass) \emph{SGD can achieve comparable generalization performance than that of ridge regression in certain problem instances but fall behind in others.}\footnote{we provide concrete examples for further illustration in Appendix~\ref{appendix:concrete_example}.}

A notable insight derived from the comparative analysis conducted by \citep{zou2021benefits} is that the underperformance of SGD in certain scenarios can be attributed to an inherent imbalance optimization within the data covariance matrix. This imbalance results in a substantial bias error, primarily associated with the ``relatively small'' eigenvalues of the data covariance matrix.  To address this limitation of SGD, preconditioning emerges as a natural solution. Preconditioning is a commonly used technique to enhance the performance of the optimizer~\citep{benzi2002preconditioning,avron2017faster,gupta2018shampoo,amari2020does,Kang_2023_CVPR,li2017preconditioned,li2016preconditioned}; it involves modifying the update steps of the algorithm by transforming the data space,  offering a mechanism to rebalance the optimization process across different directions. Despite extensive research that has explored the implicit regularization of SGD and its parallels with ridge regression, the realm of preconditioned SGD remains relatively uncharted. Recognizing this gap in the current body of knowledge, our study investigates the implicit regularization of SGD when augmented with preconditioning. We adhere to the standard settings (least square problem) used in the existing literature~\citep{zou2021benefits,zou2021benign,zou2023benign,tsigler2023benign} and aim to address the following central question of our study:
\begin{center}
\emph{Can preconditioning techniques further enhance the generalization performance of SGD and close the existing gap relative to ridge regression?}
\end{center}

In essence, our objective is to investigate whether a thoughtfully designed preconditioning matrix can empower SGD to be consistently comparable with ridge regression in the least square problem. Furthermore, we aim to delve deeper into the dynamic relationship between SGD and ridge regression when both methods incorporate preconditioning. Achieving these goals presents several key challenges. First and foremost, existing results characterizing the excessive risk of SGD and ridge regression predominantly pertain to scenarios without preconditioning. The introduction of preconditioning alters the learning dynamics of SGD and ridge regression, potentially leading to distinct characterizations of their excessive risk. Secondly, as discussed earlier, the effectiveness of preconditioning for SGD is contingent upon its impact on the eigenspectrum of the data covariance matrix.  A naive design of the preconditioning matrix may fail to align with the excessive risk characteristics of SGD, potentially leading to adverse consequences. Lastly, obtaining precise information regarding the eigenspectrum of the covariance matrix can be impractical in real-world settings. To ensure the practical applicability and validity of the preconditioning technique in such scenarios, we must consider a design that remains sufficiently straightforward to facilitate robust estimation.

\paragraph{Our contributions.}  In this paper, we show (through construction) that there exists a preconditioning design that allows the excessive risk of SGD to be consistently comparable with ridge regression. In addition, we establish corresponding analytical tools to tackle the challenges stated earlier. The main contributions of our work are summarized as follows. 

\begin{itemize}[leftmargin=*]
    \item  We extend the existing results that characterize the excessive risk of SGD and ridge regression, as outlined in~\citep{zou2023benign, tsigler2023benign}, to include scenarios incorporating preconditioning. This extension provides valuable insights into how the preconditioning matrix influences the excessive risk associated with both SGD and ridge regression (Theorrem~\ref{theorem:precondition_ridge_fit} \& ~\ref{theorem:precondition_sgd_fit2}).  Importantly, it motivates the design of the preconditioning matrix and enables a comprehensive comparison between SGD and ridge regression within the context of preconditioning.
    \item  Building upon the insights derived from the analysis of excessive risk, we propose a simple (yet effective) preconditioning matrix tailored to SGD, which leverages information from the data covariance matrix. We show that under the theoretical setting (with access to the precise information of data covariance matrix), the excessive risk of SGD with our proposed preconditioning consistently matches that of standard ridge regression and a representative family of preconditioned ridge regression (Theorems~\ref{theorem:HI} \& ~\ref{theorem:HM}). This shows that preconditioning can bridge the existing gap between SGD and ridge regression, supporting the assertion of the central question above. 
    \item  In practical scenarios where precise information about the data covariance matrix is unavailable, our study illustrates that our proposed preconditioning design allows for robust estimation using a finite set of inexpensive unlabeled data. Remarkably, even when we employ the estimated version of our proposed preconditioning matrix, the excessive risk of SGD can still consistently match that of (standard \& preconditioned) ridge regression (Theorems~\ref{theorem:HI_est} \& ~\ref{theorem:HM_est}). This discovery not only highlights the significance of SGD in practical machine learning applications but also underscores the practical advantages offered by preconditioning.
\end{itemize}

\noindent\textbf{Notation.} We use lowercase letters to denote scalars, and boldface letters to denote vectors and matrices, respectively. For a vector $\mathbf{x} $, $\mathbf{x}[i]$ denotes the $i$-th coordinate of $\mathbf{x}$. For two functions $f(x) \ge 0$ and $g(x) \ge 0$ defined on $x > 0$,
we write $f(x) \lesssim g(x)$ if $f(x) \le c\cdot g(x)$ for some absolute constant $c > 0$; 
we write $f(x) \gtrsim g(x)$ if $g(x) \lesssim f(x)$;
we write  $f(x) \eqsim g(x)$ if $f(x) \lesssim g(x)\lesssim f(x)$.
For a vector $\wb \in \RR^d$ and a positive semidefinite matrix $\Hb \in \RR^{d\times d}$, we denote $\norm{\wb}^2_{\Hb} :=  \wb^\top \Hb \wb $.

\section{Related Work}
We provide an overview of the technical advancements in the analysis of ridge regression and SGD, as well as related research comparing SGD to explicit norm-based regularization and ridge regression. Additionally, we discuss 
the current usage of preconditioning in both SGD and ridge regression.

{\bf Excessive Risk Bounds of Ridge Regression. }
Ridge regression holds a central position in the field of machine learning~\citep{hsu2012random, kobak2020optimal, tsigler2023benign, hastie2009elements}. In the underparameterized scenario, there exists comprehensive understanding on the excess risk bounds of ridge regression~\citep{hsu2012random}. However, in the overparameterized regime, a large body of work focuses on characterizing the excess risk of ridge regression in the asymptotic scenario where both the sample size and the dimension approach infinity with a constant ratio~\citep{dobriban2018high, hastie2022surprises, wu2020optimal, xu2019number}. Recent advancements by~\citep{tsigler2023benign} have provided detailed non-asymptotic risk bounds for ordinary least squares and ridge regression in overparameterized contexts. Notably, these bounds depend on the data covariance spectrum and are applicable even when the ridge parameter is zero or negative. This body of work forms a foundation of the theoretical framework for our paper.

{\bf Excessive Risk Bounds of SGD.} In the finite-dimensional setting, risk bounds for one-pass, constant-stepsize SGD have been well-established~\citep{dieuleveut2017harder, bach2013non, jain2017markov, jain2018parallelizing, defossez2015averaged,paquette2022implicit}. In the under-parameterized setting, constant-stepsize SGD with tail-averaging has been recognized for achieving the minimax optimal rate for least squares~\citep{jain2017markov, jain2018parallelizing}. A recent study by~\citep{zou2023benign} expanded on the previous analysis, offering a detailed risk bound for SGD in the overparameterized regime. This work provides closely aligned upper and lower excess risk bounds for constant-stepsize SGD, distinctly defined in relation to the complete eigenspectrum of the population covariance matrix. These findings are critically influential in the context of our paper.

{\bf Implicit Regularization of SGD and Explicit Regularization.}
It is well known that in the least squares problems, multi-pass SGD approaches the solution with the minimum norm, representing a recognized implicit bias of SGD~\citep{neyshabur2014search, zhang2017understanding, gunasekar2018characterizing}. However, this norm-based regularization alone does not fully explain the optimization behaviour of SGD in more extensive scenarios such as convex but non-linear models~\citep{arora2019implicit, dauber2020can, razin2020implicit}. Recent work by~\citep{zou2021benefits} presents a comparative analysis between SGD and ridge regression from the perspective of excessive risk, demonstrating that SGD can match the performance of ridge regression in some scenarios but fall short in others. Their result has motivated us to explore SGD with preconditioning. In this paper, we advance the understanding of SGD by exploring its implicit regularization in the presence of preconditioning.

{\bf Preconditioning in SGD and ridge.} Preconditioning is a widely employed optimization technique~\citep{benzi2002preconditioning} for enhancing the efficiency and effectiveness of optimization algorithms. Its primary objective is to transform complex or ill-conditioned optimization problems into more manageable forms. Preconditioning techniques have been extensively explored in both SGD and ridge regression, serving various purposes~\citep{benzi2002preconditioning,li2017preconditioned,gonen2016solving,amari2020does,xu2023power,Kang_2023_CVPR}. However, the impact of preconditioning on the excessive risk of SGD and ridge regression, and whether it can enable SGD to consistently match the performance of ridge regression, remain open questions that motivate our research.

\section{Problem Setup and Preliminaries}
In this paper, our objective is to close an open question in the existing learning literature. To do so, we explore the usage of precondition in SGD, adhere to the setting in previous studies,  and compare the generalization performance of preconditioned SGD and ridge regression in the context of solving least square problems in the overparameterized regime~\footnote{we present a further discussion on our setting and how our results can be extended in Appendix~\ref{appendix:extension}}. We denote a feature vector in a separable Hilbert space $\cH$ as $\xb\in\cH$. The dimensionality of $\cH$ is denoted as $d$, which can be infinite-dimensional when applicable. We use $y\in\RR$ to denote a response and it is generated as follows:
\begin{align*}
y = \la\xb,\wb^*\ra + \epsilon, 
\end{align*}
where $\wb^* \in \cH$ represents the unknown true model parameter, and $\epsilon \in \RR$ is the model noise. The goal of the least squares problem is to estimate the true parameter $\wb^*$ by minimizing the following:
\begin{align*}
& L(\wb^*) = \min_{\wb\in\cH}L(\wb), \quad \text{where} \quad L(\wb):= \frac{1}{2}\EE_{(\xb,y)\sim \mathbb{D}}\big[(y - \la\wb,\xb\ra)^2\big],
\end{align*}
where $\mathbb{D}$ represents the data distribution. The generalization performance of an estimated $\wb$ found by an algorithm, whether it is SGD or ridge regression, is evaluated based on the {excessive risk}:
$$\cE(\wb):=L(\wb) - L(\wb^*).$$
We use $\Hb := \EE[\xb \xb^\top]$ to denote the second moment of $\xb$, which is the covariance matrix characterizing how the components of $\xb$ vary together. Following the prior literature \citep{bartlett2020benign,zou2021benign}, we made the following assumptions on the data distribution.
\begin{assumption}[Regularity conditions]\label{assump:data_distribution}
$\Hb$ is strictly positive definite and has a finite trace.
\end{assumption}

\begin{assumption}[Bounded Fourth-order Moment]\label{assump:fourth_moment}
 There exists a positive constant $\alpha$ such that for any PSD matrix $\Ab$, it holds that: $\EE[\xb \xb^\top \Ab \xb \xb^\top] - \Hb \Ab \Hb \preceq \alpha \tr(\Ab \Hb )  \cdot  \Hb.$
\end{assumption}

\begin{assumption}[Noise-data Covariance]\label{assump:model_noise} There exists a positive constant $\sigma$ such that:  $\EE_{\xb,\epsilon}[\epsilon^2 \xb \xb^\top] \leq \sigma^2 \Hb.$
\end{assumption}

To analyze $\wb^*$ and $\Hb$ in terms of excessive risk, we introduce the following notation:
\begin{align*}
  {\Hb}_{0:k} := \textstyle{\sum_{i=1}^k}\lambda_i\vb_i\vb_i^\top, \quad
  {\Hb}_{k:\infty} := \textstyle{\sum_{i> k}}\lambda_i\vb_i\vb_i^\top,
\end{align*}
where $\{\lambda_i\}_{i=1}^\infty$ are the eigenvalues of $\Hb$ sorted in
non-increasing order and $\vb_i$'s are the corresponding eigenvectors. 
Then we define:
\begin{align*}
    \|\wb\|^2_{\Hb_{0:k}^{-1}}=
\sum_{ i\le
  k}\frac{(\vb_i^\top\wb)^2}{\lambda_i}, \quad
\|\wb\|^2_{{\Hb}_{k:\infty}}=\sum_{i> k}\lambda_i (\vb_i^\top\wb)^2.
\end{align*}

\noindent\textbf{Preconditioned SGD.} 
We consider single-pass preconditioned SGD with a constant step size and tail-averaging~\citep{bach2013non,jain2017markov,jain2018parallelizing}. At the $t$-th iteration, a fresh example $(\xb_t,y_t)$ is sampled from the data distribution and to perform the update:
\begin{align}\label{eq:pred_sgd}
    {\wb}_{t+1}  = \wb_t - \eta \cdot   {\color{blue}{\Gb}} \nabla l(\wb_t;\xb_t,y_t) 
     = {\wb}_{t} - \eta \cdot (\langle {\wb}_{t},\xb_t \rangle - y_t) \cdot {\color{blue}{\Gb}}  \xb_t
\end{align}
where $\eta>0$ is a constant stepsize (also referred to as the learning rate) and $\Gb$ (highlighted in blue) is the preconditioned matrix --- the key difference from the standard SGD. Note that if $\Gb = \Ib$, then the update rule (\ref{eq:pred_sgd}) reduces to the standard SGD with constant step size. 

After $N$ iterations, which corresponds to the number of samples observed, SGD computes the final estimator as the tail-averaged iterates:
\begin{align}\label{eq:pred_out}
    \wb_{\mathrm{sgd}}(N,\Gb;\eta) := \frac{2}{N}\textstyle{\sum_{t=N/2}^{N-1}}\wb_t.
\end{align}
\noindent\textbf{Preconditioned ridge regression.}
For a given set of $N$ independent and identically distributed samples $\{(\xb_i,y_i)\}_{i=1}^N$, we define $\Xb:= [\xb_1,\dots,\xb_N]^\top\in\RR^{N\times d}$ and $\yb := [y_1,\dots,y_N]^\top\in\RR^d$. Then, preconditioned ridge regression provides an estimator for the true parameter:
\begin{align}\label{eq:predridge_solution}
\wb_{\mathrm{ridge}}(N,\Mb;\lambda) := \arg\min_{\wb\in\cH} \|\Xb\wb - \yb\|_2^2 + \lambda\|\wb\|^2_{\color{blue}{\Mb}},
\end{align}
where $\lambda\ge 0$ is the regularization parameter, and $\Mb$ (highlighted in blue) represents the preconditioned matrix, which distinguishes it from standard ridge regression~\citep{tsigler2023benign}. Note that if $\Mb = \Ib$, the preconditioned ridge solution (\ref{eq:predridge_solution}) also reduces to the standard ridge regression solution. When $\lambda = 0$, the ridge estimator reduces to the ordinary least square estimator~\citep{hastie2009elements}.


\noindent\textbf{General set-up.} In this study, we compare the generalization performance of SGD and ridge regression from the standpoint of excessive risk for the same amount of training samples, which essentially is the notion of Bahadur statistical efficiency~\citep{bahadur1967rates}. Specifically, our goal is to demonstrate the existence of a preconditioning matrix ({through construction}) that enables SGD to achieve a comparable or superior excessive risk to ridge regression {when provided with an equal amount of training data}. For comparison, we adhere to the previous studies and adopt the order-based comparison framework (which focuses on growth rate and is a common framework used in similar theoretical studies). To ensure a meaningful comparison, we focus on a setting where both SGD and ridge regression are considered ``generalizable'', meaning they achieve a vanishing excess risk when $n\rightarrow \infty$ (note that $d$ can also approach $\infty$) with the optimal hyperparameter configurations.  For this purpose, in the forthcoming sections, we assume sufficient training samples (a large enough $N$) and a constant level signal-to-noise ratio, i.e., $\rho^2 = \frac{\|\wb^*\|^2_{\Hb}}{\sigma^2} = \Theta(1).$

\section{Main Results}

\noindent\textbf{Overview of the results.} 
We first present the excessive risk characterization of SGD and ridge with preconditioning. Given our primary objective of demonstrating the comparable performance of SGD over ridge regression, we focus on establishing an excessive risk upper bound for SGD and an excessive risk lower bound for ridge regression. These bounds provide rigorous support for our analysis and claims. Then, building on the insights from the excessive risk analysis, we discuss and present our design for the preconditioning matrix. We then prove that under a theoretical setting (with access to the precise information of the covariance matrix), SGD with our proposed precondition matrix can consistently achieve comparable performance to standard ridge regression. Moreover, under the same setting, we show that SGD with our proposed preconditioning matrix maintains a comparable performance to ridge regression when ridge regression employs a representative family of conditioning. Lastly, we address the practical implications of our findings. In a real-world setting, where access to precise covariance matrix information is unavailable, we show that our preconditioning matrix design allows for robust estimation using finite unlabelled data. Remarkably, SGD, with our proposed preconditioning, still guarantee to maintain a comparable performance over ridge regression in this practical scenario. Detailed proof can be found in the appendix.

\subsection{Excessive Risk of Preconditioned SGD and Ridge Regression} 

\noindent\textbf{Ecessive risk of preconditioned ridge.} We start with characterizing the lower bound of the excessive risk for the preconditioned ridge, which is captured by the following theorem.

\begin{theorem}[Excessive risk lower bound for ridge with preconditioning]\label{theorem:precondition_ridge_fit}    
Consider ridge regression with parameter $\lambda > 0$ and precondition matrix $\Mb$. Suppose Assumptions~\ref{assump:data_distribution}, ~\ref{assump:fourth_moment} and ~\ref{assump:model_noise} hold. Let 
$\hat{\Hb} = \Hb^{1/2}\Mb^{-1}\Hb^{1/2}, \quad \text{and} \quad \hat{\wb}^*=\Mb^{1/2} \wb^*.$
For a constant $\hat{\lambda}$ depending on $\lambda$, $\hat{\Hb}$, $N$ , and $k^* = \min\{k: N\hlambda_k \lesssim \hat{\lambda}\}$, ridge regression has the following excessive risk lower bound:
\begin{align}
    & \mathrm{RidgeRisk}  \gtrsim \underbrace{\frac{\hat{\lambda}^2}{ N^2}\cdot\big\|\hat{\wb}^*\big\|_{\hat{\Hb}_{0:k^*}^{-1}}^2 + \|\hat{\wb}^*\big\|_{\hat{\Hb}_{k^*:\infty}}^2}_{\ridgebias} +   \underbrace{\sigma^2\cdot\bigg(\frac{k^*}{N}+\frac{N}{\hat{\lambda}^2}\sum_{i>k^*}\hat{\lambda}_i^2\bigg)}_{\ridgevar} ,
\end{align}
where $\hlambda_1,\dots, \hlambda_d$ are the sorted eigenvalues for $\hH$ in descending order.
\end{theorem}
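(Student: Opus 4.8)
The plan is to reduce the preconditioned ridge estimator to a standard (unpreconditioned) ridge estimator in a transformed coordinate system, and then invoke the known non-asymptotic lower bound for standard ridge regression from \citet{tsigler2023benign}. The key observation is that the penalized objective $\|\Xb\wb-\yb\|_2^2 + \lambda\|\wb\|_{\Mb}^2$ can be rewritten under the change of variables $\ub := \Mb^{1/2}\wb$ as $\|\Xb\Mb^{-1/2}\ub - \yb\|_2^2 + \lambda\|\ub\|_2^2$. In this new system the ``effective'' feature vectors become $\tilde{\xb}_i := \Mb^{-1/2}\xb_i$, whose second moment is precisely $\hat{\Hb} = \Mb^{-1/2}\Hb\Mb^{-1/2}$ — and note $\Mb^{-1/2}\Hb\Mb^{-1/2}$ and $\Hb^{1/2}\Mb^{-1}\Hb^{1/2}$ share the same nonzero spectrum, which is what the statement's $\hat\Hb$ encodes. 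Correspondingly, the true parameter in the transformed problem is $\hat{\wb}^* = \Mb^{1/2}\wb^*$, so that $\langle\tilde\xb,\hat\wb^*\rangle = \langle\xb,\wb^*\rangle$ and the response model is unchanged. The excess risk is also invariant: $\cE(\wb) = \tfrac12\|\wb-\wb^*\|_{\Hb}^2 = \tfrac12\|\ub - \hat\wb^*\|_{\hat\Hb}^2$, so a lower bound on the transformed standard-ridge risk is a lower bound on the original preconditioned risk.

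The main steps, in order, are: (i) perform the change of variables and verify that the preconditioned ridge solution maps exactly to the standard ridge solution on the transformed data $\{(\tilde\xb_i,y_i)\}$; (ii) check that Assumptions~\ref{assump:data_distribution}, \ref{assump:fourth_moment}, and \ref{assump:model_noise} are preserved under the transformation — regularity is immediate since $\Mb^{1/2}$ is invertible and bounded (so $\hat\Hb$ is still strictly positive definite with finite trace), the fourth-moment bound transfers with the same constant $\alpha$ because $\EE[\tilde\xb\tilde\xb^\top\Ab\tilde\xb\tilde\xb^\top] = \Mb^{-1/2}\EE[\xb\xb^\top(\Mb^{-1/2}\Ab\Mb^{-1/2})\xb\xb^\top]\Mb^{-1/2}$ and $\tr(\Mb^{-1/2}\Ab\Mb^{-1/2}\Hb) = \tr(\Ab\hat\Hb)$, and similarly the noise-covariance bound transfers with constant $\sigma$; (iii) apply the standard ridge lower bound of \citet{tsigler2023benign} (as restated in the context of \citet{zou2023benign, tsigler2023benign}) to the transformed problem, with $\hat\Hb$ playing the role of the data covariance and $\hat\wb^*$ the role of the signal, yielding the bias term $\tfrac{\hat\lambda^2}{N^2}\|\hat\wb^*\|_{\hat\Hb_{0:k^*}^{-1}}^2 + \|\hat\wb^*\|_{\hat\Hb_{k^*:\infty}}^2$ and the variance term $\sigma^2(k^*/N + (N/\hat\lambda^2)\sum_{i>k^*}\hat\lambda_i^2)$; and (iv) state the identification of the effective regularization level $\hat\lambda$ (the ``effective ridge parameter'' determined implicitly by $\lambda$, $\hat\Hb$, and $N$ in the usual way, e.g. through $\hat\lambda = \lambda + \sum_{i>k^*}\hat\lambda_i$-type self-consistent equations) and the effective dimension cutoff $k^* = \min\{k : N\hat\lambda_k \lesssim \hat\lambda\}$, carried over verbatim from the unpreconditioned analysis.

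I expect the main obstacle to be step (ii)/(iv): carefully matching the definition of the ``effective regularization'' $\hat\lambda$ used here with the one in \citet{tsigler2023benign}, and making sure the constants in the fourth-moment and noise assumptions genuinely do not degrade under preconditioning (in particular that no factor like $\|\Mb\|$ or $\|\Mb^{-1}\|$ sneaks into the risk bound — it should not, because everything is expressed intrinsically in terms of the spectrum of $\hat\Hb$, but this needs to be checked rather than asserted). A secondary subtlety is the degenerate-direction bookkeeping when $d$ is infinite or $\Mb$ is only positive semidefinite rather than positive definite; under Assumption~\ref{assump:data_distribution} and the implicit assumption that $\Mb\succ 0$ this is not an issue, but it should be flagged. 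Once the reduction is in place, the bound itself requires no new computation — it is a direct citation applied in the transformed coordinates.
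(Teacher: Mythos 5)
Your proposal is correct and follows essentially the same route as the paper: both reduce the preconditioned ridge objective via the change of variables $\hat\wb = \Mb^{1/2}\wb$, $\hat\Xb = \Xb\Mb^{-1/2}$ to a standard ridge problem whose data covariance is (spectrally) $\hat\Hb = \Hb^{1/2}\Mb^{-1}\Hb^{1/2}$ and whose signal is $\hat\wb^* = \Mb^{1/2}\wb^*$, and then invoke the unpreconditioned ridge lower bound of \citet{tsigler2023benign} (as restated from \citet{zou2021benefits}) in the transformed coordinates. Your additional care in verifying that Assumptions~\ref{assump:data_distribution}--\ref{assump:model_noise} and the excess risk are invariant under the transformation, and in matching $\hat\lambda$ and $k^*$, only makes explicit what the paper's shorter proof leaves implicit.
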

Theorem~\ref{theorem:precondition_ridge_fit} provides a sharp characterization of the excessive risk lower bound for the preconditioned ridge. It shows that the excessive risk lower bound of the preconditioned ridge regression depends on the eigenspace of the transformed covariance matrix $\hat{\Hb}$ and $\hat{\wb}^*$. A couple of things should be noted from the result above. First, if we let $\Mb = \Ib$, the result above can be reduced to the lower bound of the excessive risk of standard ridge regression. Therefore, Theorem~\ref{theorem:precondition_ridge_fit} is a more general description of the excessive risk lower bound of ridge regression~\citep{tsigler2023benign}. Second, the excessive risk bound is composed of two components, bias and variance. Third, the leading ($i \leq k^*$) eigenvectors and the tail ($i >k^*$) have different effects on both the bias and variance. Lastly, $k^*$ is a constant depending on the problem instance. These observations play an important role in our analysis later. 

\noindent\textbf{Excessive risk of preconditioned SGD.}  Next, we characterize the excessive risk upper bound for the preconditioned SGD, which is captured by the following theorem.

\begin{theorem}[Excessive risk upper bound of preconditioned SGD]\label{theorem:precondition_sgd_fit2}    
Let $\Gb$ be a given preconditioned matrix. Consider SGD update rules with (\ref{eq:pred_sgd}) and (\ref{eq:pred_out}). Suppose Assumptions~\ref{assump:data_distribution}, ~\ref{assump:fourth_moment} and ~\ref{assump:model_noise} hold. Let 
$\tilde{\Hb} = \Gb^{1/2}\Hb\Gb^{1/2}, \quad \text{and} \quad \tilde{\wb}^*=\Gb^{-1/2} \wb^*.$ Suppose the step size $\eta$ satisfies $\eta \leq 1/ \tr (\tilde{\Hb})$. Then SGD has the following excessive risk upper bound for arbitrary $k_1, k_2 \in [d]$:
\begin{align}\label{eq:predsgd_risk}
    & \mathrm{SGDRisk} \lesssim  \underbrace{\frac{1}{\eta^2 N^2}\cdot\big\|e^{-N\eta \tilde{\Hb}}\tilde{\wb}^*\big\|_{\tilde{\Hb}_{0:k_1}^{-1}}^2 + \|\tilde{\wb}^*\big\|_{\tilde{\Hb}_{k_1:\infty}}^2}_{\sgdbias}
   + \underbrace{\big(\sigma^2+ \|\tw^*\|_{\tH}\big) \cdot \bigg(\frac{k_2}{N}+N\eta^2\sum_{i>k_2}\tilde{\lambda}_i^2\bigg)}_{\sgdvar},
\end{align}
where $\tlambda_1,\dots, \tlambda_d$ are the sorted eigenvalues of $\tH$ in descending order.
\end{theorem}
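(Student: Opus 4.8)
The plan is to reduce preconditioned SGD to \emph{standard} SGD on a linearly transformed instance, and then invoke the known excess-risk upper bound for standard constant-stepsize tail-averaged SGD (the overparameterized bound of \citep{zou2023benign}) as a black box. Concretely, I would introduce the change of variables $\ub_t := \Gb^{-1/2}\wb_t$ and $\zb_t := \Gb^{1/2}\xb_t$. Left-multiplying the update (\ref{eq:pred_sgd}) by $\Gb^{-1/2}$ and using $\langle\wb_t,\xb_t\rangle=\langle\ub_t,\zb_t\rangle$ turns the recursion into $\ub_{t+1}=\ub_t-\eta(\langle\ub_t,\zb_t\rangle-y_t)\zb_t$, which is exactly vanilla SGD run on the stream $\{(\zb_t,y_t)\}$. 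The tail-averaged output transforms the same way, $\Gb^{-1/2}\wb_{\mathrm{sgd}}(N,\Gb;\eta)=\bar\ub:=\tfrac{2}{N}\sum_{t=N/2}^{N-1}\ub_t$, and since $\cE(\wb)=\tfrac12\|\wb-\wb^*\|_{\Hb}^2=\tfrac12\|\Gb^{-1/2}\wb-\Gb^{-1/2}\wb^*\|_{\tilde{\Hb}}^2$, the excess risk of the preconditioned estimator on the original instance equals the excess risk of $\bar\ub$ on the transformed instance. Moreover $y_t=\langle\zb_t,\tilde{\wb}^*\rangle+\epsilon_t$ with the \emph{same} noise $\epsilon_t$, and $\EE[\epsilon_t\zb_t]=\Gb^{1/2}\EE[\epsilon_t\xb_t]=\mathbf{0}$ by first-order optimality of $\wb^*$, so $\tilde{\wb}^*$ is genuinely the population-risk minimizer of the transformed problem.

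Next I would verify that the transformed instance $(\zb_t,y_t)$, which has second moment $\EE[\zb_t\zb_t^\top]=\Gb^{1/2}\Hb\Gb^{1/2}=\tilde{\Hb}$, still satisfies Assumptions~\ref{assump:data_distribution}--\ref{assump:model_noise} with the same constants. Strict positive-definiteness and finiteness of $\tr(\tilde{\Hb})$ are immediate, the latter being exactly the quantity constrained by the stepsize condition $\eta\le 1/\tr(\tilde{\Hb})$ (the analogue of the usual $\eta\lesssim 1/\EE\|\xb\|^2$ stability requirement). For the fourth-moment bound, given PSD $\Ab$, set $\Bb:=\Gb^{1/2}\Ab\Gb^{1/2}\succeq 0$; applying Assumption~\ref{assump:fourth_moment} in the original coordinates and conjugating by $\Gb^{1/2}$ gives $\EE[\zb\zb^\top\Ab\zb\zb^\top]-\tilde{\Hb}\Ab\tilde{\Hb}=\Gb^{1/2}\big(\EE[\xb\xb^\top\Bb\xb\xb^\top]-\Hb\Bb\Hb\big)\Gb^{1/2}\preceq\alpha\,\tr(\Bb\Hb)\,\tilde{\Hb}=\alpha\,\tr(\Ab\tilde{\Hb})\,\tilde{\Hb}$ by the cyclic property of the trace, and similarly $\EE[\epsilon^2\zb\zb^\top]=\Gb^{1/2}\EE[\epsilon^2\xb\xb^\top]\Gb^{1/2}\preceq\sigma^2\tilde{\Hb}$. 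Hence the transformed instance inherits all three assumptions with the same $\alpha,\sigma$; note also that the signal strength $\|\tilde{\wb}^*\|_{\tilde{\Hb}}^2=\|\wb^*\|_{\Hb}^2$ is left unchanged, which is why it reappears in the variance term.

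Finally I would apply the standard constant-stepsize tail-averaged SGD upper bound (restated in the appendix) to the instance $(\tilde{\Hb},\tilde{\wb}^*,\alpha,\sigma)$ with stepsize $\eta$ and $N$ samples, which for arbitrary $k_1,k_2\in[d]$ outputs precisely $\sgdbias+\sgdvar$ as in (\ref{eq:predsgd_risk}): the matrix exponential $e^{-N\eta\tilde{\Hb}}$ is the geometric decay of the bias iterate started from $\wb_0=\mathbf{0}$ (initial error $-\tilde{\wb}^*$), the $1/(\eta^2N^2)$ prefactor comes from summing the geometric tail-average, and the $\big(\sigma^2+\|\tilde{\wb}^*\|_{\tilde{\Hb}}\big)$ factor is the effective gradient-noise level, combining the label noise with the fluctuation controlled through Assumption~\ref{assump:fourth_moment}. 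Since this is essentially a reduction-plus-bookkeeping argument, the only real subtlety — and the place I would be most careful — is confirming that the cited SGD bound is stated generally enough to be invoked verbatim in the (possibly infinite-dimensional) overparameterized regime and that its hypotheses are exactly the three assumptions we just transported, rather than a stronger well-specification or sub-Gaussian condition; if the cited statement assumes a specific noise model, a short extra remark matching it to our $\tilde{\wb}^*$ (already the transformed minimizer) would close the gap.
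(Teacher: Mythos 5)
Your proposal is correct and follows essentially the same route as the paper: both reduce preconditioned SGD to standard SGD on the transformed instance $(\tilde{\Hb},\tilde{\wb}^*)$ via the change of variables $\Gb^{-1/2}\wb_t$, $\Gb^{1/2}\xb_t$, and then invoke the known constant-stepsize tail-averaged SGD bound. If anything, your explicit verification that Assumptions~\ref{assump:fourth_moment} and \ref{assump:model_noise} transport to the transformed stream with the same constants (via conjugation and the cyclic trace identity) is spelled out more carefully than in the paper, which applies the assumptions inside the error-covariance recursion and then cites the standard analysis.
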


Theorem~\ref{theorem:precondition_sgd_fit2} provides a sharp characterization of the excessive risk upper bound for the preconditioned SGD. It shows that the excessive risk upper bound of the preconditioned SGD depends on the eigenspace of the transformed covariance matrix $\tH$ and $\tw^*$. A couple of points should be noted from the result. First, if we let $\Gb = \Ib$, then the excessive risk upper bound also reduces to the excessive risk upper bound for standard SGD. Therefore, Theorem~\ref{theorem:precondition_sgd_fit2} expands on the standard result~\citep{zou2023benign}, providing a more general characterization of the excessive risk of SGD under preconditioning. Second, similar to the lower bound of excessive risk for the ridge, the excessive risk upper bound of SGD can also be decomposed into bias and variance. In addition, the leading eigenvectors and the tail eigenvectors (determined by $k_1$ and $k_2$) also have different effects on the bias and variance. Lastly, different from the ridge regression, the result above for SGD holds for arbitrary $k_1, k_2 \in [d]$. This gives a degree of freedom that we can adjust when comparing with ridge regression.

\subsection{Design of The Precondition Matrix}
\noindent\textbf{Motivation of design.} Before proceeding, we highlight some critical observations from the excessive risk analysis above and ~\citep{zou2021benefits} that guide our choice for the preconditioning matrix:
\begin{enumerate}[leftmargin=*, nosep]
    \item Theorem~\ref{theorem:precondition_ridge_fit} represents an excessive risk lower bound for ridge and Theorem~\ref{theorem:precondition_sgd_fit2} represents an excessive risk upper bound for SGD. 
    Furthermore, as both excessive risk bound can be decomposed into bias and variance, to prove that SGD is comparable with ridge regression, it is sufficient to show: 
    \begin{align*}
        \mathrm{SGDBias}  \lesssim \mathrm{RidgeBias}, \quad \text{and} \quad 
         \mathrm{SGDVariance}  \lesssim \mathrm{RidgeVariance}.
    \end{align*}
    \item The excessive risk result for the reconditioned SGD holds for arbitrary $k_1,k_2 \in [d]$. Setting $k_1,k_2 = k^*$ in the excessive risk upper bound of the preconditioned SGD, we can see that the variance of the SGD and the tail ($i > k^*$) of the bias match closely to those in the ridge.
    \item The bias induced by the leading eigenvectors in the excessive risk of SGD exhibits exponential decay, a primary factor allowing for bias control in preconditioned SGD. Therefore, it is advantageous for the excessive risk of SGD to scale the data spectrum toward the leading eigenvectors. 
    \item Previous research ~\citep{zou2021benefits} has shown that there are instances where for SGD to achieve a comparable excessive risk with ridge regression, it requires a sample size of,
    \begin{align*}
        N_{\mathrm{sgd}} \geq \Theta(\kappa (N_{\mathrm{ridge}})) N_{\mathrm{ridge}},
    \quad \text{where} \quad  \kappa(N_{\mathrm{ridge}}) = \frac{\tr(\Hb)}{N_{\mathrm{ridge}} \lambda_{\min\{N_{\mathrm{ridge}}, d\}}},
    \end{align*}
    and $N_{\mathrm{sgd}}$ and $N_{\mathrm{ridge}}$ are the sample size for SGD and ridge respectively. We can observe that the gap between SGD and ridge is closely related to the quantity $\kappa(N_{\mathrm{ridge}})$, which reflects the flatness of the eigenspectrum of $\Hb$ in the leading subspace.
\end{enumerate}

\noindent\textbf{Design of precondition.} Building on the above observations and insights,  we aim to design a preconditioning matrix $\Gb$ that can amplify and flatten the relative signal strength for the leading eigenspace. To achieve this, we seek a design for $\Gb$ that deviates from isotropic transformations and is tailored to the signal strengths of the eigenspectrum. Consequently, we propose the following formulation for the preconditioning matrix:
\begin{align} \label{eq:precondition}
    \Gb = (\beta \Hb + \Ib)^{-1}, 
\end{align}
where $\beta \geq 0$ is a user-define (tunable) constant. To denote the influence of this parameter, we use ${\wb}_{\mathrm{sgd}}(N,\Gb; \beta, \eta )$ to indicate the tunable parameter $\beta$.  It can be observed that as $\beta$ approaches zero, $\Gb$ converges to the identity matrix $\Ib$, effectively restoring preconditioned SGD to the standard SGD. Conversely, as we increase $\beta$, the relative effect of $\Ib$ diminishes for the leading eigenspectrum because $\beta \Hb$ becomes the dominating factor. Consequently, by tuning $\beta$ appropriately, we can increase and flatten the relative signal strength of the leading eigenspace, leveraging the exponential bias decay from SGD and improving $\kappa(N{\mathrm{ridge}})$.


\subsection{Effectiveness of Preconditioned SGD in Theoretical Setting}
We start by assuming that the precise information about $\Hb$ is given. Under this setting, we investigate two cases: {\bf Case I}: the comparison between preconditioned SGD and standard ridge regression ($\Mb = \Ib$); and {\bf Case II}: the comparison between preconditioned SGD and ridge regression with a representative family of preconditioning matrices.

The following theorem demonstrates that SGD with our proposed preconditioned matrix can indeed be comparable to standard ridge regression, i.e., achieving comparable or lower excessive risk.
\begin{theorem}[Preconditioned-SGD comparable to standard ridge regression]\label{theorem:HI} Consider a preconditioned matrix for SGD as,  $\Gb = (\beta \Hb + \Ib)^{-1}.$
Let $N$ be a sufficiently large training sample size for both ridge and SGD.  Then, for any ridge regression solution that is generalizable and any $\lambda > 0$, there exists a choice of stepsize $\eta^*$ and a choice of $\beta^*$ for SGD such that
$\cE\big[{\wb}_{\mathrm{sgd}}(N,\Gb; \beta^*, \eta^*)\big]  \lesssim  \cE\big[\wb_{\mathrm{ridge}}(N,\Ib;\lambda)\big].$
\end{theorem}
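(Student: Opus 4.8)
\noindent\emph{Proof plan.}
The plan is to instantiate the two excess-risk characterizations, reduce the claim to a handful of scalar comparisons by exploiting the freedom in the split indices $k_1,k_2$, and then exhibit a compatible pair $(\beta^*,\eta^*)$. First I would apply Theorem~\ref{theorem:precondition_ridge_fit} with $\Mb=\Ib$ (so $\hat{\Hb}=\Hb$ and $\hat{\wb}^*=\wb^*$), which yields the effective quantities $\hat\lambda$ and $k^*=\min\{k:N\lambda_k\lesssim\hat\lambda\}$ and the stated lower bound on $\mathrm{RidgeRisk}$. Then I would apply Theorem~\ref{theorem:precondition_sgd_fit2} with $\Gb=(\beta\Hb+\Ib)^{-1}$: since $\Gb$ is a function of $\Hb$, it commutes with $\Hb$, so $\tilde{\Hb}=\Hb(\beta\Hb+\Ib)^{-1}$ shares the eigenvectors $\vb_i$ of $\Hb$ with eigenvalues $\tilde\lambda_i=\lambda_i/(\beta\lambda_i+1)$, and $\vb_i^\top\tilde{\wb}^*=(\beta\lambda_i+1)^{1/2}\,\vb_i^\top\wb^*$. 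Two identities then drive everything: $\tilde\lambda_i(\vb_i^\top\tilde{\wb}^*)^2=\lambda_i(\vb_i^\top\wb^*)^2$ (hence $\|\tilde{\wb}^*\|^2_{\tilde{\Hb}}=\|\wb^*\|^2_{\Hb}$, so the SGD variance prefactor stays $\Theta(\sigma^2)$ under the standing SNR assumption, and the SGD tail-bias term equals the ridge tail-bias term), and $(\vb_i^\top\tilde{\wb}^*)^2/\tilde\lambda_i=(\beta\lambda_i+1)^2(\vb_i^\top\wb^*)^2/\lambda_i$ (which governs the SGD leading bias).

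Next I would take $k_1=k_2=k^*$ in Theorem~\ref{theorem:precondition_sgd_fit2} and impose $\eta\lesssim 1/\hat\lambda$. Two of the three comparisons become immediate: since $\tilde\lambda_i\le\lambda_i$, the SGD variance $\sigma^2\big(k^*/N+N\eta^2\sum_{i>k^*}\tilde\lambda_i^2\big)$ is dominated by the ridge variance $\sigma^2\big(k^*/N+(N/\hat\lambda^2)\sum_{i>k^*}\lambda_i^2\big)$; and by the first identity the SGD tail bias equals $\|\wb^*\|^2_{\Hb_{k^*:\infty}}$, i.e.\ the ridge tail-bias term. By the second identity, the only remaining obligation is the scalar estimate
\[
\eta^{-2}\,e^{-2N\eta\tilde\lambda_i}\,(\beta\lambda_i+1)^2\;\lesssim\;\hat\lambda^2 \qquad\text{for all } i\le k^*,
\]
which forces the leading part of $\mathrm{SGDBias}$ to be $\lesssim (\hat\lambda^2/N^2)\,\|\wb^*\|^2_{\Hb_{0:k^*}^{-1}}$, i.e.\ the ridge leading bias.

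Finally I would pick $\eta^*\eqsim 1/\hat\lambda$ and $\beta^*\eqsim N/(\hat\lambda\,L)$ with $L:=\log\!\big(2+N\lambda_1/\hat\lambda\big)$ (possibly shrinking $\beta^*$ by a further absolute constant), and check both admissibility and the displayed estimate. Admissibility needs $\eta^*\le 1/\tr(\tilde{\Hb})$, i.e.\ $\tr(\tilde{\Hb})=\sum_i\lambda_i/(\beta^*\lambda_i+1)\lesssim\hat\lambda$; splitting this sum at $\lambda_i\gtrless 1/\beta^*$ and using $N\lambda_{k^*}\lesssim\hat\lambda$ (the definition of $k^*$), $\sum_{i>k^*}\lambda_i\lesssim\hat\lambda$ (a standard property of the effective regularization of ridge), and $k^*=o(N)$ for generalizable ridge, this reduces to $k^*L\lesssim N$, which holds for $N$ large. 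For the bias estimate, after taking logarithms it is equivalent to $N\lambda_i\gtrsim\hat\lambda\,(\beta^*\lambda_i+1)\log(\beta^*\lambda_i+1)$ over $i\le k^*$; the worst case is $i=1$, where $\beta^*\lambda_1+1\eqsim N\lambda_1/(\hat\lambda L)$ and $N\eta^*\tilde\lambda_1\eqsim L$, so the requirement collapses to $L\gtrsim\log\!\big(N\lambda_1/(\hat\lambda L)\big)$, which holds by the choice of $L$. Chaining the three comparisons gives $\cE\big[\wb_{\mathrm{sgd}}(N,\Gb;\beta^*,\eta^*)\big]\lesssim\cE\big[\wb_{\mathrm{ridge}}(N,\Ib;\lambda)\big]$.

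The hard part is precisely this last balancing act: a single $\beta^*$ must be large enough that preconditioning flattens the leading spectrum and shrinks $\tr(\tilde{\Hb})$ to $O(\hat\lambda)$ (so the large stepsize $\eta^*\eqsim 1/\hat\lambda$ is legal, which is what makes the variance and tail-bias comparisons go through), yet small enough that the amplification factor $(\beta^*\lambda_i+1)^2$ on the leading directions does not overwhelm the exponential contraction $e^{-2N\eta^*\tilde\lambda_i}$. These two demands pull against each other --- increasing $\beta$ both lowers $\tilde\lambda_i$ for the top directions (weakening the contraction) and inflates the amplification --- so the crux is showing the admissible window for $\beta^*$ is nonempty. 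This is where the quantitative properties of $(\hat\lambda,k^*)$ and the logarithmic factor $L$ enter, together with careful bookkeeping of absolute constants (in particular the constant hidden in $\eta^*\eqsim 1/\hat\lambda$ must be compatible with the constant for which $\log(\beta^*\lambda_1)\lesssim L$).
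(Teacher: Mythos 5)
Your proposal is correct and follows the paper's overall skeleton (instantiate Theorem~\ref{theorem:precondition_ridge_fit} with $\Mb=\Ib$ and Theorem~\ref{theorem:precondition_sgd_fit2}, take $k_1=k_2=k^*$, use the identities behind Lemmas~\ref{lemma:double_effect} and \ref{lemma:tilde_eigen} to settle the variance and tail-bias comparisons, and reduce the claim to a per-coordinate leading-bias inequality), but you close that last inequality by a genuinely different route. The paper fixes $\beta=1/\lambda_{k^*}$, argues that $\big((\beta\lambda_i+1)/(\eta\hat{\lambda})\big)\exp\!\big(-\eta N\lambda_i/(\beta\lambda_i+1)\big)$ is decreasing in $\lambda_i$ so that the binding index is $i=k^*$, and then splits into the cases $\hat{\lambda}\gtrless\tr(\tH)$ to choose $\eta$; note, however, that this factor has derivative of the sign of $1/\lambda_{k^*}-\eta N/(\beta\lambda_i+1)$, so with $\eta\eqsim 1/\hat{\lambda}$ and $N\lambda_{k^*}\lesssim\hat{\lambda}$ it is actually increasing in $\lambda_i$, making $i=1$ the worst coordinate, where the amplification $(\lambda_1/\lambda_{k^*}+1)^2$ need not be absorbed by the exponential. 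Your choice $\beta^*\eqsim N/(\hat{\lambda}L)$ with $L=\log(2+N\lambda_1/\hat{\lambda})$ is calibrated exactly to that worst coordinate: it flattens the whole leading block to $\tilde{\lambda}_i\approx\hat{\lambda}L/N$ so that $N\eta^*\tilde{\lambda}_i\gtrsim\log(\beta^*\lambda_i+1)$ uniformly, and your scalar condition $(\beta^*\lambda_i+1)\log(\beta^*\lambda_i+1)\lesssim N\lambda_i/\hat{\lambda}$ indeed holds for every $i$ (split on $N\lambda_i/\hat{\lambda}\lessgtr L$), so the leading-bias comparison goes through without a case analysis and, in my reading, more robustly than the paper's version; your remark that the constant in $\eta^*\eqsim1/\hat{\lambda}$ must be coupled to the constant shrinking $\beta^*$ is exactly the point that needs checking, and it works out. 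What your route costs is admissibility: $\eta^*\eqsim1/\hat{\lambda}$ needs $\tr(\tH)\lesssim\hat{\lambda}$, which with your $\beta^*$ amounts to $k^*L\lesssim N$; this is slightly stronger than the $k^*=o(N)$ that generalizability provides, but it is the same kind of instance-dependent sample-size condition the paper silently folds into ``$N$ sufficiently large'' (its Case II invokes $N\ge 2\log(2\tr(\tH)/\hat{\lambda})\,\tr(\tH)/\lambda_{k^*}$), so it is acceptable at the paper's level of rigor --- though to be airtight you should state $k^*\log(2+N\lambda_1/\hat{\lambda})\lesssim N$ explicitly as part of the large-$N$ hypothesis.
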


Theorem~\ref{theorem:HI} illustrates that preconditioned SGD can indeed be comparable to ridge regression, which expands the theoretical understanding of the implicit regularization of SGD and affirmingly supports our central research questions regarding the improvement of SGD's implicit regularization through preconditioning. It demonstrates that our proposed preconditioning matrix design effectively increases the relative strength of the leading eigenspace and improves $\kappa(N_{\mathrm{ridge}})$, thereby closing the existing gap between SGD and ridge regression as observed in previous work~\citep{zou2021benefits}.


Next, we explore the implications when ridge regression is also equipped with a precondition. To make the analysis feasible, we focus on a representative family of preconditioning matrices that operate on the eigenspectrum of the data covariance matrix~\citep{woodruff2014sketching,gonen2016solving,clarkson2017low}. Consequently, we assume that the preconditioning matrix $\Mb$ for ridge shares the same eigenbasis as the data covariance matrix $\Hb$, preserves the order of eigenvalues and is strictly positive. 

\begin{theorem}[Preconditioned-SGD comparable to preconditioned ridge regression]\label{theorem:HM} 
Let $\Mb$ be the preconditioned matrix for ridge.
Consider a preconditioned matrix for SGD as $\Gb = \Mb^{-1/2}(\beta \hH + \Ib)^{-1}\Mb^{-1/2},$
where $\hH = \Hb^{1/2}\Mb^{-1}\Hb^{1/2}.$
Let $N$ be a sufficiently large training sample size for both ridge and SGD.  Then, for any ridge regression solution that is generalizable and any $\lambda > 0$, there exists a choice of stepsize $\eta^*$ and a choice of $\beta^*$ for SGD such that $\cE\big[{\wb}_{\mathrm{sgd}}(N,\Gb; \beta^*, \eta^*)\big]  \lesssim  \cE\big[\wb_{\mathrm{ridge}}(N,\Mb;\lambda)\big].$
\end{theorem}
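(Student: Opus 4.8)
\textbf{Proof proposal for Theorem~\ref{theorem:HM}.}

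The plan is to reduce Theorem~\ref{theorem:HM} to Theorem~\ref{theorem:HI} by a change of variables that absorbs the ridge preconditioner $\Mb$ into the data geometry. First I would observe that, by Theorem~\ref{theorem:precondition_ridge_fit}, the excessive risk of $\wb_{\mathrm{ridge}}(N,\Mb;\lambda)$ is (up to constants) exactly the risk of \emph{standard} ridge regression run on the transformed problem with covariance $\hH=\Hb^{1/2}\Mb^{-1}\Hb^{1/2}$ and signal $\hw^*=\Mb^{1/2}\wb^*$; this is precisely the content of that theorem's bound, which depends only on the eigenspectrum $\hlambda_i$ of $\hH$ and on $\hw^*$. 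So the right-hand side of the claimed inequality equals (up to constants) $\cE_{\hH,\hw^*}\big[\wb_{\mathrm{ridge}}(N,\Ib;\lambda)\big]$, the standard-ridge risk in the hatted problem.

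Next I would do the same for the left-hand side. The choice $\Gb=\Mb^{-1/2}(\beta\hH+\Ib)^{-1}\Mb^{-1/2}$ is engineered so that the SGD-transformed covariance $\tH=\Gb^{1/2}\Hb\Gb^{1/2}$ and signal $\tw^*=\Gb^{-1/2}\wb^*$ appearing in Theorem~\ref{theorem:precondition_sgd_fit2} coincide (again up to an orthogonal conjugation, which does not affect the spectrum or the relevant norms) with what one would get by running \emph{our} preconditioner $(\beta\hH+\Ib)^{-1}$ on the hatted problem $(\hH,\hw^*)$. Concretely: running preconditioned SGD with $\Gb$ on $(\Hb,\wb^*)$ is equivalent, after the linear reparametrization $\wb\mapsto\Mb^{1/2}\wb$, to running preconditioned SGD with preconditioner $(\beta\hH+\Ib)^{-1}$ on the problem whose covariance is $\hH$ and whose target is $\hw^*$ — because under $\wb\mapsto\Mb^{1/2}\wb$ the feature map becomes $\xb\mapsto\Mb^{-1/2}\xb$, giving covariance $\Mb^{-1/2}\Hb\Mb^{-1/2}$, which has the same eigenvalues as $\hH$, and the stochastic-gradient recursion transforms covariantly. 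I would verify this equivalence at the level of the iterates (the update \eqref{eq:pred_sgd} is equivariant under invertible linear changes of the parameter together with the dual change of the feature map), so that the excessive risk is literally preserved, and then note that Assumptions~\ref{assump:fourth_moment} and \ref{assump:model_noise} are stable under this bounded linear transformation (with the constants $\alpha,\sigma$ possibly changing by factors depending on $\Mb$, which are absorbed since $\Mb$ is treated as fixed).

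Having made both reductions, the statement becomes: in the hatted problem $(\hH,\hw^*)$, SGD preconditioned by $(\beta\hH+\Ib)^{-1}$ with suitable $\eta^*,\beta^*$ achieves risk $\lesssim$ standard-ridge risk with parameter $\lambda$ — which is exactly Theorem~\ref{theorem:HI} applied with $\Hb$ replaced by $\hH$ and $\wb^*$ by $\hw^*$. The hypotheses needed for Theorem~\ref{theorem:HI} must be checked for the hatted problem: $\hH$ is strictly positive definite with finite trace (since $\Mb$ is strictly positive and shares eigenbasis with $\Hb$, so $\hlambda_i=\lambda_i/\mu_i$ with $\mu_i>0$, and finiteness of $\tr(\hH)$ follows provided $\Mb$'s eigenvalues are bounded below — part of the ``representative family'' assumption), and the signal-to-noise ratio in the hatted problem is still $\Theta(1)$ because $\|\hw^*\|_{\hH}^2=\|\wb^*\|_{\Hb}^2$ is invariant under the transformation. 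I expect the main obstacle to be making the equivalence-under-reparametrization fully rigorous while tracking how the preconditioner $\Gb$ — which itself contains $\Mb$ and $\hH$ — interacts with the $\Hb^{1/2}$ and $\Gb^{1/2}$ factors in Theorem~\ref{theorem:precondition_sgd_fit2}; in particular one must check that the orthogonal factor left over when comparing $\Gb^{1/2}\Hb\Gb^{1/2}$ with $(\beta\hH+\Ib)^{-1/2}\hH(\beta\hH+\Ib)^{-1/2}$ genuinely drops out of every term ($\tH$'s eigenvalues, $\|e^{-N\eta\tH}\tw^*\|_{\tH_{0:k_1}^{-1}}^2$, $\|\tw^*\|_{\tH_{k_1:\infty}}^2$, and $\|\tw^*\|_{\tH}$), and that the choice of $\beta^*$ from Theorem~\ref{theorem:HI} (which is stated there in terms of the spectrum of the covariance) is expressible in terms of the spectrum of $\hH$, hence realizable by our $\Gb$. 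Once that bookkeeping is done, chaining the three equalities/inequalities gives the result.
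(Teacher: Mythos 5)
Your proposal is correct in substance, but it takes a different route from the paper's formal proof. The paper's appendix does not perform a black-box reduction: it re-derives the comparison from scratch, invoking Theorem~\ref{theorem:precondition_sgd_fit2} with $\tH=\Gb^{1/2}\Hb\Gb^{1/2}$, proving hatted analogues of the structural lemmas (that $\|\hw^*\|_{\hH}^2=\|\wb^*\|_{\Hb}^2$ and that $\tH$ shares the eigenbasis of $\Hb$ with $\tlambda_i=\hlambda_i/(\beta\hlambda_i+1)$), then repeating the entire Theorem~\ref{theorem:HI} calculation — setting $k_1=k_2=k^*$, choosing $\beta^*=1/\hlambda_{k^*}$, splitting into the two cases $\hlambda\gtrless\tr(\tH)$ with $\eta^*=1/\hlambda$ or $1/\tr(\tH)$ — with every $\lambda_i$ replaced by $\hlambda_i$. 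Your reparametrization argument ($\wb\mapsto\Mb^{1/2}\wb$, $\xb\mapsto\Mb^{-1/2}\xb$, equivariance of the recursion \eqref{eq:pred_sgd}, then Theorem~\ref{theorem:HI} applied to the problem $(\hH,\hw^*)$) is essentially the "canonical form" remark the paper makes informally after Theorem~\ref{theorem:HM}, promoted to a proof; it is cleaner and more modular, and the checks you flag do go through under the paper's standing assumption that $\Mb$ shares the eigenbasis of $\Hb$: commutativity makes $\Mb^{-1/2}\Hb\Mb^{-1/2}=\Hb^{1/2}\Mb^{-1}\Hb^{1/2}=\hH$ an exact equality (so the orthogonal-conjugation worry you raise disappears), Assumptions~\ref{assump:fourth_moment} and \ref{assump:model_noise} transfer to the hatted data with the \emph{same} constants $\alpha,\sigma$ (substitute $\Ab'=\Mb^{-1/2}\Ab\Mb^{-1/2}$ in Assumption~\ref{assump:fourth_moment}), the excess risk functional and the signal-to-noise ratio are preserved, and the $\beta^*$ from Theorem~\ref{theorem:HI} in the hatted problem is $1/\hlambda_{k^*}$, which your $\Gb$ realizes. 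What the paper's re-derivation buys in exchange is self-containedness and an explicit view of where the order-preservation assumption on $\Mb$ enters (via the monotonicity of $a\mapsto a/(\beta a+1)$ applied to the hatted spectrum); what your reduction buys is that any future improvement to Theorem~\ref{theorem:HI} immediately propagates to the preconditioned-ridge comparison without repeating estimates. The one piece of bookkeeping you should make explicit, beyond what you wrote, is the exact statement that tail-averaging and the excess risk commute with the reparametrization (both are linear/quadratic in the iterates, so this is immediate), so the inequality you obtain in the hatted problem is literally the claimed inequality for the original estimators.
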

Theorem~\ref{theorem:HM} demonstrates that, for a representative family of preconditioning matrices used in ridge regression, SGD with our proposed precondition matrix can still achieve comparable performance with ridge regression. Specifically, for a given precondition matrix $\Mb$ applied to ridge regression, we can find and adjust $\Gb$ for preconditioned SGD in a way that matches the performance of preconditioned ridge regression. Moreover, by substituting the choice of $\Gb$ from Theorem~\ref{theorem:HI} and \ref{theorem:HM} into $\tH = \Gb^{1/2} \Hb \Gb^{1/2}$, it becomes evident that both methods converge to the same canonical form with respect to $\Hb$ and $\hH$. Consequently, the influence of the design of $\Gb$ in Theorem~\ref{theorem:HM} can be reduced to the case presented in Theorem~\ref{theorem:HI}, with an updated data covariance matrix $\hH$. Additionally, one might naturally question whether the reverse direction holds true with preconditioning, i.e., whether ridge regression can similarly be guaranteed to match the performance of preconditioned SGD through appropriate preconditioning and tuning. We demonstrate in Appendix~\ref{appendix:ridge_match_sgd} (Theorem~\ref{theorem:MG}) that this is indeed the case, suggesting an inherent correlation between SGD and ridge regression.



  

\begin{figure*}[t!]
\subfigure[$\lambda_i=i^{-1}, \wb^{*}{[i]}=1$]{
\includegraphics[width=.31\textwidth]{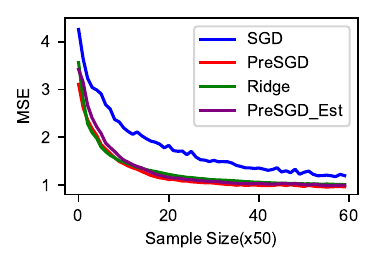}
}
\subfigure[$\lambda_i=i^{-1}, \wb^{*}{[i]}=i^{-1}$]{
\includegraphics[width=.31\textwidth]{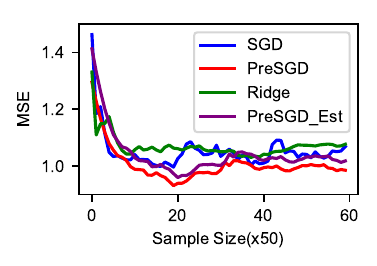}
}
\subfigure[$\lambda_i=i^{-1}, \wb^{*}{[i]}=i^{-10}$]{
\includegraphics[width=.31\textwidth]{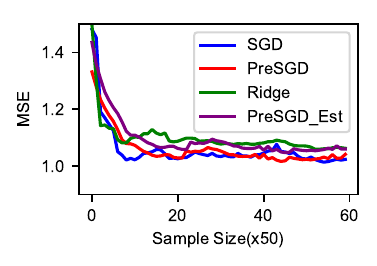}
}
\\
\subfigure[$\lambda_i=i^{-2}, \wb^{*}{[i]}=1$]{
\includegraphics[width=.31\textwidth]{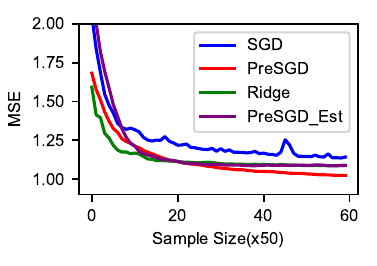}
}
\subfigure[$\lambda_i=i^{-2}, \wb^{*}{[i]}=i^{-1}$]{
\includegraphics[width=.31\textwidth]{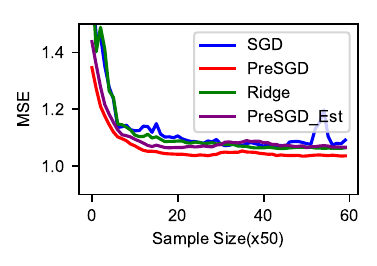}
}
\subfigure[$\lambda_i=i^{-2}, \wb^{*}{[i]}=i^{-10}$]{
\includegraphics[width=.31\textwidth]{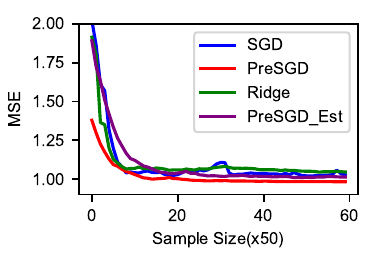}
}
\caption{Generalization performance comparison among SGD, Ridge regression, PreSGD and PreSGD-Est, where the stepsize $\eta$, controlling factor $\beta$ and regularization parameter $\lambda$ are fine-tuned to achieve the best performance. The problem dimension is $d=200$ and the variance of model noise is $\sigma^2=1$. We consider $6$ combinations of $2$ different covariance matrices and $3$ different ground truth model vectors. The generalization performance (excessive risk) is measured using mean squared error (MSE) on an independent test set of size $1000$ that is generated from the same problem instance. A set of unlabelled data of the same size as the training set is drawn from the same problem instance to estimate $\tG$ in (PreSGD-Est). The plots are averaged over $10$ independent runs.}
\label{fig:generalization_perf}
\end{figure*}
 
\subsection{Effectiveness of Preconditioned-SGD with Estimation}
In the previous sections, we assumed access to the precise information about the data covariance matrix $\Hb$. This section addresses the more practical scenarios where obtaining the exact data covariance matrix $\Hb$ is infeasible. In such cases, we consider situations where we have access to a set of inexpensive unlabeled data, denoted as $\{\tilde{\xb}_i\}_{i=1}^M$, drawn from the same distribution as the training data. Leveraging this set of unlabeled data, we first estimate $\Hb$ with the unlabeled data and subsequently design the preconditioned matrix using the estimated covariance information. We denote the estimated covariance matrix as $\mathbf{\Sigma}=M^{-1}\tilde{\Xb}\tilde{\Xb}^{\top}$,
where $\tilde{\Xb} = [\tilde{\xb}_1,...,\tilde{\xb}_M] \in \mathbb{R}^{d \times M}$ is a set of $M$ unlabeled data, which is often accessible and cost-effective to obtain in practice. Moreover, in the case where these data are unavailable, we can use half of the training data to approximate $\bSigma$ to decouple the randomness in theory. In practice, if using training data to perform both stochastic gradient updates and precondition estimation, the preconditioned SGD would resemble the well-known Newton methods to some extent~\footnote {we provide a more in-depth comparison between our method and Newton's method in Appendix~\ref{appendix:newton}}.

The following results demonstrate that our proposed precondition matrix allows for robust estimation from unlabeled data and still allows preconditioned SGD to maintain a theoretical improvement and remain comparable to ridge regression.
\begin{theorem}[Preconditioned-SGD with estimated $\tG$ comparable to standard ridge regression]\label{theorem:HI_est}
Consider a preconditioned matrix for SGD as $\tG = (\beta  \mathbf{\Sigma}  + \Ib)^{-1}.$
Let $N$ be a sufficiently large training sample size for both ridge and SGD.  Then, for any ridge regression solution that is generalizable and any $\lambda > 0$, there exist a choice of stepsize $\eta^*$ and a choice of $\beta^*$ for SGD such that with probability at least $1-\exp(-\Omega(M))$, we have $\cE\big[{\wb}_{\mathrm{sgd}}(N,\tG; \beta^*, \eta^*)\big]  \lesssim  \cE\big[\wb_{\mathrm{ridge}}(N,\Ib;\lambda)\big].$
\end{theorem}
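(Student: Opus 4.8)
The plan is to reduce Theorem~\ref{theorem:HI_est} to its population-covariance analogue, Theorem~\ref{theorem:HI}, by controlling the perturbation $\bSigma - \Hb$ and showing that it does not degrade the excessive risk bound by more than a constant factor (on the high-probability event). Concretely, recall that the SGD excessive risk from Theorem~\ref{theorem:precondition_sgd_fit2} depends on $\Gb$ only through the transformed objects $\tH = \Gb^{1/2}\Hb\Gb^{1/2}$ and $\tw^* = \Gb^{-1/2}\wb^*$ (where now $\Gb$ is replaced by the estimated $\tG = (\beta\bSigma + \Ib)^{-1}$). So the whole argument comes down to proving that the relevant spectral quantities of $\tH_{\mathrm{est}} := \tG^{1/2}\Hb\tG^{1/2}$ --- its eigenvalues $\tlambda_i$, the partial traces $\sum_{i>k}\tlambda_i^2$, the truncated norms $\|\tw^*\|^2_{\tH_{0:k}^{-1}}$ and $\|\tw^*\|^2_{\tH_{k:\infty}}$, and $\|\tw^*\|^2_{\tH}$ --- are all comparable (up to absolute constants) to the ones obtained with the exact $\Gb = (\beta\Hb+\Ib)^{-1}$ that were used to establish Theorem~\ref{theorem:HI}.

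The key steps, in order, would be: (1) Invoke a standard covariance concentration result for sub-Gaussian-type data with bounded fourth moment (Assumption~\ref{assump:fourth_moment}) to obtain, with probability at least $1 - \exp(-\Omega(M))$, a multiplicative spectral bound of the form $(1-c)\,\Hb \preceq \bSigma \preceq (1+c)\,\Hb$ for a small absolute constant $c$, possibly after splitting into a ``head'' subspace (the top $k^*$ eigendirections, effectively finite-dimensional) where a dimension-free operator-norm bound applies, and a ``tail'' where a trace-type bound suffices. Since the choice $M$ comparable to $N$ and $N$ sufficiently large will make the effective dimension $k^*$ small relative to $M$, this concentration holds with the stated failure probability. (2) Push this two-sided bound through the map $\Hb \mapsto (\beta\Hb+\Ib)^{-1}$: because $t\mapsto(\beta t+1)^{-1}$ is operator-monotone-decreasing and the sandwich is multiplicative, one gets $\tG \eqsim \Gb$ in the PSD order up to absolute constants (uniformly in $\beta\ge 0$), and likewise $\tH_{\mathrm{est}} \eqsim \tH$ and $\tw^*{}^\top \tG^{-1}\tw^* \eqsim \wb^*{}^\top\Gb^{-1}\wb^*$. (3) Use Weyl-type / min-max characterizations to transfer the PSD sandwich to each scalar quantity appearing in the bound of Theorem~\ref{theorem:precondition_sgd_fit2}: comparable eigenvalues give comparable $\sum_{i>k}\tlambda_i^2$ and comparable $k_2/N$ terms; comparable quadratic forms give comparable truncated bias norms; the exponential factor $e^{-N\eta\tH}$ only improves under a comparable $\tH$, so it can be bounded crudely. (4) Finally, feed these comparisons into the proof of Theorem~\ref{theorem:HI}: choose $\beta^* , \eta^*$ by the same recipe as there (the recipe depends on $\Hb$ and $\lambda$, which are fixed, not on $\bSigma$), and conclude $\cE[\wb_{\mathrm{sgd}}(N,\tG;\beta^*,\eta^*)] \lesssim \cE[\wb_{\mathrm{sgd}}(N,\Gb;\beta^*,\eta^*)] \lesssim \cE[\wb_{\mathrm{ridge}}(N,\Ib;\lambda)]$, with the failure probability inherited from step (1).

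The main obstacle I expect is step (1)--(2) in the \emph{infinite-dimensional / heavy-tailed-spectrum} regime: a naive operator-norm concentration $\|\bSigma-\Hb\|$ does not hold dimension-freely, so one cannot simply write $\bSigma\eqsim\Hb$ globally. The fix is that we never actually need a global multiplicative bound --- we only need it on the head subspace spanned by the top $k^*$ eigenvectors (where $\|\Hb^{-1/2}_{0:k^*}(\bSigma_{0:k^*}-\Hb_{0:k^*})\Hb^{-1/2}_{0:k^*}\|$ concentrates because that block is low-dimensional), together with a weaker one-sided trace control $\tr(\bSigma_{k^*:\infty})\lesssim \tr(\Hb_{k^*:\infty})$ on the tail, which is enough because on the tail the preconditioner $\tG$ is $\approx\Ib$ anyway (there $\beta\lambda_i \ll 1$), so errors in $\bSigma$ there barely move $\tG$. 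Making this head/tail decomposition interact cleanly with the preconditioning map $(\beta\,\cdot\,+\Ib)^{-1}$ --- in particular verifying that the cross terms between head and tail blocks are negligible and that the resulting $\tH_{\mathrm{est}}$ is still block-diagonal-comparable to $\tH$ --- is the delicate part, and is presumably where the bulk of the appendix proof goes; the rest is a routine re-run of the Theorem~\ref{theorem:HI} argument with constants adjusted.
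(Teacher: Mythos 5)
Your reduction-to-Theorem~\ref{theorem:HI} strategy is workable in spirit, but it is not the paper's route, and as written it hinges on a step that fails and on a repair whose hardest part you leave open. The global two-sided sandwich $(1-c)\Hb\preceq\bSigma\preceq(1+c)\Hb$ cannot hold in the regime of interest ($\bSigma$ has rank at most $M$ while $\Hb$ is strictly positive definite and possibly infinite dimensional), as you yourself note; and the head/tail repair you sketch --- multiplicative control on the top-$k^*$ block, trace control on the tail, plus a proof that the cross blocks are negligible so that $\tG^{1/2}\Hb\tG^{1/2}$ is ``block-diagonal-comparable'' to $\Gb^{1/2}\Hb\Gb^{1/2}$ --- is exactly the content you would still have to supply, and it is also more than is needed. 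The paper shows that two-sided comparability is unnecessary: (i) the upper bound $\mu_i\le\lambda_i$ on every eigenvalue of $\tG^{1/2}\Hb\tG^{1/2}$ is \emph{deterministic} (Lemma~\ref{lemma:eigen_value_rel}, via the Woodbury identity giving $\Hb^{1/2}\tG\Hb^{1/2}\preceq\Hb$ together with Courant--Fischer), and this alone controls the variance and the tail bias (Lemma~\ref{lemma:tail_sum_est}); (ii) the only probabilistic ingredient is a \emph{lower} bound on the head eigenvalues, $\mu_k\gtrsim(\beta M+\lambda_k^{-1})^{-1}$ for $k=o(M)$ (Lemma~\ref{lemma:mu_lower}), proved by a min--max argument on the top-$k$ eigenspace of $\Hb$ combined with a $\chi^2$ tail bound and an $\epsilon$-net --- essentially the same concentration you invoke, but used one-sidedly and without any block diagonalization, because the head bias term $(\tw^*[i])^2\,e^{-2\eta N\mu_i}/(N^2\eta^2\mu_i)$ is monotone decreasing in $\mu_i$, so a lower bound suffices. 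The proof then reruns the Theorem~\ref{theorem:HI} computation with $\mu_i$ in place of $\tlambda_i$, the rescaled choice $\beta^*\propto 1/(M\lambda_{k^*})$ (the appendix works with the unnormalized Gram matrix), and the same two cases for $\eta^*$; the failure probability $1-\exp(-\Omega(M))$ comes solely from Lemma~\ref{lemma:mu_lower}.

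The delicate point you flag --- relating the eigenstructure of $\tG^{1/2}\Hb\tG^{1/2}$ to that of $\Hb$ when comparing coordinate-wise against $\mathrm{RidgeBias}[i]$ --- is real, and it is the one place where your quadratic-form plan would have to do genuine work. Be aware, though, that the paper does not resolve it by a perturbation argument either: it cites an eigenvector-alignment result (Lemma~\ref{lemma:eigen_approx}) and then simply assumes the sample eigenvectors coincide with $\vb_i$ for the head comparison. So if you pursue your route, the cross-block control is precisely where your proof would either exceed the paper in rigor or stall; if you instead adopt the paper's combination of a deterministic one-sided eigenvalue bound, a head-subspace lower bound via min--max plus concentration, and monotonicity of the bias in $\mu_i$, you avoid needing two-sided comparability altogether.
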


\begin{theorem}[Preconditioned-SGD with estimated $\tG$ comparable to preconditioned ridge regression]\label{theorem:HM_est} 
Let $\Mb$ be the preconditioned matrix for ridge. Consider a preconditioned matrix for SGD as
$\tG = \Mb^{-1/2}(\beta \hH  + \Ib)^{-1}\Mb^{-1/2},$
where $\hH = \mathbf{\Sigma}^{1/2}\Mb^{-1}\mathbf{\Sigma}^{1/2}.$
Let $N$ be a sufficiently large training sample size for both ridge and SGD.  Then, for any ridge regression solution that is generalizable and any $\lambda > 0$, there exists a choice of stepsize $\eta^*$ and a choice of $\beta^*$ for SGD such that with probability at least $1-\exp(-\Omega(M))$, we have $\cE\big[{\wb}_{\mathrm{sgd}}(N,\tG; \beta^*, \eta^*)\big]  \lesssim  \cE\big[\wb_{\mathrm{ridge}}(N,\Mb;\lambda)\big].$
\end{theorem}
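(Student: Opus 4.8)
This is the estimated, preconditioned-ridge version of Theorem~\ref{theorem:HM}, and the plan is to deduce it from Theorem~\ref{theorem:HM} by a perturbation argument, mirroring how Theorem~\ref{theorem:HI_est} follows from Theorem~\ref{theorem:HI} (equivalently, after the change of variables $\wb\mapsto\Mb^{1/2}\wb$, $\xb\mapsto\Mb^{-1/2}\xb$ that turns preconditioned ridge with $\Mb$ into standard ridge and uses that $\Mb$ commutes with $\Hb$, it is Theorem~\ref{theorem:HI_est} for the transformed problem with covariance $\hH=\Hb^{1/2}\Mb^{-1}\Hb^{1/2}$ and target $\Mb^{1/2}\wb^*$). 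Write $\hH_{\mathrm{est}}:=\mathbf{\Sigma}^{1/2}\Mb^{-1}\mathbf{\Sigma}^{1/2}$, so the SGD preconditioner is $\tG=\Mb^{-1/2}(\beta\hH_{\mathrm{est}}+\Ib)^{-1}\Mb^{-1/2}$, and let $\Gb=\Mb^{-1/2}(\beta\hH+\Ib)^{-1}\Mb^{-1/2}$ and $\tH=\Gb^{1/2}\Hb\Gb^{1/2}$ be the idealized objects from Theorem~\ref{theorem:HM}. The excess-risk upper bound of Theorem~\ref{theorem:precondition_sgd_fit2} evaluated at $\tG$ depends on $\tG$ only through the transformed covariance $\tH_{\mathrm{est}}:=\tG^{1/2}\Hb\tG^{1/2}$ and the vector $\tG^{-1/2}\wb^*$, and only via their sorted eigenvalues and the (freely truncated) quadratic forms $\|\cdot\|^2_{(\tH_{\mathrm{est}})_{0:k_1}^{-1}}$ and $\|\cdot\|^2_{(\tH_{\mathrm{est}})_{k_1:\infty}}$. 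Hence it suffices to show that, with probability at least $1-\exp(-\Omega(M))$, there are constants $c_1,c_2>0$ (depending on the problem instance but not on $N$ or $M$) with
\begin{align*}
c_1\Gb\;\preceq\;\tG\;\preceq\;c_2\Gb\qquad\text{and}\qquad c_1\tH\;\preceq\;\tH_{\mathrm{est}}\;\preceq\;c_2\tH.
\end{align*}
Granting this, the SGD bound at $(\tG,\beta^*,\eta^*)$ is, after a constant rescaling of the stepsize and of the truncation levels, at most a constant times the SGD bound at $(\Gb,\beta^*,\eta^*)$, and the latter is $\lesssim\cE[\wb_{\mathrm{ridge}}(N,\Mb;\lambda)]$ by the proof of Theorem~\ref{theorem:HM}; the ridge side is itself untouched, since it uses the labeled data, not $\mathbf{\Sigma}$. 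So one takes $\beta^*$ exactly as in Theorem~\ref{theorem:HM} and $\eta^*$ a constant multiple of the choice there.

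\noindent\textbf{The only probabilistic input.} By deterministic Loewner algebra the two sandwiches reduce to a single high-probability event about the unlabeled sample covariance: with probability $1-\exp(-\Omega(M))$,
\begin{align*}
\tfrac12(\beta\Hb+\Ib)\;\preceq\;\beta\,\mathbf{\Sigma}+\Ib\;\preceq\;\tfrac32(\beta\Hb+\Ib),
\end{align*}
equivalently $\|\beta(\beta\Hb+\Ib)^{-1/2}(\mathbf{\Sigma}-\Hb)(\beta\Hb+\Ib)^{-1/2}\|\le\tfrac12$. Since $\mathbf{\Sigma}-\Hb=M^{-1}\sum_{i=1}^M(\tilde{\xb}_i\tilde{\xb}_i^\top-\Hb)$ is an average of i.i.d.\ mean-zero self-adjoint operators, this follows from a matrix-Bernstein / intrinsic-dimension concentration inequality: Assumption~\ref{assump:fourth_moment} bounds the second moment of the ``whitened'' summands $\beta(\beta\Hb+\Ib)^{-1/2}(\tilde{\xb}_i\tilde{\xb}_i^\top-\Hb)(\beta\Hb+\Ib)^{-1/2}$, and the relevant intrinsic dimension, $\tr(\beta\Hb(\beta\Hb+\Ib)^{-1})/\|\beta\Hb(\beta\Hb+\Ib)^{-1}\|\le\beta\,\tr(\Hb)$, is a problem-dependent constant, finite by Assumption~\ref{assump:data_distribution}; this is exactly why the ``$+\Ib$'' in the preconditioner and the finite-trace assumption are essential, and why a sufficiently large (constant-order) $M$ suffices to reach the $\exp(-\Omega(M))$ regime. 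On this event, using $c\Ib\preceq\Mb\preceq\|\Mb\|\,\Ib$ to sandwich $\Mb^{-1}$ and $\hH_{\mathrm{est}}$ (so that $\beta\hH_{\mathrm{est}}+\Ib\eqsim\beta\mathbf{\Sigma}+\Ib\eqsim\beta\Hb+\Ib\eqsim\beta\hH+\Ib$ up to constants), inverting, and conjugating by $\Mb^{-1/2}$ and then by $\Hb^{1/2}$, yields both displayed sandwiches. Weyl monotonicity then gives $\tlambda^{\mathrm{est}}_i\eqsim\tlambda_i$ for every $i$; consequently $k^*_{\mathrm{est}}\eqsim k^*$, the SGD variance term and the tail sums $\sum_{i>k_2}(\tlambda^{\mathrm{est}}_i)^2$ transfer to their idealized values up to constants, and the variance prefactor $\sigma^2+\|\tG^{-1/2}\wb^*\|_{\tH_{\mathrm{est}}}$ is literally unchanged, since $\|\tG^{-1/2}\wb^*\|^2_{\tH_{\mathrm{est}}}=\wb^{*\top}\Hb\wb^*=\|\wb^*\|_{\Hb}^2$ regardless of the preconditioner.

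\noindent\textbf{The bias term; main obstacle.} What remains --- and what I expect to be the crux --- is transferring the bias. The head part $\frac1{\eta^2N^2}\|e^{-N\eta\tH_{\mathrm{est}}}\tG^{-1/2}\wb^*\|^2_{(\tH_{\mathrm{est}})_{0:k^*_{\mathrm{est}}}^{-1}}$ is benign: restricted to the top-$k^*_{\mathrm{est}}$ eigenspace the operator $e^{-N\eta\tH_{\mathrm{est}}}$ has norm $\le e^{-N\eta\tlambda^{\mathrm{est}}_{k^*_{\mathrm{est}}}}\le e^{-\Omega(N\eta\tlambda_{k^*})}$, which in the ``$N$ sufficiently large'' regime is smaller than any problem constant times $N^{-1}$, so this term is dominated by the ridge variance $\sigma^2 k^*/N$ (the exponential bias decay of SGD highlighted among the design observations). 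The genuinely delicate piece is the bias tail $\|\tG^{-1/2}\wb^*\|^2_{(\tH_{\mathrm{est}})_{k:\infty}}$: the Loewner sandwich does not pin down the eigenvectors of $\tH_{\mathrm{est}}$, so the crude bound by $\|\wb^*\|_{\Hb}^2$ is too lossy when $\wb^*$ concentrates on leading directions. Rather than invoking Davis--Kahan (which would need an eigengap we cannot assume), the plan is to use that (i) $\tH_{\mathrm{est}}=\mathbf{T}^{*}\tH\mathbf{T}$ for the bounded, boundedly-invertible congruence $\mathbf{T}=\Gb^{-1/2}\tG^{1/2}$ (with $\|\mathbf{T}\|,\|\mathbf{T}^{-1}\|$ constants, by the $\tG$-sandwich), (ii) $\tG^{-1/2}\wb^*=(\mathbf{T}^{*})^{-1}\Gb^{-1/2}\wb^*$, and (iii) the truncation level $k$ is free, so one may take $k$ a constant multiple of $k^*$ and bound the estimated tail energy by the idealized tail energy of $\tH$ (diagonal in the common eigenbasis of $\Hb,\Mb$) up to a constant, exploiting that a bounded congruence cannot move more than an $O(1)$ fraction of spectral mass across a constant-factor change of truncation level, together with the gap-free perturbation of the spectral projectors of $\beta\Hb+\Ib$ versus $\beta\mathbf{\Sigma}+\Ib$ supplied by the concentration bound. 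Finally one checks that the transformed problem inherits Assumptions~\ref{assump:data_distribution}--\ref{assump:model_noise} (with $\alpha,\sigma$ rescaled by constants depending on $\|\Mb\|$ and $c$) and the constant signal-to-noise condition --- indeed $\|\Mb^{1/2}\wb^*\|^2_{\hH}/\sigma^2=\|\wb^*\|_{\Hb}^2/\sigma^2=\Theta(1)$ --- so that Theorem~\ref{theorem:HM} applies verbatim. Assembling the head bound, the transferred tail bound, and the transferred variance, and taking a union bound over the single concentration event, gives $\cE[\wb_{\mathrm{sgd}}(N,\tG;\beta^*,\eta^*)]\lesssim\cE[\wb_{\mathrm{ridge}}(N,\Mb;\lambda)]$ with probability $1-\exp(-\Omega(M))$.
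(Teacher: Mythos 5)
There is a genuine gap, and it sits at the very first step of your reduction: the claimed high-probability two-sided sandwich $\tfrac12(\beta\Hb+\Ib)\preceq\beta\mathbf{\Sigma}+\Ib\preceq\tfrac32(\beta\Hb+\Ib)$ with probability $1-\exp(-\Omega(M))$ is not available for the $\beta$ you need. You take $\beta^*$ ``exactly as in Theorem~\ref{theorem:HM}'', i.e.\ $\beta^*=1/\hlambda_{k^*}$, and since $k^*$ satisfies $N\hlambda_{k^*}\lesssim\hat{\lambda}$, this $\beta^*$ grows like $N/\hat{\lambda}$ — it is not a problem constant. Consequently the ``intrinsic dimension'' you invoke, $\beta\,\tr(\Hb)$, also grows with $N$, so a matrix-Bernstein bound needs $M\gtrsim k^*+\beta\sum_{i>k^*}\lambda_i\approx k^*+N$ rather than constant-order $M$, and the failure probability is not $\exp(-\Omega(M))$ uniformly in $N$. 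Worse, the lower inequality can fail outright in the overparameterized regime: $\mathbf{\Sigma}$ has rank at most $M$, so there is a unit vector $\vb$ orthogonal to the sample span with $\vb^\top\Hb\vb\ge\lambda_{M+1}$, and the lower sandwich forces $\beta\lambda_{M+1}\le 1$, which is violated whenever $M<k^*$ (then $\lambda_{M+1}\ge\lambda_{k^*}=1/\beta^*$). This is precisely why the paper does not attempt a two-sided spectral equivalence: it proves only one-sided eigenvalue bounds for $\tH=\tG^{1/2}\Hb\tG^{1/2}$ — a lower bound $\mu_k\ge 1/(8\beta M+\hlambda_k^{-1})$ (Lemma~\ref{lemma:mu_lower2}), which visibly degrades with $M$, and an upper bound $\mu_i\le\hlambda_i$ (Lemma~\ref{lemma:eigen_value_rel2}) — and then compensates by choosing a \emph{different} tunable parameter in the estimated setting, $\beta^*=1/(8M\hlambda_{k^*})$, so that $8\beta^*M\hlambda_{k^*}+1=2$ and $\mu_{k^*}\gtrsim\hlambda_{k^*}$. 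Your plan of reusing the idealized $\beta^*$ together with the sandwich therefore cannot be repaired by sharpening constants; the choice of $\beta^*$ itself must change with $M$.

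The second unresolved point is the one you flag yourself: the bias tail $\|\tG^{-1/2}\wb^*\|^2_{(\tH)_{k:\infty}}$. A Loewner equivalence (even if it held) controls quadratic forms and eigenvalues but not spectral projectors, and the asserted principle that ``a bounded congruence cannot move more than an $O(1)$ fraction of spectral mass across a constant-factor change of truncation level'' is not a theorem you can cite or easily prove gap-free; nor does shifting $k_1$ help for free, since mass moved into the head must then be absorbed by the head ridge bias $\hat{\lambda}^2 N^{-2}\|\hat{\wb}^*\|^2_{\hH_{0:k^*}^{-1}}$, which requires quantitative eigenvector information. The paper handles this via an eigenvector-alignment statement for the sample covariance (Lemma~\ref{lemma:eigen_approx}, feeding into Lemma~\ref{lemma:tail_sum_est}) rather than through projector-free Loewner algebra. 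Your variance transfer ($\mu_i\le\hlambda_i$, signal strength invariance via Lemma~\ref{lemma:double_effect}) and the observation that the ridge side is untouched are correct and match the paper, but the two items above — the invalid concentration step tied to an $N$-dependent $\beta^*$, and the unproven bias-tail transfer — mean the proposal as written does not establish the theorem.
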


Theorems~\ref{theorem:HI_est} and \ref{theorem:HM_est} are variations of Theorems \ref{theorem:HI} and ~\ref{theorem:HM} that incorporate empirical estimation. They establish the efficacy of our proposed preconditioned matrix design and demonstrate that even when the data covariance matrix $\Hb$ is estimated from unlabeled data, SGD with our proposed preconditioned matrix can still be comparable to (standard \& preconditioned) ridge regression with a theoretical guarantee. The introduction of randomness due to empirical estimation is reflected in the probabilistic bound related to the estimation sample size $M$ in the theorem statements. Nonetheless, despite the similarities in their formal expressions, establishing Theorems \ref{theorem:HI_est} and \ref{theorem:HM_est} as opposed to Theorems \ref{theorem:HI} and ~\ref{theorem:HM} presents distinctive technical challenges. Our analytical approach for comparing SGD and ridge relies on the characterization of excessive risk within the eigenspace. The randomness induced by the empirical estimation can introduce stochastic fluctuations to the eigenvalues within the eigenspace, thereby complicating the analysis.  Therefore, to properly analyze and compare SGD and ridge under conditions involving empirical estimation, it is imperative to employ a specialized set of technical tools designed to mitigate the impact of the randomness introduced through empirical estimation.



\subsection{Empirical Study}
We perform experiments on Gaussian least square problem\footnote{Anonymous link to the implementation: https://github.com/jwsu825/presgd}.
We consider $6$ problem instances, which are the combinations of $2$ different covariance matrices $\Hb$, $\lambda_i=i^{-1}$ and $\lambda_i=i^{-2}$, and $3$ different ground-truth vectors $\wb^*$, $\wb^*[i]=1$, $\wb^*[i]=i^{-1}$, and $\wb^*[i] = i^{-10}$. This empirical study aims to answer the following key questions:
\begin{enumerate}[leftmargin=*, nosep]
    \item Can preconditioning indeed improve the generalization performance of SGD? 
    \item Can SGD with preconditioning indeed be consistently comparable with ridge?
    \item Is preconditioning still beneficial when the preconditioning matrix is empirically estimated?
\end{enumerate}
We compare the generalization performance of four optimization methods: SGD, Ridge Regression, Preconditioned-SGD (PreSGD), and PreSGD with estimation (PreSGD-Est) across the 6 problem instances. A set of unlabelled data of the same size as the training set is drawn from the same problem instance to estimate $\tG$ in PreSGD-Est. The results of these comparisons are given in Figure~\ref{fig:generalization_perf}. In cases where Ridge Regression outperforms SGD, such as when $\wb^*[i]=1$, both PreSGD and PreSGD-Est demonstrate the ability to achieve comparable performance to Ridge Regression. Note that when $\wb^*[i]=1$, the bias error will be large. Therefore the improvement of PreSGD and PreSGD-Est in this case well supports our intuition and theory regarding the design of preconditioning. Conversely, in scenarios where SGD exhibits comparable or superior performance to ridge regression, such as when $\wb^*[i]=i^{-1}$ or $\wb^*[i]=i^{-10}$,  both PreSGD and PreSGD-Est manage to further enhance the generalization performance. These empirical findings support the hypotheses posed in our theoretical analysis, validating the effectiveness of preconditioning in improving the generalization performance of SGD. 

\section{Discussion}
{\bf Conclusion.} We conduct an in-depth exploration of implicit regularization in SGD with preconditioning. Our analysis extends the boundaries of existing understanding by characterizing the excessive risk associated with both SGD and ridge regression within the context of preconditioning. We show that preconditioned SGD can be consistently comparable with ridge regression with and without preconditioning, effectively closing the performance gap between SGD and ridge. Furthermore, we illustrate that our proposed preconditioned matrix can be robustly estimated using readily available, inexpensive, and unlabeled data, affirming its practical feasibility and real-world applicability. This research underscores the vital role of SGD in machine learning and emphasizes its potential for further improvement through the utilization of preconditioning techniques. 

{\bf Limitation and Future Work.} Our primary objective in this study was to address an open question from the literature, and we consequently focused on the linear regression problem to maintain consistency with the setting of the previous study. Looking ahead, a natural progression for future research involves expanding our investigation beyond linear models and SGD. We aim to explore whether preconditioning can also enhance the implicit regularization of SGD or similar algorithms within the context of other linear and potentially nonlinear models. More specifically, we are considering the following areas for future research: 1) extending the analysis to explore preconditioned SGD for locally quadratic models, where decaying step sizes are essential for reducing risk to zero; 2) examining the application of preconditioning to SGD in the context of matrix regression~\citep{wu2023many}; and 3) analyzing the effects of preconditioning on SGD and GLM-tron~\citep{kakade2011efficient} in ReLU regression (we have presented preliminary results and discussions for this direction in Appendix~\ref{appendix:relu}).

\bibliography{reference}
\bibliographystyle{iclr2024_conference}

\appendix
\section{Excessive Risk Analysis of Preconditioned SGD and Ridge }\label{appendix:sgd_excessive_risk}

In this part, we will mainly follow the proof technique and results in ~\citep{zou2021benign} and ~\citep{tsigler2023benign} that is developed to sharply characterize the excess risk bound for SGD (with tail-averaging) and ridge. Here, we extend their proof into the cases with precondition. 

\subsection{Ecessive Risk of SGD with precondition}
First, we introduce/recall some notations and definitions that will be repeatedly used in the subsequent analysis.  Recall that $\Hb=\EE[\xb\xb^\top]$ be the covariance of data distribution and $\tilde{\Hb} = \Gb^{1/2}\Hb\Gb^{1/2}$ as the preconditioned covariance. It is easy to verify that $\Hb$ is a diagonal matrix with eigenvalues $\lambda_1,\dots,\lambda_d$. Here, we slightly abuse the notation and use $\Hb[i]$ to indicate the $i$-th eigenvalues corresponding to the $i$-th leading eigenvector.
Let $\wb_t$ be the $t$-th iterate of the preconditioned SGD.
Then, we define $\betab_t:=\wb_t-\wb^*$ as the centred preconditioned SGD iterate.
 Then we define $\betab_t^{\bias}$ and $\betab_t^{\var}$ as the bias error and variance error respectively, which are described by the following update rule:
\begin{align}\label{eq:update_rule_eta_t_pred}
\betab_t^{\bias} &= \big(\Ib-\eta \Gb \xb_t\xb_t^\top\big)\betab_{t-1}^{\bias}, && \betab_0^{\bias}=\betab_0,\notag\\
\betab_t^{\var} &= \big(\Ib-\eta\Gb\xb_t\xb_t^\top\big)\betab_{t-1}^{\var} + \eta\varepsilon\Gb\xb_t,&&\betab_0^{\var}=\boldsymbol{0}. \notag
\end{align}

When $\Gb = \Ib$, the preconditioned SGD reduces to the standard SGD iterate and excessive risk in this case be characterized by the following theorem.

\begin{theorem}[Extension of Theorem 5.1 in \cite{zou2021benign}]\label{thm:standard_sgd}
    Consider SGD with tail-averaging with initialization $\wb_0=\bm{0}$.
Suppose the stepsize satisfies $\eta\lesssim 1/\tr(\Hb)$.
Then the excess risk can be upper-bounded as follows,
\begin{align*}
\EE [L(\wb_{\mathrm{sgd}}(N;\eta))] - L(\wb^*)
&\le  \sgdbias + \sgdvar,
\end{align*}
where 
\begin{align*}
 \sgdbias & \lesssim \frac{1}{\eta^2N^2}\cdot\big\|\exp\big(-N\eta \Hb \big)\wb^*\big\|_{\Hb_{0:k_1}^{-1}}^2 + \big\|\wb^*\big\|_{\Hb_{k_1:\infty}}^2 \\
 \sgdvar & \lesssim \frac{\sigma^2 + \|\wb^*\|_\Hb^2}{N}\cdot\bigg(k_2 + N^2\eta^2 \sum_{i> k_2}\lambda_i^2\bigg).
\end{align*}
where $k_1,k_2\in[d]$ are arbitrary.
\end{theorem}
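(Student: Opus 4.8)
The plan is to recognize the statement as the $\Gb=\Ib$ specialization of the preconditioned analysis and to reproduce the operator-recursion argument of \citep{zou2021benign}. First I would reduce to a single quadratic form: since $\wb^*$ minimizes $L$, a second-order expansion at the minimizer gives $\cE(\wb)=\tfrac12\|\wb-\wb^*\|_\Hb^2$, so it suffices to bound $\EE\|\bar\betab\|_\Hb^2$ where $\bar\betab=\frac2N\sum_{t=N/2}^{N-1}(\wb_t-\wb^*)$. Writing $\betab_t=\wb_t-\wb^*$, the SGD update is affine, $\betab_t=(\Ib-\eta\xb_t\xb_t^\top)\betab_{t-1}+\eta\varepsilon_t\xb_t$, so $\betab_t=\betab_t^{\bias}+\betab_t^{\var}$ with the two pieces obeying the homogeneous and inhomogeneous recursions already displayed. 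Conditioning on the feature sequence and using that the variance iterate has conditional mean zero, the cross term drops and $\EE\|\bar\betab\|_\Hb^2=\EE\|\bar\betab^{\bias}\|_\Hb^2+\EE\|\bar\betab^{\var}\|_\Hb^2$, which I would then bound separately.

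Next I would pass to the second-moment operators $\mathbf{B}_t:=\EE[\betab_t^{\bias}(\betab_t^{\bias})^\top]$ and $\mathbf{C}_t:=\EE[\betab_t^{\var}(\betab_t^{\var})^\top]$, which satisfy $\mathbf{B}_t=(\mathcal{I}-\eta\mathcal{T})\mathbf{B}_{t-1}$ and $\mathbf{C}_t=(\mathcal{I}-\eta\mathcal{T})\mathbf{C}_{t-1}+\eta^2\EE[\varepsilon_t^2\xb_t\xb_t^\top]$, where $\mathcal{T}\Ab=\Hb\Ab+\Ab\Hb-\eta\,\EE[\xb\xb^\top\Ab\xb\xb^\top]$. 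Assumption~\ref{assump:fourth_moment} supplies the PSD bound $\EE[\xb\xb^\top\Ab\xb\xb^\top]\preceq\Hb\Ab\Hb+\alpha\tr(\Ab\Hb)\,\Hb$, Assumption~\ref{assump:model_noise} supplies $\EE[\varepsilon_t^2\xb_t\xb_t^\top]\preceq\sigma^2\Hb$, and the step-size condition $\eta\lesssim1/\tr(\Hb)$ makes $\mathcal{I}-\eta\mathcal{T}$ a PSD-preserving contraction while keeping the scalar sequence $\tr(\mathbf{B}_t\Hb)$ bounded, starting from $\tr(\mathbf{B}_0\Hb)=\|\wb^*\|_\Hb^2$ (since $\betab_0^{\bias}=-\wb^*$). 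The structural observation I would exploit is that the slack term $\alpha\tr(\mathbf{B}_{t-1}\Hb)\,\Hb$ in the bias recursion plays exactly the role of the noise injection $\sigma^2\Hb$ in the variance recursion, so the bias dynamics self-generate a variance-type error at effective noise level $\asymp\|\wb^*\|_\Hb^2$; this is the origin of the $\|\wb^*\|_\Hb^2$ appearing alongside $\sigma^2$ in $\sgdvar$.

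Then I would estimate the two quadratic forms. For the bias, unrolling $\mathbf{B}_t=(\mathcal{I}-\eta\mathcal{T})^t\mathbf{B}_0$ and applying the tail-averaging operator, the quantity $\EE\|\bar\betab^{\bias}\|_\Hb^2$ splits along the eigenbasis of $\Hb$ into a head part $i\le k_1$ and a tail part $i>k_1$: on the head, coordinate $i$ is damped like $(1-\eta\lambda_i)^t$, and averaging over $t\in[N/2,N)$ turns this into a prefactor $\lesssim\frac1{\eta N\lambda_i}\exp(-\tfrac12 N\eta\lambda_i)$, which after squaring, weighting by $\lambda_i$, and summing gives $\frac1{\eta^2N^2}\|\exp(-N\eta\Hb)\wb^*\|_{\Hb_{0:k_1}^{-1}}^2$; on the tail the contraction is harmless and the contribution is $\lesssim\|\wb^*\|_{\Hb_{k_1:\infty}}^2$, with the self-generated variance term folded into the variance estimate. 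For the variance, solving the inhomogeneous recursion for $\mathbf{C}_t$ (and the parallel self-noise piece) and then averaging yields the standard two-regime bound: a head $i\le k_2$ contributing $\lesssim k_2/N$ and a tail contributing $\lesssim N\eta^2\sum_{i>k_2}\lambda_i^2$, per unit of effective noise, so that multiplying by $\sigma^2+\|\wb^*\|_\Hb^2$ produces $\sgdvar$. Adding the two gives the claim for arbitrary $k_1,k_2\in[d]$.

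The main obstacle is the bias estimate: extracting the sharp $\frac1{\eta^2N^2}$ prefactor together with the exponential head decay from the interplay of the PSD operator recursion with the tail-averaging operator, while simultaneously showing that the fourth-moment slack contributes only the lower-order, variance-type term rather than corrupting the leading-order bias behaviour. This is precisely the computation carried out in \citep{zou2021benign}; the only genuinely new point in the present ``extension'' is checking that every step survives when $\Hb$ is merely trace-class instead of finite-dimensional, which is guaranteed by Assumption~\ref{assump:data_distribution}.
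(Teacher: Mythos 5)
Your proposal is correct and follows essentially the same route the paper itself relies on: the paper states this result as an import of Theorem 5.1 of \citep{zou2021benign} rather than reproving it, and your sketch (bias--variance splitting of the iterates, the second-moment operator recursion $\mathbf{B}_t=(\mathcal{I}-\eta\mathcal{T})\mathbf{B}_{t-1}$ controlled via Assumptions~\ref{assump:fourth_moment} and~\ref{assump:model_noise} under $\eta\lesssim 1/\tr(\Hb)$, the fourth-moment slack acting as effective noise of size $\|\wb^*\|_{\Hb}^2$, and the head/tail split at arbitrary $k_1,k_2$ after tail-averaging) is exactly that argument. One small caveat: the exact cancellation of the bias--variance cross term requires conditionally centered noise, which the assumptions here do not guarantee, but since the claim is only up to constants the inequality $\|a+b\|_{\Hb}^2\le 2\|a\|_{\Hb}^2+2\|b\|_{\Hb}^2$ suffices and nothing in your outline breaks.
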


\begin{proof}[Proof of Theorem~\ref{theorem:precondition_sgd_fit2}]
Using the fact that $$ \yb_t = \langle \wb^*, \xb_t \rangle + \varepsilon, $$
and the update rules of preconditioned SGD,
$$  {\wb}^{(t+1)} =   {\wb}^{(t)} - \eta \cdot (\langle {\wb}^{(t)},\xb_t \rangle - y_t) \cdot {\Gb}  \xb_t$$
we can further obtain
\begin{align*}
    \wb^{(t+1)} - \wb^* = \wb^{(t)} - \wb^* - \eta \Gb \xb_t \xb_t^\top (\wb^{(t)} - \wb^*) - \eta \Gb \xb_t \cdot \varepsilon\\
= (\Ib - \eta \Gb \xb_t \xb_t^\top)(\wb^{(t)} - \wb^*) - \eta \Gb \xb_t \cdot \varepsilon.
\end{align*}
Then multiplying by $ \Gb^{-\frac{1}{2}} $ on both sides, we can obtain,
\begin{align*}
\Gb^{-\frac{1}{2}}(\wb^{(t+1)} - \wb^*) &= \Gb^{-\frac{1}{2}}(\wb^{(t)} - \wb^*) - \eta \Gb^{\frac{1}{2}} \xb_t \xb_t^\top (\wb^{(t)} - \wb^*) - \eta \Gb^{\frac{1}{2}} \xb_t \cdot \varepsilon\\
&= (\Ib - \eta \Gb^{\frac{1}{2}} \xb_t \xb_t^\top \Gb^{\frac{1}{2}}) \Gb^{-\frac{1}{2}}(\wb^{(t)} - \wb^*) - \eta \Gb^{\frac{1}{2}} \xb_t \cdot \varepsilon.
\end{align*}

Then we consider the preconditioned error covariance
\begin{align*}
    \mathrm{{\bf Err}}_t &= \mathbb{E}\left[\Gb^{-\frac{1}{2}}(\wb^{(t+1)} - \wb^*) \otimes \Gb^{-\frac{1}{2}}(\wb^{(t+1)} - \wb^*)\right] \\
    & = \mathbb{E}\left[\Gb^{-\frac{1}{2}}(\wb^{(t)} - \wb^*)(\wb^{(t)} - \wb^*)^\top \Gb^{-\frac{1}{2}}\right],
\end{align*}

where the expectation is taken with respect to the randomness of the SGD algorithm. Then we can get the following update form for this error covariance:
\begin{align*}
    \mathrm{{\bf Err}}_{t+1} &= \mathbb{E}\left[(\Ib - \eta \Gb^{\frac{1}{2}}\xb_t \xb_t^\top \Gb^{\frac{1}{2}})\mathrm{{\bf Err}}_t (\Ib - \eta \Gb^{\frac{1}{2}}\xb_t \xb_t^\top \Gb^{\frac{1}{2}})\right] + \eta^2 \mathbb{E}\left[\varepsilon^2 \cdot \Gb^{\frac{1}{2}}\xb_t \xb_t^\top \Gb^{\frac{1}{2}}\right] \\
& = (\Ib - \eta \Gb^{\frac{1}{2}}\Hb\Gb^{\frac{1}{2}})\mathrm{{\bf Err}}_t(\Ib - \eta \Gb^{\frac{1}{2}}\Hb\Gb^{\frac{1}{2}}) + \eta^2 \mathbb{E}_{\xb_t,\varepsilon}\left[\varepsilon^2 \cdot \Gb^{\frac{1}{2}}\xb_t \xb_t^\top \Gb^{\frac{1}{2}}\right] \\
& +\eta^2 \left(\mathbb{E}\left[\Gb^{\frac{1}{2}}\xb_t \xb_t^\top \Gb^{\frac{1}{2}}\mathrm{{\bf Err}}_t \Gb^{\frac{1}{2}}\xb_t \xb_t^\top \Gb^{\frac{1}{2}}\right] - \Gb^{\frac{1}{2}}\Hb\Gb^{\frac{1}{2}}\mathrm{{\bf Err}}_t \Gb^{\frac{1}{2}}\Hb\Gb^{\frac{1}{2}}\right).
\end{align*}

Then by Assumptions~\ref{assump:model_noise} and ~\ref{assump:fourth_moment}, we can get
\begin{align*}
    \mathrm{{\bf Err}}_{t+1} & \preceq (\Ib - \eta \Gb^{\frac{1}{2}}\Hb\Gb^{\frac{1}{2}})\mathrm{{\bf Err}}_t(\Ib - \eta \Gb^{\frac{1}{2}}\Hb\Gb^{\frac{1}{2}}) \\
& + \eta^2 \cdot \sigma^2 \cdot \Gb^{\frac{1}{2}}\Hb\Gb^{\frac{1}{2}} + \eta^2 \cdot \tr(\Gb^{\frac{1}{2}}\Hb\Gb^{\frac{1}{2}}\mathrm{{\bf Err}}_t) \cdot \Gb^{\frac{1}{2}}\Hb\Gb^{\frac{1}{2}}.
\end{align*}
Besides, we can also verify that
\begin{align*}
    \Delta(\wb_t) &= \langle(\wb_t - \wb^*)(\wb_t - \wb^*), \Hb\rangle\\ 
 &= \langle \Gb^{-\frac{1}{2}}(\wb_t - \wb^*)(\wb_t - \wb^*)\Gb^{-\frac{1}{2}}, \bG^{\frac{1}{2}}\Hb\Gb^{\frac{1}{2}}\rangle  \\
 &= \langle \mathrm{{\bf Err}}_t, \Gb^{\frac{1}{2}}\Hb\Gb^{\frac{1}{2}}\rangle.
\end{align*}

Therefore, the dynamics of the preconditioned SGD can be characterized by studying the dynamics of standard SGD, using a transformed data covariance matrix $\tH = \Gb^{\frac{1}{2}}\Hb\Gb^{\frac{1}{2}}$, ground truth vector $ \tw^* = \Gb^{-\frac{1}{2}}\wb^* $, and iterate $ \Gb^{-\frac{1}{2}}\wb_{t}$. Therefore, following the proofs of Theorem 5.1 in ~\citep{zou2021benign} with the above modification, we can immediately obtain that for arbitrary $k_1,k_2 \in [d]$, we have that,
\begin{align*}
            \mathrm{SGDRisk}
&\lesssim \underbrace{\frac{1}{\eta^2 N^2}\cdot\big\|\exp(-N\eta \tilde{\Hb})\tilde{\wb}^*\big\|_{\tilde{\Hb}_{0:k_1}^{-1}}^2 + \|\tilde{\wb}^*\big\|_{\tilde{\Hb}_{k_1:\infty}}^2}_{\mathrm{SGDBias}} \\
&\qquad + \underbrace{\big(\sigma^2+ \|\tw^*\|_{\tH}\big) \cdot \bigg(\frac{k_2}{N}+N\eta^2\sum_{i>k_2}\tilde{\lambda}_i^2\bigg)}_{\mathrm{SGDVariance}}.
\end{align*}
\end{proof}

\subsection{Ecessive Risk of ridge regression with precondition}
First, recall that the standard ridge regression is equivalent to the following least square problem,
\begin{align*}
 \arg\min_{{\wb}} \|{{\Xb}{\wb}} - \yb\|_2^2 + \lambda\|{{\wb}}\|_{2}^2,
\end{align*}
Then, we have the following extension of Lemmas 2 \& 3 in \citep{tsigler2023benign} for characterizing the excessive risk of ridge regression.
\begin{theorem}[Theorem B.2 in ~\citep{zou2021benefits}]\label{thm:lowerbound_ridge}
 Let $\lambda\ge 0$ be the regularization parameter, $n$ be the training sample size and $\wb_{\mathrm{ridge}}(N;\lambda)$ be the output of ridge regression. 
 Then 
\begin{align*}
\EE\big[L(\wb_{\mathrm{ridge}}(N;\lambda))\big]-L(\wb^*) = \ridgebias +  \ridgevar ,
\end{align*}
and there is some absolute constant $b > 1$, such that for
\[k^* := \min\left\{k: b  \lambda_{k+1} \le  \frac{\lambda+\sum_{i>k}\lambda_i}{n }\right\}, \]
the following holds:
\begin{align*}
\ridgebias &\gtrsim\bigg(\frac{\lambda+\sum_{i>\kr}\lambda_i}{N}\bigg)^2\cdot\|\wb^*\|_{\Hb_{0:\kr}^{-1}}^2+\|\wb^*\|_{\Hb_{\kr:\infty}}^2,\notag\\
\ridgevar &\gtrsim\sigma^2\cdot\bigg\{\frac{\kr}{N}+\frac{N\sum_{i>\kr}\lambda_i^2}{\big(\lambda+\sum_{i>\kr}\lambda_i\big)^2}\Bigg\}.
\end{align*}
\end{theorem}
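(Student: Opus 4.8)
\noindent The statement is the $\Mb=\Ib$ special case of a sharp excess-risk lower bound for ridge, and the plan is to follow the route of \citep{tsigler2023benign} (their Lemmas~2--3) together with the noise bookkeeping of \citep{zou2021benefits}; once this is in hand, the preconditioned version (Theorem~\ref{theorem:precondition_ridge_fit}) follows from it verbatim after the change of variables $\hH=\Hb^{1/2}\Mb^{-1}\Hb^{1/2}$, $\hat{\wb}^*=\Mb^{1/2}\wb^*$, exactly as preconditioning was reduced to the vanilla case in the proof of Theorem~\ref{theorem:precondition_sgd_fit2}. First I would write $\wb_{\mathrm{ridge}}(N;\lambda)=(\Xb^\top\Xb+\lambda\Ib)^{-1}\Xb^\top\yb$, substitute $\yb=\Xb\wb^*+\boldsymbol{\varepsilon}$, and obtain $\wb_{\mathrm{ridge}}-\wb^*=-\lambda(\Xb^\top\Xb+\lambda\Ib)^{-1}\wb^*+(\Xb^\top\Xb+\lambda\Ib)^{-1}\Xb^\top\boldsymbol{\varepsilon}$. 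Since $\EE[\boldsymbol{\varepsilon}\mid\Xb]=0$, the expected excess risk $\tfrac12\EE\|\wb_{\mathrm{ridge}}-\wb^*\|_\Hb^2$ splits exactly into $\ridgebias=\tfrac12\lambda^2\,\EE_\Xb\|(\Xb^\top\Xb+\lambda\Ib)^{-1}\wb^*\|_\Hb^2$ and $\ridgevar=\tfrac12\,\EE_{\Xb,\boldsymbol{\varepsilon}}\|(\Xb^\top\Xb+\lambda\Ib)^{-1}\Xb^\top\boldsymbol{\varepsilon}\|_\Hb^2$, where the inner noise average in $\ridgevar$ is $\eqsim\sigma^2\tr\big(\Hb(\Xb^\top\Xb+\lambda\Ib)^{-1}\Xb^\top\Xb(\Xb^\top\Xb+\lambda\Ib)^{-1}\big)$ by Assumption~\ref{assump:model_noise} and its lower-sided counterpart.

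\noindent Next I would diagonalize $\Hb$ and split every coordinate into a head block $i\le k^*$ and a tail block $i>k^*$, $\Xb=[\Xb_{0:k^*}\ \Xb_{k^*:\infty}]$. The whole argument turns on the two properties that the threshold $k^*$ encodes: (i) the tail is flat, $b\lambda_{k^*+1}\le(\lambda+\sum_{i>k^*}\lambda_i)/N=:\mu/N$, so the $N\times N$ matrix $\Ab:=\lambda\Ib_N+\Xb_{k^*:\infty}\Xb_{k^*:\infty}^\top$ concentrates as $\Ab\eqsim\mu\Ib_N$ with probability $1-e^{-\Omega(N)}$ (a Bernstein bound for the sum of rank-one terms whose individual operator norms are $\lesssim\mu/N$ precisely by (i), controlled using Assumption~\ref{assump:fourth_moment}); and (ii) minimality of $k^*$ forces $N\lambda_{k^*}\gtrsim\mu$, so the head Gram matrix is well conditioned, $\Xb_{0:k^*}^\top\Xb_{0:k^*}\eqsim N\Hb_{0:k^*}$ with every eigenvalue $\eqsim N\lambda_i\gtrsim\mu$. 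Feeding these into a Schur-complement/Woodbury computation for the head block of $(\Xb^\top\Xb+\lambda\Ib)^{-1}$ shows this block is $\eqsim\tfrac{\mu}{\lambda N}\Hb_{0:k^*}^{-1}$, so the residual operator $\lambda(\Xb^\top\Xb+\lambda\Ib)^{-1}$ acts on head direction $i$ like multiplication by $\eqsim\mu/(N\lambda_i)$ --- the effective ridge in the head is $\mu$, not $\lambda$, which is where the prefactor $(\mu/N)^2$ comes from. Assembling: for $\ridgebias$ the head contributes $\gtrsim\sum_{i\le k^*}\lambda_i\,(\mu/(N\lambda_i))^2(\vb_i^\top\wb^*)^2=(\mu/N)^2\|\wb^*\|_{\Hb_{0:k^*}^{-1}}^2$, while for $i>k^*$ one has $\mu\gtrsim N\lambda_i$, so those coordinates are essentially unfit and the residual is $\gtrsim\|\wb^*\|_{\Hb_{k^*:\infty}}^2$; for $\ridgevar$ the $k^*$ well-conditioned head directions each contribute $\gtrsim\sigma^2/N$, and on the tail block replacing $\Ab$ by $\mu\Ib_N$ turns the trace into $\gtrsim\tfrac{\sigma^2}{\mu^2}\tr(\Hb_{k^*:\infty}\Xb_{k^*:\infty}^\top\Xb_{k^*:\infty})\eqsim\tfrac{\sigma^2N}{\mu^2}\sum_{i>k^*}\lambda_i^2$. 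A union bound over the $e^{-\Omega(N)}$ events, together with the fact that the cross/coupling terms discarded above only help (one keeps a constant fraction for a lower bound), finishes the proof.

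\noindent The hard part is that these are \emph{lower} bounds, so every estimate needs two-sided control: I must anti-concentrate, not just bound, the random matrix $\Ab$ and the head Gram block, and I need a matching \emph{lower} bound on the noise quadratic form $\EE[\boldsymbol{\varepsilon}^\top(\cdots)\boldsymbol{\varepsilon}]$ (which is why Assumption~\ref{assump:model_noise} must be supplemented by a lower-sided/homoskedastic version). The concentration $\Ab\eqsim\mu\Ib_N$ in the possibly infinite-dimensional regime with a heavy-tailed but trace-class (Assumption~\ref{assump:data_distribution}) eigenspectrum is the technical crux --- each summand has small operator norm only because of the flat-tail inequality defining $k^*$ --- and the head--tail interaction (the Schur-complement step) must be handled without losing constants, which is precisely where the machinery of \citep{tsigler2023benign} is invoked.
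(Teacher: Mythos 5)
You should know that the paper does not actually prove this statement: it is imported verbatim as Theorem B.2 of \citep{zou2021benefits} (itself an adaptation of Lemmas 2--3 of \citep{tsigler2023benign}), and the only work the paper does with it is the change-of-variables reduction ($\hat{\Hb}=\Hb^{1/2}\Mb^{-1}\Hb^{1/2}$, $\hat{\wb}^*=\Mb^{1/2}\wb^*$) used to obtain Theorem~\ref{theorem:precondition_ridge_fit}. Your attempt therefore supplies a proof where the paper supplies a citation, and the route you sketch is the same one the cited sources actually take: bias--variance split of the ridge estimator, head--tail split at $k^*$, concentration of the regularized tail Gram matrix $\lambda\Ib_N+\Xb_{k^*:\infty}\Xb_{k^*:\infty}^\top$ around $\mu\Ib_N$ with $\mu=\lambda+\sum_{i>k^*}\lambda_i$, the well-conditioned head Gram matrix (using that minimality of $k^*$ forces $N\lambda_{k^*}\gtrsim\mu$), and the Schur-complement argument showing the effective regularization in the head is $\mu$ rather than $\lambda$, which is exactly where the $(\mu/N)^2$ prefactor and the $N\sum_{i>k^*}\lambda_i^2/\mu^2$ variance term come from. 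Two caveats on your sketch. First, your observation that a \emph{lower} bound needs anti-concentration and a lower-sided noise condition is correct and important: the paper's Assumptions~\ref{assump:data_distribution}--\ref{assump:model_noise} (which are one-sided, upper-bound conditions) are not by themselves sufficient for this theorem; the cited result is proved for Gaussian (or independent-component sub-Gaussian) designs with noise variance bounded below, and the paper sidesteps this entirely by citing. Second, your claim that the $1-e^{-\Omega(N)}$ concentration of the tail Gram matrix follows from the fourth-moment Assumption~\ref{assump:fourth_moment} via Bernstein is too optimistic --- exponential rates need the sub-Gaussian structure assumed in \citep{bartlett2020benign,tsigler2023benign}, to whose machinery you in any case defer. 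Neither caveat invalidates the route; it matches the source the paper relies on, and it is more explicit than the paper about the hypotheses the lower bound genuinely requires.
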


\begin{proof}[Proof of Theorem~\ref{theorem:precondition_ridge_fit}]
    Recall that for a given conditioning matrix $\Mb$, the preconditioned ridge regression problem amounts to finding the optimal solution to the following least square problem:
\begin{align*}
 {\wb}_{\mathrm{ridge}}(N, \Mb;\lambda) =   \arg\min_{\wb} \|\Xb\wb - \yb\|_2^2 + \lambda\|\wb\|^2_{{\Mb}}.
\end{align*}
We define $\hat{\wb} = \Mb^{1/2} \wb$ and $\hat{\Xb} =  \Mb^{-1/2}\Xb$. Substitute these into the precondition ridge, we can see that the ridge regression problem is equivalent to solving the following the least square problem:
\begin{align*}
 \arg\min_{\hat{\wb}} \|{\hat{\Xb}\hat{\wb}} - \yb\|_2^2 + \lambda\|{\hat{\wb}}\|_{2}^2,
\end{align*}
which is of the same form as the standard ridge regression. Therefore, we can directly extend the results of ridge regression from Theorem~\ref{thm:lowerbound_ridge} with slight adaption, i.e., taking $\hat{\Hb} = \Hb^{1/2}\Mb^{-1}\Hb^{1/2}$ and $\hat{\wb}^*=\Mb^{1/2} \wb^*$. Then, we can arrive at the risk bound for precondition ridge regression in Theorem~\ref{theorem:precondition_ridge_fit}.
\end{proof}

\section{Analysis of precondition SGD with standard ridge regression}

In this appendix, we present a proof for Theorem~\ref{theorem:HI}. Recall that our proposed design of preconditioning is as follows:
\begin{align*}
    \Gb =   ( \beta \Hb + \Ib \big )^{-1}.
\end{align*}

Then, by Theorem~\ref{theorem:precondition_sgd_fit2}, the learning dynamic of the new problem can be characterized by the matrix:
\begin{align*}
    \tH = \Gb^{1/2} \Hb \Gb^{1/2}.
\end{align*}
In the following section, we denote $\lambda_1,\dots,\lambda_d$ as the sorted eigenvalue for matrix $\Hb$ with corresponding eigenvector $\vb_1,\dots,\vb_d$. 

Before diving into the proof for Theorem~\ref{theorem:HI}, we first prove a set of useful lemmas that we are going to use in the proof. 

The following lemma characterizes the effect of preconditioning on the signal strength of the problem. In particular, it shows that preconditioning does not change the total signal strength.
\begin{lemma}\label{lemma:double_effect}
    For all choice of $\Gb$, we have $$\|\tilde{\wb}^*\big\|_{\tilde{\Hb}}^2 = \|\wb^*\big\|_{\Hb}^2.$$
\end{lemma}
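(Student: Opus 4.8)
\textbf{Proof plan for Lemma~\ref{lemma:double_effect}.}

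The plan is to unwind the definitions of $\tilde{\Hb}$ and $\tilde{\wb}^*$ from Theorem~\ref{theorem:precondition_sgd_fit2} and observe that the preconditioning matrix $\Gb$ cancels completely in the quadratic form. Recall that $\tilde{\Hb} = \Gb^{1/2}\Hb\Gb^{1/2}$ and $\tilde{\wb}^* = \Gb^{-1/2}\wb^*$. The key point is that the norm $\|\cdot\|_{\tilde{\Hb}}^2$ is defined (see the Notation paragraph) by $\|\wb\|_{\Hb}^2 := \wb^\top\Hb\wb$, so I simply substitute.

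The single computation I would carry out is:
\begin{align*}
\big\|\tilde{\wb}^*\big\|_{\tilde{\Hb}}^2
&= (\tilde{\wb}^*)^\top \tilde{\Hb}\, \tilde{\wb}^*
= \big(\Gb^{-1/2}\wb^*\big)^\top \big(\Gb^{1/2}\Hb\Gb^{1/2}\big) \big(\Gb^{-1/2}\wb^*\big) \\
&= (\wb^*)^\top \Gb^{-1/2}\Gb^{1/2}\,\Hb\,\Gb^{1/2}\Gb^{-1/2}\wb^*
= (\wb^*)^\top \Hb\, \wb^*
= \big\|\wb^*\big\|_{\Hb}^2,
\end{align*}
where the third equality uses that $\Gb^{1/2}$ and $\Gb^{-1/2}$ are well-defined (since $\Gb = (\beta\Hb+\Ib)^{-1}$ is symmetric positive definite under Assumption~\ref{assump:data_distribution}) and that $\Gb^{-1/2}\Gb^{1/2} = \Ib = \Gb^{1/2}\Gb^{-1/2}$, together with the symmetry of $\Gb^{-1/2}$ so that $(\Gb^{-1/2}\wb^*)^\top = (\wb^*)^\top\Gb^{-1/2}$.

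There is essentially no obstacle here: the lemma is a one-line algebraic identity, and its only role is conceptual — it records that while preconditioning redistributes the signal across the (transformed) eigendirections, it leaves the total signal strength $\|\wb^*\|_\Hb^2$, and hence the signal-to-noise ratio $\rho^2$, invariant. The only point worth a word of care is that the manipulation requires $\Gb$ to be invertible with a well-defined symmetric square root; this holds for any positive definite $\Gb$, and in particular for the proposed design $\Gb = (\beta\Hb+\Ib)^{-1}$, so the statement indeed holds "for all choices of $\Gb$" in the relevant class.
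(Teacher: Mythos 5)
Your proposal is correct and follows essentially the same route as the paper's proof: substitute $\tilde{\wb}^* = \Gb^{-1/2}\wb^*$ and $\tilde{\Hb} = \Gb^{1/2}\Hb\Gb^{1/2}$ into the quadratic form and cancel, using the symmetry of $\Gb^{-1/2}$. Your added remark that the argument requires $\Gb$ to be symmetric positive definite (so that the square roots exist and cancel) is a sensible clarification of the implicit hypothesis behind ``for all choices of $\Gb$,'' but it does not change the argument.
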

\begin{proof}
    By definition, we have that
     $\norm{\tw^*}^2_{\tH} :=  {\tw}^{*\top} \tH \tw^* $. Substitute in the definition of $\tw^*$ and $\tH$, we have that,
     \begin{align*}
         \norm{\tw^*}^2_{\tH} &:=  {\tw}^{*\top} \tH \tw^* \\
         & =  (\Gb^{-1/2} \wb^*)^\top \Gb^{1/2} \Hb \Gb^{1/2} (\Gb^{-1/2} \wb^*) \\
         & =   \wb^{*\top} (\Gb^{-1/2})^\top \Gb^{1/2} \Hb \Gb^{1/2} (\Gb^{-1/2} \wb^*) \\
         & =   \wb^{*\top} (\Gb^{-1/2}) \Gb^{1/2} \Hb \Gb^{1/2} (\Gb^{-1/2} \wb^*) \\
         & =  \wb^{*\top} \Hb \wb^* \\
         & =   \|\wb^*\big\|_{\Hb}^2
     \end{align*}
\end{proof}

Next, we characterize the updated eigenvalue of the new transformed data covariance $\tH$. In particular, the following lemma shows that the $\tH$ and $\Hb$ share the same eigenvector basis with tractable eigenvalue.
\begin{lemma}\label{lemma:tilde_eigen}
    Suppose $\vb_i$ is the eigenvector of $\Hb$ with eigenvalue $\lambda_i$, 
    then we have that $\vb_i$ is also an eigenvector for $\tH$ with the eigenvalue $\tlambda_i$ with following expression: 
    $$\tilde{\lambda}_i =  \frac{ {\lambda}_i}{\beta {\lambda}_i + 1}.$$
\end{lemma}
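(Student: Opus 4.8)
The plan is to exploit the fact that the preconditioner $\Gb = (\beta\Hb + \Ib)^{-1}$ is a matrix function of $\Hb$, so it is diagonalized by the very same orthonormal eigenbasis $\{\vb_i\}$ as $\Hb$. First I would invoke the spectral decomposition $\Hb = \sum_i \lambda_i\vb_i\vb_i^\top$, which is available in the separable Hilbert space $\cH$ under Assumption~\ref{assump:data_distribution} (positive definite with finite trace $\Rightarrow$ compact, self-adjoint, discrete spectrum). Since $\beta\Hb + \Ib = \sum_i(\beta\lambda_i+1)\vb_i\vb_i^\top$ with every $\beta\lambda_i + 1 > 0$, the inverse and the principal square root are given by the corresponding functional calculus,
\[
\Gb = \sum_i \frac{1}{\beta\lambda_i+1}\,\vb_i\vb_i^\top, \qquad \Gb^{1/2} = \sum_i \frac{1}{\sqrt{\beta\lambda_i+1}}\,\vb_i\vb_i^\top ,
\]
so in particular $\Gb^{1/2}\vb_i = (\beta\lambda_i+1)^{-1/2}\vb_i$ and all of $\Hb$, $\Gb$, $\Gb^{1/2}$ commute.

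Given this, the computation is a one-line substitution: using $\Gb^{1/2}\vb_i = (\beta\lambda_i+1)^{-1/2}\vb_i$ and $\Hb\vb_i = \lambda_i\vb_i$,
\[
\tH\vb_i = \Gb^{1/2}\Hb\Gb^{1/2}\vb_i = (\beta\lambda_i+1)^{-1/2}\,\Gb^{1/2}\Hb\vb_i = (\beta\lambda_i+1)^{-1/2}\lambda_i\,\Gb^{1/2}\vb_i = \frac{\lambda_i}{\beta\lambda_i+1}\,\vb_i,
\]
which is exactly the claimed eigenvalue $\tlambda_i = \lambda_i/(\beta\lambda_i+1)$. I would additionally remark that the scalar map $t \mapsto t/(\beta t+1)$ is strictly increasing on $[0,\infty)$, hence the ordering $\lambda_1 \ge \lambda_2 \ge \cdots$ is preserved and $\tlambda_1 \ge \tlambda_2 \ge \cdots$; this makes the indexing in the statement consistent with the "sorted in descending order" convention used for $\tH$ throughout the paper (and it is precisely the order-preservation property that is needed later when matching the subspace splits of SGD and ridge).

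There is no real obstacle here — the only point requiring (routine) care is the justification that $\Gb^{1/2}$ acts diagonally on $\{\vb_i\}$, which is just the continuous functional calculus for the bounded self-adjoint operator $\Hb$ (elementary linear algebra in finite dimensions); everything else follows by direct substitution.
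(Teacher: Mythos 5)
Your proof is correct and follows essentially the same route as the paper's: both observe that $\Gb=(\beta\Hb+\Ib)^{-1}$ (and hence $\Gb^{1/2}$) shares the eigenbasis of $\Hb$, and then read off the eigenvalue of $\tH=\Gb^{1/2}\Hb\Gb^{1/2}$ as $\lambda_i/(\beta\lambda_i+1)$; your explicit functional-calculus computation of $\tH\vb_i$ is simply a cleaner rendering of the paper's argument. Your closing remark on order preservation is not needed for this lemma itself --- the paper handles it separately in Lemma~\ref{lemma:monotone} --- but it is consistent with how the result is used later.
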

\begin{proof}
    Let $\vb_i$ be an arbitrary eigenvector of ${\Hb}$ with eigenvalue ${\lambda}_i > 0$. Then, it is an eigenvector for $\beta {\Hb} + \Ib$ with eigenvalue $\beta {\lambda}_i + 1 > 0$. 
    Therefore, it is also an eigenvector for $\Gb =  (\beta \Hb + \Ib)^{-1}$ with eigenvalue $\frac{1}{\beta {\lambda}_i + \Ib}$. 

    Since $\Gb$ is PSD matrix, $\Gb^{1/2}$ and $\Gb$ have the same eigenvectors. Therefore we have that $\Gb^{1/2}\Hb\Gb^{1/2}$, $\Gb\Hb$ and $\Hb$ have the same eigenvectors. Therefore, $\vb_i$ is also an eigenvector of $\tH = \Gb^{1/2}\Hb\Gb^{1/2}$ with eigenvalues 
    $$\Gb^{1/2}\Hb\Gb^{1/2} = \Gb\Hb[i] = \frac{ \lambda_i}{\beta \lambda_i + 1} $$
\end{proof}

    
The lemma above shows that $\Hb$ and $\tH$ have the same eigenbasis. It also provides a mapping between the eigenvalue of $\Hb$ and $\tH$. Next, we show that the mapping between the eigenvalue of $\Hb$ and $\tH$ is monotonic and therefore does not change the order of eigenvectors and eigenvalues pair, i.e., if $\vb_i$ is the i-th leading eigenvector of $\Hb$ with eigenvalue $\lambda_i$ then it is also the i-th leading eigenvector of $\tH$ with eigenvalue $\tlambda_i$.
\begin{lemma}\label{lemma:monotone}
    Let $$f(a) = \frac{a}{\beta a+1}$$ for some $\beta > 0$. Then we have that $f(a)$ is a monotonic increasing function for $a >0$, i.e., $f(a_1) \geq f(a_2) \iff a_1 \geq a_2$.
\end{lemma}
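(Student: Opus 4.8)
The plan is to establish strict monotonicity of $f$ and then read off the claimed equivalence. The cleanest route avoids calculus entirely: since $\beta > 0$, I would rewrite
\begin{align*}
f(a) = \frac{a}{\beta a + 1} = \frac{1}{\beta}\cdot\frac{\beta a}{\beta a + 1} = \frac{1}{\beta}\left(1 - \frac{1}{\beta a + 1}\right).
\end{align*}
For $a > 0$ the affine map $a \mapsto \beta a + 1$ is strictly increasing and strictly positive, so $a \mapsto \frac{1}{\beta a + 1}$ is strictly decreasing, and hence $f$ is strictly increasing. (Equivalently, one can differentiate: $f'(a) = \frac{(\beta a + 1) - a\beta}{(\beta a + 1)^2} = \frac{1}{(\beta a + 1)^2} > 0$ for all $a > 0$, giving the same conclusion; I would probably keep both remarks since each is a one-liner.)

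Next I would convert strict monotonicity into the stated equivalence, being a bit careful with strict versus weak inequalities. From strict increase we get $a_1 > a_2 \Rightarrow f(a_1) > f(a_2)$ and trivially $a_1 = a_2 \Rightarrow f(a_1) = f(a_2)$, so $a_1 \ge a_2 \Rightarrow f(a_1) \ge f(a_2)$. For the converse, contrapositively $a_1 < a_2 \Rightarrow f(a_1) < f(a_2)$, hence $f(a_1) \ge f(a_2) \Rightarrow a_1 \ge a_2$. Combining the two directions yields $f(a_1) \ge f(a_2) \iff a_1 \ge a_2$.

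There is no genuine obstacle here; the only thing to watch is the strict-versus-weak bookkeeping in the equivalence, which is why I would prove strict monotonicity first rather than just $f' \ge 0$. Finally, I would note that this lemma combines directly with Lemma~\ref{lemma:tilde_eigen}: since $\tlambda_i = f(\lambda_i)$ with the same $f$ applied coordinatewise, the order of the $\lambda_i$ is preserved, so $\vb_i$ being the $i$-th leading eigenvector of $\Hb$ implies it is the $i$-th leading eigenvector of $\tH$ — the fact that the downstream comparison between preconditioned SGD and ridge regression will use.
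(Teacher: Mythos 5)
Your proof is correct and, at its core, matches the paper's: the paper proves the lemma by computing $f'(a) = \frac{1}{(\beta a+1)^2} > 0$ and concluding that $f$ is increasing, which is exactly your parenthetical remark. Your primary calculus-free route via $f(a) = \frac{1}{\beta}\bigl(1 - \frac{1}{\beta a + 1}\bigr)$ is a pleasant alternative, and your explicit strict-versus-weak bookkeeping to get the stated equivalence $f(a_1) \ge f(a_2) \iff a_1 \ge a_2$ is slightly more complete than the paper, which stops after establishing positivity of the derivative.
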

\begin{proof}
    By taking the derivative of $f(a)$ with respect to $a$, we get that 
    \begin{align*}
        \frac{\partial f(a)}{\partial a} &= \frac{\beta a + 1 - \beta a}{(\beta a +1)^2}\\
        &= \frac{1}{(\beta a +1)^2} \\
        &> 0
    \end{align*}
    Therefore, $f(a)$ is an increasing function with respect to $a$. 
\end{proof}

\begin{proof}[Proof of Theorem~\ref{theorem:HI}]
By Theorem~\ref{theorem:precondition_sgd_fit2}, we know the excessive risk of preconditioned SGD is given by the following, 
\begin{equation}\label{eq:sgd_risk_HI}
    \begin{split}
                 \mathrm{SGDRisk}
&\lesssim \underbrace{\frac{1}{\eta^2 N^2}\cdot\big\|\exp(-N\eta \tilde{\Hb})\tilde{\wb}^*\big\|_{\tilde{\Hb}_{0:k_1}^{-1}}^2 + \|\tilde{\wb}^*\big\|_{\tilde{\Hb}_{k_1:\infty}}^2}_{\mathrm{SGDBias}} \\
&\qquad + \underbrace{\big(\sigma^2+ \|\tw^*\|_{\tH}\big) \cdot \bigg(\frac{k_2}{N}+N\eta^2\sum_{i>k_2}\tilde{\lambda}_i^2\bigg)}_{\mathrm{SGDVariance}}.
    \end{split}
\end{equation}
where the parameter $k_1,k_2\in[d]$ can be arbitrarily chosen, $\tilde{\Hb} = \Gb^{1/2}\Hb\Gb^{1/2}$ and $\tw^* = \Gb^{-1/2}\wb^*$. Let $\tlambda_i$ denote the $i$-th eigenvalue of $\tH$. By Lemma~\ref{lemma:tilde_eigen}, we can obtain the following eigenspectrum of $\tilde{\Hb}$:
$$\tilde{\lambda}_i = \frac{\lambda_i}{\beta \lambda_i + 1}, \quad i \in [d].$$

Then recall the lower   of the risk achieved by ridge regression with parameter $\lambda$:
\begin{equation}\label{eq:ridge_risk_appendix}
    \begin{split}
        \mathrm{RidgeRisk }
\gtrsim \underbrace{\frac{\hat{\lambda}^2}{ N^2}\cdot\big\|\wb^*\big\|_{\Hb_{0:k^*}^{-1}}^2 + \|\wb^*\big\|_{\Hb_{k^*:\infty}}^2}_{\mathrm{RidgeBias }} 
 + \underbrace{\sigma^2\cdot\bigg(\frac{k^*}{N}+\frac{N}{\hat{\lambda}^2}\sum_{i>k^*}\lambda_i^2\bigg)}_{\mathrm{RidgeVariance }},
    \end{split}
\end{equation}
where $\hlambda = \lambda + \sum_{i>k^*}\lambda_i$ and $k^* = \min \{k: N \lambda_k \leq \hlambda\}$.

For the following analysis, we set $k_1,k_2 = k^*$ for the excessive risk of SGD and divide the analysis into bias and variance.

{\bf Bias.} By (\ref{eq:sgd_risk_HI}) The bias   of preconditioned SGD is given as follows,

\begin{align*}
    \mathrm{SGDBias}
&\lesssim \frac{1}{\eta^2 N^2}\cdot\big\|\exp(-N\eta \tilde{\Hb})\tilde{\wb}^*\big\|_{\tilde{\Hb}_{0:k^*}^{-1}}^2 + \|\tilde{\wb}^*\big\|_{\tilde{\Hb}_{k^*:\infty}}^2 \\
\end{align*}
From the equation above, we can observe that the bias of SGD can be decomposed into two intervals: 1) $i \leq k^*$ and 2) $i > k^*$.

We start with the second interval. For $i > k^*$, by Lemma~\ref{lemma:double_effect}, we have that,
\begin{align*}
        \mathrm{SGDBias }[k^*:\infty] & = \|\tilde{\wb}^*\big\|_{\tilde{\Hb}_{k^*:\infty}}^2 \notag \\
        & = \|\wb^*\big\|_{\Hb_{k^*:\infty}}^2 \\
        & = \mathrm{RidgeBias }[k^*:\infty].
\end{align*}
For $i \leq k^*$, because of Lemma~\ref{lemma:monotone}, the order of the eigenvalue and eigenvectors are preserved and we can decompose each term of bias as follows, 
\begin{align}
    \mathrm{SGDBias [i]} &= (\tilde{\wb}^*[i])^2\frac{1}{N^2 \eta ^2\tilde{\lambda}_i}\exp \bigg(-2\eta N\tilde{\lambda}_i\bigg)  \label{eq:sgd_risk_ _term}\\
    \shortintertext{subsitute $\tilde{\lambda}_i$ and $\tilde{\wb}^*[i]$ in (\ref{eq:sgd_risk_ _term}), we can obtain :} 
     &= (\wb^*[i])^2\frac{1}{N^2}\frac{1}{\lambda_i} \bigg (\frac{\beta\lambda_i + 1}{ \eta} \bigg )^2\exp \bigg(-2\eta N \frac{ \lambda_i}{\beta \lambda_i + 1}\bigg)  \notag\\
     &= (\wb^*[i])^2\frac{1}{N^2}\frac{1}{\lambda_i} \hat{\lambda}^2 \frac{1}{\hat{\lambda}^2} \bigg (\frac{\beta\lambda_i + 1}{ \eta} \bigg )^2\exp \bigg(-2\eta N \frac{ \lambda_i}{\beta \lambda_i + 1}\bigg)  \notag\\
     &= \mathrm{RidgeBias}[i]  \bigg (\frac{\beta\lambda_i + 1}{ \eta \hat{\lambda}} \exp \bigg(-\eta N \frac{ \lambda_i}{\beta \lambda_i + 1}\bigg)\bigg )^2  \label{eq:sgd_bias2} 
\end{align}


By  substituting the choice of $\beta = 1/\lambda_{k^*}$ in (\ref{eq:sgd_bias2}), we can obtain that
\begin{align}
     \mathrm{SGDBias [i]} &= \mathrm{RidgeBias }[i]  \bigg (\frac{(\lambda_i/\lambda_{k^*} + 1)}{ \eta \hlambda} \exp \bigg(-  \frac{ N \lambda_i}{ \lambda_i/\lambda_{k^*} + 1} \cdot \eta \bigg)\bigg )^2  \label{eq:sgd_bias3} 
  \end{align}
It is straightforward to verify by taking the derivative with respect to $\lambda_i$ that 
\begin{align*}
    \frac{(\lambda_i/\lambda_{k^*} + 1)}{ \eta \hlambda} \exp \bigg(-  \frac{ N \lambda_i}{ \lambda_i/\lambda_{k^*} + 1} \cdot \eta \bigg)
\end{align*}
monotonically decreasing for $\lambda_i \in (0,1)$. Therefore, substituting this fact into (\ref{eq:sgd_bias3}), we can obtain that 
\begin{align*}
      \mathrm{SGDBias [i]}  & = \mathrm{RidgeBias }[i]  \bigg (\frac{(\lambda_i/\lambda_{k^*} + 1)}{ \eta \hlambda} \exp \bigg(-  \frac{ N \lambda_i}{ \lambda_{k_i}/\lambda_{k^*} + 1} \cdot \eta \bigg)\bigg )^2\\
      &\leq \mathrm{RidgeBias }[i]  \bigg (\frac{(\lambda_{k^*}/\lambda_{k^*} + 1) }{  \eta 
 \hat{\lambda}} \exp \bigg(-  \frac{ N \lambda_{k^*}}{ 2} \cdot \eta \bigg)\bigg )^2 \\
      & \lesssim  \mathrm{RidgeBias }[i] \bigg (\frac{2}{  \eta \hlambda } \exp \bigg(-  \frac{ \eta  N \lambda_{k^*}}{ 2} \bigg)\bigg )^2 
\end{align*}
Now, we can divide the analysis into two cases: {\bf Case I:} $\hlambda \geq \tr(\tH)$ and {\bf Case II:}  $\hlambda \leq \tr(\tH)$.

For {\bf Case I:}, we can pick $\eta = 1/\hlambda$ and obtain that that,
\begin{align*}
        \mathrm{SGDBias [i]} &\leq  \mathrm{RidgeBias }[i] \bigg (\frac{2}{  \eta  \hlambda } \exp \bigg(-  \frac{ \eta  N \lambda_{k^*}}{ 2} \bigg)\bigg )^2  \\
        & \leq \mathrm{RidgeBias }[i] \bigg (\frac{2 \hlambda}{  \hlambda} \exp \bigg(-  \frac{ N \lambda_{k^*}}{ 2 \hlambda} \bigg)\bigg )^2\\
     & \lesssim \mathrm{RidgeBias }[i] \notag
\end{align*}


Therefore, combining the results of the two intervals above, we have that 
$$\mathrm{SGDBias } \lesssim \mathrm{RidgeBiasBoud}$$

For {\bf Case II:}, we can pick $\eta = 1/\tr(\tH)$ and obtain that that, 
\begin{align*}
        \mathrm{SGDBias [i]} &\lesssim  \mathrm{RidgeBias }[i] \bigg (\frac{2\tr(\tH)}{  \hlambda } \exp \bigg(-  \frac{ N \lambda_{k^*}}{ 2 \tr(\tH)} \bigg)\bigg )^2 \\
   \shortintertext{By condition $N \geq 2 \log(\frac{2\tr(\tH)}{\hlambda}) \frac{\tr(\tH)}{\lambda_{k^*}}$, we can obtain} 
     & \lesssim \mathrm{RidgeBias }[i] \notag
\end{align*}
Next, let's consider variance. First recall that variance of preconditioned SGD is given by,
\begin{align}
        \mathrm{SGDVariance } & = (\sigma^2+ \|\tilde{\wb}\|_{\tilde{\Hb}}^2)\cdot\bigg(\frac{k^*}{N}+N\eta^2\sum_{i>k^*}\tilde{\lambda}_i^2\bigg) \notag \\    
         & = (1+\frac{\|\tilde{\wb}\|_{\tilde{\Hb}}^2 }{ \sigma^2})\cdot \sigma^2 \bigg(\frac{k^*}{N}+N\eta^2\sum_{i>k^*}\tilde{\lambda}_i^2\bigg) \notag 
\end{align}
By Lemma~\ref{lemma:double_effect}, we have that $\|\tilde{\wb}\|_{\tilde{\Hb}}^2 = \|{\wb}\|_{{\Hb}}^2$. Substituting this fact into the equation above, we can obtain
 \begin{align*}
  \mathrm{SGDVariance } & =  (1+\frac{\|{\wb}\|_{{\Hb}}^2 }{ \sigma^2})\cdot \sigma^2 \bigg(\frac{k^*}{N}+N \eta^2\sum_{i>k^*}\tilde{\lambda}_i^2\bigg)
\end{align*}

Similar to the bias analysis, we can divide the analysis into two cases: {\bf Case I:} $\hlambda \geq \tr(\tH)$ and {\bf Case II:}  $\hlambda \leq \tr(\tH)$.

For {\bf Case I:} $\hlambda \geq \tr(\tH)$, we pick $\eta = 1/\hlambda$ as for the bias:, 
 \begin{align*}
  \mathrm{SGDVariance } & =  (1+\frac{\|{\wb}\|_{{\Hb}}^2 }{ \sigma^2})\cdot \sigma^2 \bigg(\frac{k^*}{N}+\frac{N}{\hlambda^2}\sum_{i>k^*}\tilde{\lambda}_i^2\bigg)
    \shortintertext{substitute the premise that $\frac{\|{\wb}\|_{{\Hb}}^2 }{ \sigma^2} = \Theta(1)$ and the fact that $\beta \lambda_i + 1 \geq 1$: }
    & \lesssim \Theta(1)\cdot \sigma^2 \bigg(\frac{k^*}{N}+\frac{N}{\hat{\lambda}^2}\sum_{i>k^*}\lambda_i^2\bigg)\\
          & = \Theta(1)\cdot \mathrm{RidgeVariance }\\
          & \lesssim  \mathrm{RidgeVariance }
\end{align*}

For {\bf Case II:} $\hlambda \leq \tr(\tH)$:, we can pick $\eta = 1/\tr(\tH)$ as for the bias and obtain that
\begin{align*}
 \mathrm{SGDVariance } & = (1+\frac{\|{\wb}\|_{{\Hb}}^2 }{ \sigma^2})\cdot \sigma^2 \bigg(\frac{k^*}{N}+\frac{N}{\tr(\hH)^2}\sum_{i>k^*}\tilde{\lambda}_i^2\bigg) \\
  &\leq  (1+\frac{\|{\wb}\|_{{\Hb}}^2 }{ \sigma^2})\cdot \sigma^2 \bigg(\frac{k^*}{N}+\frac{N}{\hlambda^2}\sum_{i>k^*}\tilde{\lambda}_i^2\bigg) \\
    \shortintertext{Similarly, by Lemma~\ref{lemma:double_effect}, the premise and the fact that $\beta \lambda_i + 1 \geq 1$: }
    & \lesssim \Theta(1)\cdot \sigma^2 \bigg(\frac{k^*}{N}+\frac{N}{\hat{\lambda}^2}\sum_{i>k^*}\lambda_i^2\bigg)\\
          & = \Theta(1)\cdot \mathrm{RidgeVariance }\\
          & \lesssim  \mathrm{RidgeVariance }
\end{align*}

Therefore, we have that 
$$ \mathrm{SGDVariance } \lesssim \mathrm{RidgeVariance }$$

Combining all the result above, we have that there exists an $\eta$ and $\beta$ such that 
$$ \mathrm{SGDRisk } \lesssim \mathrm{RidgeRisk }.$$
This completes the proof.

\end{proof}

\section{Analysis of preconditioned SGD with preconditioned ridge regression}
In the previous appendix, we have proved that SGD with preconditioning can indeed match the performance of the standard ridge regression. In this section, we compare the excessive risk of preconditioned SGD and preconditioned ridge. We aim to show that for a given precondition matrix $\Mb$, our proposed design of preconditioned matrix $\Gb$ for SGD with slight modification can remain comparable to the excessive risk of precondition ridge. 

First, recall that by Theorem~\ref{theorem:precondition_ridge_fit}, the key quantities that characterize the excessive risk of preconditioned ridge are given by,
$$\hw = \Mb^{1/2}\wb,$$
and
$$\hH = \Hb^{1/2}\Mb^{-1}\Hb^{1/2}.$$

 Then, recall that the proposed preconditioned matrix for SGD in this case is given by,
\begin{align*}
    \Gb = \frac{1}{\Mb} (\beta \hH + \Ib)^{-1}.
\end{align*}
with transformed covariance matrix,
\begin{align*}
    \tH = \Gb^{1/2}\Hb\Gb^{1/2}.
\end{align*}
Then, recall that $\lambda_1,\dots,\lambda_d$ are the sorted eigenvalue of $\Hb$ with respect to eigenvector $\vb_1,\dots,\vb_d$. Then, we denote $\tlambda_1,\dots,\tlambda_d$ and $\hlambda_1,\dots,\hlambda_d$ the sorted eigenvalues for $\tH$ and $\hH$ respectively. In addition, we consider a representative family of preconditioning matrices for ridge regress that share the same eigenspectrum as the data covariance matrix. We denote $\gamma_i$ to be scaling factors of $\Mb$ on $\lambda_i$. In other words, we have,
\begin{align*}
    \hlambda_i = \frac{\lambda_i}{\gamma_i}.
\end{align*}
Furthermore, we assume that $\Mb$ does not change the order of the eigenvalue for $\Hb$. In other words, we have,
\begin{align*}
    \hlambda_i \leq \hlambda_{i+1}
\end{align*}

Next, we prove a set of useful lemma.  The following lemma is similar to Lemma~\ref{lemma:double_effect}, showing that the preconditioning matrix does not affect the overall signal strength in the excessive risk of preconditioned ridge.
\begin{lemma}\label{lemma:double_effect2}
    For all choice of $\Mb$, we have $$\|\hat{\wb}^*\big\|_{\hat{\Hb}}^2 = \|\wb^*\big\|_{\Hb}^2.$$
\end{lemma}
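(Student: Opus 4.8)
The plan is to mimic the computation already carried out in Lemma~\ref{lemma:double_effect}, replacing the SGD-side objects $(\tilde\Hb,\tilde\wb^*)$ by the ridge-side objects $(\hat\Hb,\hat\wb^*)$. Recall that the preconditioned ridge problem was reduced (in the proof of Theorem~\ref{theorem:precondition_ridge_fit}) to a standard ridge problem via $\hat\wb^* = \Mb^{1/2}\wb^*$ and $\hat\Hb = \Hb^{1/2}\Mb^{-1}\Hb^{1/2}$, so these are exactly the quantities appearing in the stated identity. The key algebraic fact is that $\Mb^{1/2}$ and $\Mb^{-1/2}$ are genuine two-sided inverses (legitimate since $\Mb\succ 0$ by assumption), and that $(\Mb^{1/2})^\top = \Mb^{1/2}$ since $\Mb$, hence $\Mb^{1/2}$, is symmetric PSD.

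Concretely, I would expand the norm by definition, substitute the two definitions, use symmetry of $\Mb^{1/2}$, and then cancel the inner $\Mb^{1/2}\Mb^{-1}\Mb^{1/2}$ against each other. The display would read roughly:
\begin{align*}
    \|\hat{\wb}^*\|_{\hat{\Hb}}^2
    &= (\hat{\wb}^*)^\top \hat{\Hb}\, \hat{\wb}^*
     = (\Mb^{1/2}\wb^*)^\top \big(\Hb^{1/2}\Mb^{-1}\Hb^{1/2}\big) (\Mb^{1/2}\wb^*) \\
    &= \wb^{*\top}\Mb^{1/2}\,\Hb^{1/2}\Mb^{-1}\Hb^{1/2}\,\Mb^{1/2}\wb^*.
\end{align*}
The one subtlety worth flagging is that $\Mb^{1/2}$ and $\Hb^{1/2}$ need not commute, so one cannot simply slide $\Mb^{1/2}$ past $\Hb^{1/2}$ to form $\Mb^{1/2}\Mb^{-1}\Mb^{1/2}=\Ib$ directly. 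The cleanest fix is to observe that here $\Mb$ is assumed to share the eigenbasis of $\Hb$ (the representative family of preconditioners considered in this section), so $\Mb^{1/2}$, $\Mb^{-1}$, $\Hb^{1/2}$ all commute, and the middle product collapses to $\wb^{*\top}\Hb^{1/2}\Hb^{1/2}\wb^* = \wb^{*\top}\Hb\wb^* = \|\wb^*\|_{\Hb}^2$. Alternatively — and this works even without the commuting assumption, which is why it parallels Lemma~\ref{lemma:double_effect} verbatim — one can instead define $\hat\Hb := \Mb^{1/2}\Hb\Mb^{1/2}$-style conventions; but since the excerpt fixes $\hat\Hb = \Hb^{1/2}\Mb^{-1}\Hb^{1/2}$, I would rely on the shared-eigenbasis hypothesis stated just above the lemma.

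I expect there to be essentially no obstacle here: this is a one-line linear-algebra identity, the analogue of Lemma~\ref{lemma:double_effect}, and its only role is bookkeeping — it will later let us assert that preconditioning the ridge objective leaves the signal-to-noise quantity $\rho^2 = \|\wb^*\|_{\Hb}^2/\sigma^2$ invariant, so that the ``generalizable'' regime and the constant-SNR assumption transfer unchanged to the transformed problem $(\hat\Hb,\hat\wb^*)$. The only thing to be careful about in writing it up is to invoke the shared-eigenbasis assumption explicitly (or, equivalently, to note $\Mb^{-1/2}\Hb^{1/2} = \Hb^{1/2}\Mb^{-1/2}$ under that assumption) so that the cancellation $\Mb^{1/2}\Mb^{-1}\Mb^{1/2} = \Ib$ is rigorous rather than a commutation slip.
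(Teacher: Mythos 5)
Your proposal is correct and follows essentially the same route as the paper: expand $\|\hat\wb^*\|_{\hat\Hb}^2$ by definition, substitute $\hat\wb^*=\Mb^{1/2}\wb^*$ and $\hat\Hb=\Hb^{1/2}\Mb^{-1}\Hb^{1/2}$, and use the shared-eigenbasis (hence commuting) assumption on $\Mb$ to collapse the middle factors to $\Hb$ — the paper's line ``by assumptions of $\Mb$'' is exactly the commutation step you flag explicitly. Your version is, if anything, slightly more careful in stating where that hypothesis is used.
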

\begin{proof}
     By definition, we have that
     $\norm{\hw^*}^2_{\hH} :=  {\hw}^{*\top} \hH \hw^* $. Substitute in the definition of $\hw^*$ and $\hH$, we have that,
     \begin{align*}
         \norm{\hw^*}^2_{\hH} &:=  {\hw}^{*\top} \hH \hw^* \\
         & =  (\Mb^{1/2} \wb^*)^\top \Hb^{1/2}\Mb^{-1}\Hb^{1/2} (\Mb^{1/2} \wb^*) \\
         & =  (\Mb^{1/2} \wb^*)^\top \Hb^{1/2}\Mb^{-1/2}\Mb^{-1/2}\Hb^{1/2} (\Mb^{1/2} \wb^*) \\
        \shortintertext{by assumptions of $\Mb$, we can obtain :}
         & =  \wb^{*\top}(\Mb^{1/2})^\top \Mb^{-1/2}\Hb\Mb^{-1/2} (\Mb^{1/2} \wb^*) \\
         & = \wb^{*\top} \Hb \wb^* \\
         & =  \norm{\wb^*}^2_{\Hb}
     \end{align*}
\end{proof}

Next, we characterize the updated eigenvalue of the new transformed data covariance $\hH$ and $\tH$. In particular, the following lemma shows that the $\tH$, $\hH$ and $\Hb$ share the same eigenvector basis with tractable eigenvalue.
\begin{lemma}\label{lemma:tilde_eigen2}
 Suppose $\vb_i$ is the eigenvector of $\Hb$ with eigenvalue $\lambda_i$, 
    then we have that $\vb_i$ is also an eigenvector for $\tH$ and $\hH$ with the eigenvalue $\tlambda_i$ and $\hlambda_i$ with following expression: 
    $$\tilde{\lambda}_i =  \frac{ {\hlambda}_i}{\beta {\hlambda}_i + 1}, \quad \hlambda_i = \frac{\lambda_i}{\gamma_i} .$$
\end{lemma}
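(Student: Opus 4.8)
The plan is to exploit the fact that all of the matrices in play — $\Hb$, $\Mb$, $\hH$, $\Gb$ and $\tH$ — are simultaneously diagonalizable in the eigenbasis $\{\vb_i\}$ of $\Hb$, so that the whole statement collapses to scalar bookkeeping on eigenvalues. First I would record the standing assumptions on $\Mb$ from the surrounding text: $\Mb$ is strictly positive definite and shares the eigenbasis of $\Hb$, so $\Mb\vb_i=\gamma_i\vb_i$ with $\gamma_i>0$. Using the elementary fact that for a positive definite matrix $\Ab$ with $\Ab\vb=a\vb$ one has $\Ab^{1/2}\vb=\sqrt{a}\,\vb$ and $\Ab^{-1}\vb=a^{-1}\vb$ (the functional calculus on commuting PSD matrices), this gives $\Mb^{-1}\vb_i=\gamma_i^{-1}\vb_i$, $\Mb^{\pm 1/2}\vb_i=\gamma_i^{\pm 1/2}\vb_i$, and likewise $\Hb^{1/2}\vb_i=\sqrt{\lambda_i}\,\vb_i$, all well defined because $\lambda_i>0$ by Assumption~\ref{assump:data_distribution} and $\gamma_i>0$.

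Second, I would peel off the eigenvalues layer by layer. Since $\Hb^{1/2}$ and $\Mb^{-1}$ commute, $\hH=\Hb^{1/2}\Mb^{-1}\Hb^{1/2}$ acts on $\vb_i$ as multiplication by $\sqrt{\lambda_i}\cdot\gamma_i^{-1}\cdot\sqrt{\lambda_i}=\lambda_i/\gamma_i$, which I define to be $\hlambda_i$; that is the second claimed identity. Then $\beta\hH+\Ib$ has eigenvalue $\beta\hlambda_i+1>0$ on $\vb_i$, so $(\beta\hH+\Ib)^{-1}\vb_i=(\beta\hlambda_i+1)^{-1}\vb_i$; plugging into $\Gb=\Mb^{-1/2}(\beta\hH+\Ib)^{-1}\Mb^{-1/2}$ yields $\Gb\vb_i=\frac{1}{\gamma_i(\beta\hlambda_i+1)}\vb_i$, hence $\Gb^{1/2}\vb_i=\bigl(\gamma_i(\beta\hlambda_i+1)\bigr)^{-1/2}\vb_i$. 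Finally $\tH=\Gb^{1/2}\Hb\Gb^{1/2}$ acts on $\vb_i$ as multiplication by $\lambda_i\big/\bigl(\gamma_i(\beta\hlambda_i+1)\bigr)=\hlambda_i/(\beta\hlambda_i+1)$, which is the first claimed identity $\tlambda_i$.

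The only remaining point is to verify that the index $i$ attached to $\tlambda_i$ and $\hlambda_i$ is consistent with the sorted (non-increasing) ordering, so that the formulas really describe the $i$-th largest eigenvalues of $\hH$ and $\tH$. This follows because $\gamma\mapsto\lambda/\gamma$ preserves the order by the order-preservation assumption on $\Mb$, so $\hlambda_1\ge\cdots\ge\hlambda_d$, and $a\mapsto a/(\beta a+1)$ is strictly increasing on $a>0$ by Lemma~\ref{lemma:monotone}, so it sends $\hlambda_1\ge\cdots\ge\hlambda_d$ to $\tlambda_1\ge\cdots\ge\tlambda_d$; since these are exactly the eigenvalues of the respective matrices, the indexing matches. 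I do not anticipate a genuine obstacle: the one thing to state carefully is the ``common eigenbasis / commuting matrices'' step — it suffices to note that each of $\hH,\Gb,\tH$ is obtained from $\Hb$ and $\Mb$ by continuous spectral functions of two commuting operators, so they are all diagonal in $\{\vb_i\}$ — and to keep track of the positivity ($\lambda_i>0$, $\gamma_i>0$, $\beta\hlambda_i+1>0$) that makes every inverse and square root legitimate.
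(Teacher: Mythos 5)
Your proposal is correct and follows essentially the same route as the paper's proof: exploit that $\Hb$, $\Mb$, $\hH$, $\Gb$, $\tH$ are simultaneously diagonalizable in the eigenbasis $\{\vb_i\}$ and track eigenvalues through each composition, giving $\hlambda_i=\lambda_i/\gamma_i$ and $\tlambda_i=\hlambda_i/(\beta\hlambda_i+1)$. Your write-up is in fact slightly more careful than the paper's (you use the main-text form $\Gb=\Mb^{-1/2}(\beta\hH+\Ib)^{-1}\Mb^{-1/2}$ explicitly and verify the index/ordering consistency via the order-preservation of $\Mb$ and Lemma~\ref{lemma:monotone}), but the underlying argument is the same.
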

\begin{proof}
    Let $\vb_i$ be an arbitrary eigenvector of $\hat{\Hb}$ with eigenvalue $\hat{\lambda}_i > 0$. Then, it is an eigenvector for $\beta \hat{\Hb} + \Ib$ with eigenvalue $\beta \hat{\lambda}_i + 1 > 0$. 
    Therefore, it is also an eigenvector for $\Gb =  \frac{1}{\Mb}(\beta \Hb + \Ib)^{-1}$ with eigenvalue $\frac{1}{\gamma_i}\frac{1}{\beta \hat{\lambda} + \Ib}$. 

    Since $\Gb$ is PSD matrix, $\Gb^{1/2}$ and $\Gb$ have the same eigenvectors. Therefore we have that $\Gb^{1/2}\Hb\Gb^{1/2}$, $\Gb\Hb$ and $\Hb$ have the same eigenvectors. Furthermore, we have that the $i$-th eigenvalues of  $\Gb^{1/2}\Hb\Gb^{1/2} $ is 
    $$\Gb^{1/2}\Hb\Gb^{1/2} = \Gb\Hb[i] = \frac{ \hat{\lambda}_i}{\beta \hat{\lambda}_i + 1} $$
\end{proof}


\begin{proof}[Proof of Theorem~\ref{theorem:HM}]
The proof of this theorem proceeds similarly as the one in Theorem~\ref{theorem:HI}. We first highlight the key different quantities here. 

By Theorem~\ref{theorem:precondition_sgd_fit2}, the learning dynamic of SGD after preconditioning can be characterized by $\tilde{\Hb} = \Gb^{1/2}\Hb\Gb^{1/2}$. By Lemma~\ref{lemma:tilde_eigen2}, we can obtain the following eigenspectrum of $\tilde{\Hb}$ in this case:
$$\tilde{\lambda}_i = \frac{\hlambda_i}{\beta \hlambda_i + 1}, \quad i \in [d],$$
where $\hlambda_i$ is the i-th eigenvalue of the covariance matrix $\hH = \Hb^{1/2}\Mb^{-1}\Hb^{1/2}$.


By Theorem~\ref{theorem:precondition_ridge_fit}, we have that the lower bound of the risk achieved by ridge regression with parameter $\lambda$ and precondition matrix $\Mb$:
\begin{align}
    \mathrm{RidgeRisk} \gtrsim 
 & \underbrace{\frac{\hat{\lambda}}{ N^2}\cdot\big\|\hat{\wb}^*\big\|_{\hat{\Hb}_{0:k^*}^{-1}}^2 + \|\hat{\wb}^*\big\|_{\hat{\Hb}_{k^*:\infty}}^2}_{\mathrm{RidgeBias}}  + \underbrace{\sigma^2\cdot\bigg(\frac{k^*}{N}+\frac{N}{\hat{\lambda}^2}\sum_{i>k^*}\hat{\lambda}_i^2\bigg)}_{\mathrm{RidgeVariance}}.
\end{align}
where $\hw^* = \Mb^{1/2}\wb^*$, $\hat{\lambda} = \lambda + \sum_{i>k^*}\hlambda_i$ and $k^* = \min \{k: \hlambda_k \leq \frac{\hat{\lambda}}{N}\}$. 

Again, let's set $k_1,k_2 = k^*$ and divide the analysis into bias and variance.

{\bf Bias.} Similarly, the bias bound of preconditioned SGD is given as follows,
\begin{align*}
    \mathrm{SGDBias}
&\lesssim \frac{1}{\eta^2 N^2}\cdot\big\|\exp(-N\eta \tilde{\Hb})\tilde{\wb}^*\big\|_{\tilde{\Hb}_{0:k^*}^{-1}}^2 + \|\tilde{\wb}^*\big\|_{\tilde{\Hb}_{k^*:\infty}}^2 \\
\end{align*}
We decompose the bias of SGD into two intervals: 1) $i \leq k^*$ and 2) $i > k^*$.
We start with the second interval. For $i > k^*$, by Lemma~\ref{lemma:double_effect}, we have that,
\begin{align*}
        \mathrm{SGDBias}[k^*:\infty] & = \|\tilde{\wb}^*\|_{\tilde{\Hb}_{k^*:\infty}}^2 \notag \\
        & = \|\wb^*\big\|_{\Hb_{k^*:\infty}}^2 \\
\end{align*}
Similarly, by Lemma~\ref{lemma:double_effect2}, we have that 
\begin{align*}
        \mathrm{RidgeBias}[k^*:\infty] & = \|\hw^*\|_{{\hH}_{k^*:\infty}}^2 \notag \\
        & = \|\wb^*\big\|_{\Hb_{k^*:\infty}}^2 \\
\end{align*}
Therefore, we have $$  \mathrm{SGDBias}[k^*:\infty] = \mathrm{RidgeBias}[k^*:\infty].$$

For $i \leq k^*$, by Lemma~\ref{lemma:monotone} and the premise that $\Mb$ does not alter the order of eigenvalue, the order of $\tlambda_i$ is the same as $\lambda_i$. We can decompose each term of bias bound as follows, 
\begin{align}
    \mathrm{SGDBias[i]} &= (\tilde{\wb}^*[i])^2\frac{1}{N^2 \eta ^2\tilde{\lambda}_i}\exp \bigg(-2\eta N\tilde{\lambda}_i\bigg)  \label{eq:sgd_risk_bound_term}\\
    \shortintertext{subsitute $\tilde{\lambda}_i$ and $\tilde{\wb}^*[i]$ in (\ref{eq:sgd_risk_bound_term}), we can obtain :} 
     &= (\wb^*[i])^2 \gamma_i \frac{1}{N^2}\frac{1}{\lambda_i/\gamma_i} \bigg (\frac{\beta\hlambda_i + 1}{\eta} \bigg )^2\exp \bigg(-2\eta N\frac{\lambda_i}{\gamma_i( \beta \hlambda_i + 1)}\bigg) \notag \\
     &= (\hat{\wb}^*[i])^2\frac{1}{N^2}\frac{1}{{\hlambda_i}} \hat{\lambda}^2 \frac{1}{\hat{\lambda}^2} \bigg (\frac{\beta\hlambda_i + 1}{\eta} \bigg )^2\exp \bigg(-2\eta N\frac{\hlambda_i}{( \beta \hlambda_i + 1)}\bigg)  \notag\\
     &= \mathrm{RidgeBias}[i]  \bigg (\frac{\beta\hlambda_i + 1}{\eta \hat{\lambda}} \exp \bigg(-2\eta N\frac{\hlambda_i}{( \beta \hlambda_i + 1)}\bigg )^2  \label{eq:presgd_bias1}
\end{align}

 By  substituting the choice of $\beta = 1/\hlambda_{k^*}$ in (\ref{eq:presgd_bias1}), we can obtain that
\begin{align}
     \mathrm{SGDBias[i]} &= \mathrm{RidgeBias}[i]  \bigg (\frac{(\hlambda_i/\hlambda_{k^*} + 1)}{ \eta \hlambda} \exp \bigg(-  \frac{ N \hlambda_i}{ \hlambda_i/\hlambda_{k^*} + 1} \cdot \eta  \bigg)\bigg )^2  \label{eq:presgd_bias2} 
  \end{align}
Similarly, we can check by taking derivative with respect to $\hlambda_i$ that,
\begin{align*}
    (\frac{(\hlambda_i/\hlambda_{k^*} + 1)}{ \eta \hlambda} \exp \bigg(-  \frac{ N \hlambda_i}{ \hlambda_i/\hlambda_{k^*} + 1} \cdot \eta  \bigg),
\end{align*}
is monotonically decreasing for $\hlambda_i \in (0,1)$. 
Substitute this fact  into (\ref{eq:presgd_bias2}), we can obtain,
\begin{align*}
      \mathrm{SGDBias[i]}  & = \mathrm{RidgeBias}[i]  \bigg (\frac{(\hlambda_i/\hlambda_{k^*} + 1)}{ \eta \hlambda} \exp \bigg(-  \frac{ N \hlambda_i}{ \hlambda_i/\hlambda_{k^*} + 1} \cdot \eta  \bigg)\bigg )^2 \\
    & \leq \mathrm{RidgeBias}[i]  \bigg (\frac{(\hlambda_{k^*}/\hlambda_{k^*} + 1)}{ \eta \hlambda} \exp \bigg(-  \frac{ N \hlambda_i}{ \hlambda_{k^*}/\hlambda_{k^*} + 1} \cdot \eta  \bigg)\bigg )^2 \\
      & \lesssim  \mathrm{RidgeBias}[i] \bigg (\frac{1}{  \eta \hlambda} \exp \bigg(-  \frac{ \eta   N \hlambda_{k^*}}{ 2} \bigg)\bigg )^2 
\end{align*}
Now, we can divide the analysis into two cases: {\bf Case I:} $\hlambda \geq \tr(\tH)$ and {\bf Case II:}  $\hlambda \leq \tr(\tH)$.

For {\bf Case I:}, we can pick $\eta = 1/\hlambda$ and obtain that that,
\begin{align*}
        \mathrm{SGDBias[i]} &\leq  \mathrm{RidgeBias}[i] \bigg (\frac{2}{  \eta  \hlambda } \exp \bigg(-  \frac{ \eta  N \hlambda_{k^*}}{ 2} \bigg)\bigg )^2  \\
        & \leq \mathrm{RidgeBias}[i] \bigg (\frac{2 \hlambda}{  \hlambda } \exp \bigg(-  \frac{  N \hlambda_{k^*}}{ 2 \hlambda} \bigg)\bigg )^2\\
     & \lesssim \mathrm{RidgeBias}[i] \notag
\end{align*}

Therefore, combining the results of the two intervals above, we have that 
$$\mathrm{SGDBias} \lesssim \mathrm{RidgeBias}$$

For {\bf Case II:}, we can pick $\eta = 1/\tr(\tH)$ and obtain that that, 
\begin{align*}
        \mathrm{SGDBias[i]} &\leq  \mathrm{RidgeBias}[i] \bigg (\frac{\tr(\tH)}{  \hlambda } \exp \bigg(-  \frac{ N \hlambda_{k^*}}{ 2 \tr(\tH)} \bigg)\bigg )^2 \\
   \shortintertext{By condition $N \geq 2 \log(\frac{2\tr(\tH)}{\hlambda}) \frac{\tr(\tH)}{\hlambda_{k^*}}$, we can obtain} 
     & \lesssim \mathrm{RidgeBias}[i] \notag
\end{align*}
Next, let's consider variance.
\begin{align}
        \mathrm{SGDVariance} & = (\sigma^2+ \|\tilde{\wb}\|_{\tilde{\Hb}}^2)\cdot\bigg(\frac{k^*}{N}+N\eta^2\sum_{i>k^*}\tilde{\lambda}_i^2\bigg) \notag \\    
         & = (1+\frac{\|\tilde{\wb}\|_{\tilde{\Hb}}^2 }{ \sigma^2})\cdot \sigma^2 \bigg(\frac{k^*}{N}+N\eta^2\sum_{i>k^*}\tilde{\lambda}_i^2\bigg) \notag 
\end{align}

Again, by Lemma~\ref{lemma:double_effect}, we have that $\|\tilde{\wb}\|_{\tilde{\Hb}}^2 = \|{\wb}\|_{{\Hb}}^2$. Substituting this fact into the equation above, we can obtain
 \begin{align*}
  \mathrm{SGDVariance } & =  (1+\frac{\|{\wb}\|_{{\Hb}}^2 }{ \sigma^2})\cdot \sigma^2 \bigg(\frac{k^*}{N}+N \eta^2\sum_{i>k^*}\tilde{\lambda}_i^2\bigg)
\end{align*}

Similar to the bias analysis, we can divide the analysis into two cases: {\bf Case I:} $\hlambda \geq \tr(\tH)$ and {\bf Case II:}  $\hlambda \leq \tr(\tH)$.

For {\bf Case I:} $\hlambda \geq \tr(\tH)$, we pick $\eta = \hlambda$ as for the bias:, 
\begin{align*}
 \mathrm{SGDVariance} & =  (1+\frac{\|{\wb}\|_{{\Hb}}^2 }{ \sigma^2})\cdot \sigma^2 \bigg(\frac{k^*}{N}+\frac{N}{\hlambda^2}\sum_{i>k^*}\tilde{\lambda}_i^2\bigg) \\
    \shortintertext{substitute the premise that $\frac{\|{\wb}\|_{{\Hb}}^2 }{ \sigma^2} = \Theta(1)$ and the fact that $\beta \hlambda_i + 1 \geq 1$: }
    & \lesssim \Theta(1)\cdot \sigma^2 \bigg(\frac{k^*}{N}+\frac{N}{\hat{\lambda}^2}\sum_{i>k^*}\hlambda_i^2\bigg)\\
          & = \Theta(1)\cdot \mathrm{RidgeVariance}\\
          & \lesssim  \mathrm{RidgeVariance}
\end{align*}

For {\bf Case II:} $\hlambda \leq \tr(\tH)$:, we can pick $\eta = 1/\tr(\tH)$ as for the bias and obtain that
\begin{align*}
 \mathrm{SGDVariance} & = (1+\frac{\|{\wb}\|_{{\Hb}}^2 }{ \sigma^2})\cdot \sigma^2 \bigg(\frac{k^*}{N}+\frac{N}{\tr(\hH)^2}\sum_{i>k^*}\tilde{\lambda}_i^2\bigg) \\
  &\leq  (1+\frac{\|{\wb}\|_{{\Hb}}^2 }{ \sigma^2})\cdot \sigma^2 \bigg(\frac{k^*}{N}+\frac{N}{\hlambda^2}\sum_{i>k^*}\tilde{\lambda}_i^2\bigg) \\
          & \lesssim  \mathrm{RidgeVariance}
\end{align*}

Therefore, we have that 
$$ \mathrm{SGDVariance} \lesssim \mathrm{RidgeVariance}$$

Combining all the results above, we have that there exists an $\eta$ such that 
$$ \mathrm{SGDRisk} \lesssim \mathrm{RidgeRisk}.$$

\end{proof}

 

\section{Analysis of precondition SGD with empirical estimation}
In the previous appendix, we have shown that SGD with preconditioning can consistently match ridge regression with and without preconditioning. In the previous proof, we assume access to the precise information of the data covariance matrix $\Hb$.

In this section, we consider the practical scenarios in which we do not have $\Hb$ but are allowed to access a set of unlabelled data $\{\tilde{x}_i\}_{i=1}^M$, which is sampled from the same distribution as the training data points. We consider the following estimation of the preconditioning matrix $\Gb$, 
\begin{align*}
\tilde \Gb = (\beta\tilde\Xb\tilde\Xb^\top+\Ib)^{-1}.
\end{align*}
where $\tilde{\Xb} = [\tilde{\xb}_1,...,\tilde{\xb}_M] \in \mathbb{R}^{d \times M}$. Then the transformed data covariance matrix is given by
\begin{align*}
\tilde \Hb = \tilde\Gb^{1/2}\Hb\tilde\Gb^{1/2}.
\end{align*}
To conduct a similar analysis as in the theoretical setting, we will then need to characterize the eigenvalues of $\tilde \Hb$. We denote $\mu_1,\dots,\mu_d$ as the eigenvalues for $\tH$ in descending order with corresponding eigenvector $\tv_1,\dots,\tv_d$. We start with proving a set of useful lemmas that are going to be used in the proof.


The following lemma provides the lower bound of the $k$-th largest eigenvalue of $\tilde \Hb$ for any $k=o(M)$.
\begin{lemma}\label{lemma:mu_lower}
Let $\tilde \Hb = \tilde\Gb^{1/2}\Hb\tilde\Gb^{1/2}$ and $\mu_1,\dots,\mu_d$ be its sorted eigenvalues in descending order, we have for any $k=o(M)$, with probability at least $1-\exp(-\Omega(M))$,  it holds that
\begin{align*}
\mu_k \ge \frac{1}{8\beta M + \lambda_k^{-1}}.
\end{align*}
\end{lemma}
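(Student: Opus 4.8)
### Proof Proposal

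\textbf{Overall strategy.} The key observation is that $\tilde{\Gb} = (\beta \tilde{\Xb}\tilde{\Xb}^\top + \Ib)^{-1}$ and $\Hb$ do \emph{not} share an eigenbasis once $\tilde{\Xb}\tilde{\Xb}^\top \neq M\Hb$, so we cannot read off the eigenvalues of $\tilde{\Hb} = \tilde{\Gb}^{1/2}\Hb\tilde{\Gb}^{1/2}$ directly as in Lemma~\ref{lemma:tilde_eigen}. Instead I will lower-bound $\mu_k$ via the Courant–Fischer min-max characterization: $\mu_k = \max_{\dim(V)=k} \min_{\ub \in V, \|\ub\|=1} \ub^\top \tilde{\Hb}\ub$. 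The plan is to exhibit a good $k$-dimensional test subspace $V$ and show that $\ub^\top \tilde{\Hb}\ub = \ub^\top \tilde{\Gb}^{1/2}\Hb\tilde{\Gb}^{1/2}\ub$ is uniformly bounded below on its unit sphere. The natural candidate is $V = \tilde{\Gb}^{-1/2}(\text{span of top-}k\text{ eigenvectors of }\Hb)$, or equivalently work with the substitution $\zb = \tilde{\Gb}^{1/2}\ub$; then $\ub^\top \tilde{\Hb}\ub = \zb^\top \Hb \zb$ while the constraint $\|\ub\|=1$ becomes $\|\tilde{\Gb}^{-1/2}\zb\|=1$, i.e. $\zb^\top(\beta\tilde{\Xb}\tilde{\Xb}^\top + \Ib)\zb = 1$.

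\textbf{Key steps.} First I would reduce to a matrix concentration statement: with probability $1-\exp(-\Omega(M))$, the empirical (uncentered) second-moment matrix restricted to the top eigendirections of $\Hb$ concentrates, in particular $\tilde{\Xb}\tilde{\Xb}^\top \preceq c M \Hb + (\text{lower-order})$ on the relevant subspace, for some absolute constant $c$ (this is where the Bounded Fourth-order Moment assumption, Assumption~\ref{assump:fourth_moment}, enters, controlling the operator-norm deviation of $\frac1M\tilde{\Xb}\tilde{\Xb}^\top$ from $\Hb$ after projection; the $k = o(M)$ hypothesis ensures the effective dimension is small enough for the sub-exponential tail). Second, on the top-$k$ eigenspace $V_k := \mathrm{span}(\vb_1,\dots,\vb_k)$ of $\Hb$, for any unit $\zb \in V_k$ we have $\zb^\top \Hb \zb \ge \lambda_k$, and $\zb^\top(\beta \tilde{\Xb}\tilde{\Xb}^\top + \Ib)\zb \le \beta \cdot 8 M \lambda_k \cdot \|\zb\|^2 /(\text{const}) + \|\zb\|^2$ — more carefully, using $\zb \in V_k$ and the concentration bound one gets $\zb^\top \tilde{\Xb}\tilde{\Xb}^\top \zb \lesssim M\lambda_k\|\zb\|^2$. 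Third, normalizing: set $\ub = \zb/\sqrt{\zb^\top(\beta\tilde{\Xb}\tilde{\Xb}^\top+\Ib)\zb}$, so $\ub$ is a legitimate unit test vector and
\[
\ub^\top \tilde{\Hb}\ub = \frac{\zb^\top \Hb \zb}{\zb^\top(\beta\tilde{\Xb}\tilde{\Xb}^\top + \Ib)\zb} \ge \frac{\lambda_k}{8\beta M \lambda_k + 1} = \frac{1}{8\beta M + \lambda_k^{-1}},
\]
where the constant $8$ in the statement absorbs the concentration slack. Finally, since $\{\tilde{\Gb}^{1/2}\zb : \zb \in V_k\}$ is a $k$-dimensional subspace (as $\tilde{\Gb}^{1/2}$ is invertible), Courant–Fischer gives $\mu_k \ge \min$ over this subspace's unit sphere $\ge 1/(8\beta M + \lambda_k^{-1})$.

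\textbf{Main obstacle.} The delicate point is Step~1–2: making the concentration of $\tilde{\Xb}\tilde{\Xb}^\top$ \emph{uniform} over the subspace $V_k$ with the right constant and with failure probability $\exp(-\Omega(M))$ rather than something weaker. One must show $\sup_{\zb \in V_k, \|\zb\|=1} \zb^\top(\tfrac1M\tilde{\Xb}\tilde{\Xb}^\top - \Hb)\zb$ is small relative to $\lambda_k$; a naive union bound over an $\epsilon$-net of $V_k$ costs a factor $e^{O(k)}$, which is harmless precisely because $k = o(M)$, but the per-point sub-exponential tail (Bernstein-type, needing the fourth-moment control of Assumption~\ref{assump:fourth_moment} to bound the variance proxy by $\alpha \lambda_k \tr(\Hb)$-type quantities) has to be tight enough. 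I expect the bulk of the work — and the reason the constant is $8$ and not something cleaner — to be in carefully choosing the net granularity and invoking a matrix Bernstein / sub-exponential concentration inequality on the projected samples $\Pb_{V_k}\tilde{\xb}_i$, then propagating the resulting multiplicative error through the Rayleigh quotient bound above.
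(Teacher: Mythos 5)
Your overall route---Courant--Fischer with a test subspace built from the top-$k$ eigenspace of $\Hb$, rewriting $\ub^\top\tilde\Hb\ub$ as a Rayleigh quotient with denominator $\zb^\top(\beta\tilde\Xb\tilde\Xb^\top+\Ib)\zb$, and an $\epsilon$-net over a $k$-dimensional sphere using $k=o(M)$---is essentially the paper's argument (the paper phrases it through $\Kb=\Hb^{1/2}\tilde\Gb\Hb^{1/2}$ and upper-bounds the $k$-th smallest eigenvalue of $\Kb^{-1}=\beta\Zb\Zb^\top+\Hb^{-1}$ with $\Zb=\Hb^{-1/2}\tilde\Xb$). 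However, your key concentration target is wrong: you claim that uniformly over unit $\zb\in V_k=\mathrm{span}(\vb_1,\dots,\vb_k)$ one has $\zb^\top\tilde\Xb\tilde\Xb^\top\zb\lesssim M\lambda_k\|\zb\|^2$. This is false whenever the leading spectrum is not flat: for $\zb=\vb_1$ the expectation of $\zb^\top\tilde\Xb\tilde\Xb^\top\zb$ is $M\lambda_1$, which can exceed $M\lambda_k$ by an arbitrarily large factor (e.g.\ $\lambda_i=i^{-2}$ with $k$ large), so no absolute-constant version of that bound holds with high probability, and your plan of bounding numerator and denominator separately (``numerator $\ge\lambda_k$, denominator $\le 8\beta M\lambda_k+1$'') does not go through. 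The correct statement is the \emph{relative} bound $\zb^\top\tilde\Xb\tilde\Xb^\top\zb\le 8M\,\zb^\top\Hb\zb$; equivalently, after whitening ($\ub=\Hb^{1/2}\zb/\|\Hb^{1/2}\zb\|\in V_k$, since $V_k$ is invariant under $\Hb^{1/2}$) it reads $\ub^\top\Zb\Zb^\top\ub\le 8M$, which is exactly what the paper proves via a $\chi^2(M)$ tail plus a net on the $k$-dimensional sphere. One then finishes not with separate bounds but by monotonicity of $a\mapsto a/(8\beta Ma+1)$ combined with $\zb^\top\Hb\zb\ge\lambda_k$, which yields the ratio lower bound $1/(8\beta M+\lambda_k^{-1})$.

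A secondary gap is the source of the tail: Assumption~\ref{assump:fourth_moment} alone cannot deliver the $\exp(-\Omega(M))$ failure probability you invoke, since fourth-moment control gives only polynomial concentration, and a Bernstein bound would additionally require sub-exponential control of $(\zb^\top\tilde\xb_i)^2$ that the assumption does not supply. The paper gets the exponential rate by using that the whitened columns of $\Zb$ are standard Gaussian, so that $\ub^\top\Zb\Zb^\top\ub\sim\chi^2(M)$ for each fixed $\ub$; your argument needs this (or at least sub-Gaussian whitened data), not merely bounded fourth moments.
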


\begin{proof}
First, it is easy to see that the matrix $\tilde \Hb$ exhibits the same eigenvalues as $\Kb = \Hb^{1/2}\tilde\Gb\Hb^{1/2}$. Then we will resort to studying the eigenvalues of $\Kb$. In particular, we can get
\begin{align*}
\Kb &= \Hb^{1/2}\tilde\Gb\Hb^{1/2} \\
&= \Hb^{1/2}(\beta\tilde\Xb\tilde\Xb^\top+\Ib)^{-1}\Hb^{1/2} \\
&=(\beta\Hb^{-1/2}\tilde\Xb\tilde\Xb^\top\Hb^{-1/2}+\Hb^{-1})^{-1}.
\end{align*}
Therefore, proving the lower bound of the $k$-th largest eigenvalue of $\Kb$ is equivalent to proving the upper bound of the $k$-th smallest eigenvalue of $\Kb^{-1}$, i.e., $\beta\Hb^{-1/2}\tilde\Xb\tilde\Xb^\top\Hb^{-1/2}+\Hb^{-1}$. First, note that $\Zb:=\Hb^{-1/2}\tilde\Xb$ is the whitened version of $\tilde \Xb$, which transforms $\Xb$ back to the standard Gaussian random matrix with $\Zb\sim \mathbb N(0,1)$. Then in the following, we will focus on studying the eigenvalues of $\beta\Zb\Zb^\top + \Hb^{-1}$.

In particular, the upper bound of the $k$-th smallest eigenvalue of $\Kb^{-1}$ can be proved based on the following identity:
\begin{align*}
\mu_{d-k+1}(\Kb^{-1}) = \inf_{\cS\in\RR^d, \mathrm{rank}(\cS)=k}\max_{\vb\in\cS,\|\vb\|_2=1}\vb^\top\Kb^{-1}\vb.
\end{align*}
The above identity basically implies that the $k$-th smallest eigenvalue of a matrix can be formulated as finding a rank-$k$ subspace in which the largest eigenvalue is minimized. Therefore, a feasible upper bound on $\mu_{d-k+1}(\Kb^{-1})$ can be obtained by finding a rank-$k$ subspace and then calculating the largest eigenvalue therein. In particular, we consider the subspace spanned by the top-$k$ eigenvectors of $\Hb$, denoted by $\cS_k = \mathrm{span}\{\vb_1,\dots,\vb_k\}$. Then, for any unit vector $\ub\in\cS_k$, we have
\begin{align*}
\ub^\top\Kb^{-1}\ub \le \beta \ub^\top\Zb\Zb^\top\ub + \ub^\top\Hb^{-1}\ub\le  \beta \ub^\top\Zb\Zb^\top\ub +\lambda_k^{-1},
\end{align*}
where the last inequality holds since the largest eigenvalue of $\Hb^{-1}$ in the space $\cS_k$ is $\lambda_k^{-1}$. Then, we will seek the upper bound of $ \ub^\top\Zb\Zb^\top\ub$.
Note that all columns of $\Zb$ are i.i.d. random Gaussian vector, therefore we can immediately conclude that for any fixed $\ub$, 
\begin{align*}
\ub^\top\Zb\Zb^\top\ub = \sum_{i=1}^M(\ub^\top\zb_i)^2 \sim \chi^2(M).
\end{align*}
Then by standard tail bound of a Chi-square random variable, we have with probability at least $1-\exp(-Mt^2/8)$, it holds that $\ub^\top\Zb\Zb^\top\ub-M\le tM$. Picking $t=1$, we can obtain 
\begin{align}\label{eq:bound_fixedu}
\PP[\ub^\top\Zb\Zb^\top\ub\ge 2M]\le e^{-M/8}. 
\end{align}

Then  consider an $\epsilon$-net on the space $\cS_k$, denoted as $\cN_\epsilon$, which satisfies $|\cN_\epsilon|= O(1/\epsilon^k)$. Then applying similar proof of Lemma 25 in \cite{bartlett2020benign}, for a sufficiently small $\epsilon$, we can obtain that
\begin{align*}
\max_{\ub\in\cS_k}\ub^\top\Zb\Zb^\top\ub\le (1-c\cdot\epsilon)^{-1}\cdot \max_{\ub\in\cN_\epsilon}\ub^\top\Zb\Zb^\top\ub.
\end{align*}
for some constant $c$.
Now we can pick $\epsilon=1/(4c)$ so that $|\cN_\epsilon|=e^{O(k)}$, then applying the high probability bound for the fixed $\ub$ in \eqref{eq:bound_fixedu} together with union bound over $\cN_\epsilon$, we can obtain that
\begin{align*}
\max_{\ub\in\cS_k}\ub^\top\Zb\Zb^\top\ub\le 4\cdot \max_{\ub\in\cN_\epsilon}\ub^\top\Zb\Zb^\top\ub\le 8M
\end{align*}
with probability at least $1- |\cN_\epsilon|\cdot e^{-M/8} = 1- e^{-\Omega(M)}$, where we use the fact that $k=o(M)$. This immediately implies that with probability at least $1-e^{\Omega(M)}$, we have
\begin{align*}
\mu_{d-k+1}(\Kb^{-1})\le 8\beta n + \lambda_k^{-1}.
\end{align*}
This immediately leads to the lower bound of $\mu_k(\Kb)$, which completes the proof.

\end{proof}

The next lemma provides a relation between $\mu_i$ and $\lambda_i$. In particular, it shows that $\lambda_i$ is a natural upper bound for $\mu_i$.
\begin{lemma}\label{lemma:eigen_value_rel}
    Let $\mu_1,\dots, \mu_d$ be the sorted eigenvalues of
    $\tH = \tilde\Gb^{1/2}\Hb\tilde\Gb^{1/2}$ and let $\lambda_1, \dots, \lambda_d$ be the sorted eigenvalues of $\Hb$.
    Then, we have that $\forall i,$
    \begin{align*}
       \mu_i \leq \lambda_i.
    \end{align*}
\end{lemma}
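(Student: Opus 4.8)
The plan is to reduce the claim to a Loewner (positive–semidefinite) ordering and then invoke Weyl's monotonicity principle for eigenvalues. The driving observation is that the estimated preconditioner is a contraction: since $\beta\ge 0$ we have $\beta\tilde\Xb\tilde\Xb^\top\succeq\mathbf 0$, hence $\beta\tilde\Xb\tilde\Xb^\top+\Ib\succeq\Ib$, and inverting a positive definite matrix reverses the order, so $\mathbf 0\prec\tilde\Gb=(\beta\tilde\Xb\tilde\Xb^\top+\Ib)^{-1}\preceq\Ib$.

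The subtlety, and the reason a one–line argument does not work, is that $\tilde\Gb\preceq\Ib$ does \emph{not} imply $\tilde\Gb^{1/2}\Hb\tilde\Gb^{1/2}\preceq\Hb$: conjugation by a contraction need not preserve the Loewner order when the two factors fail to commute. To circumvent this I would pass to a matrix with the same spectrum but a better form for conjugation. Set $\Cb:=\tilde\Gb^{1/2}\Hb^{1/2}$; then $\tH=\tilde\Gb^{1/2}\Hb\tilde\Gb^{1/2}=\Cb\Cb^\top$, whereas $\Hb^{1/2}\tilde\Gb\Hb^{1/2}=\Cb^\top\Cb$. Since $\Cb\Cb^\top$ and $\Cb^\top\Cb$ always share the same eigenvalues with multiplicities (both are $d\times d$), we get $\mu_i=\mu_i(\tH)=\mu_i\big(\Hb^{1/2}\tilde\Gb\Hb^{1/2}\big)$ for every index $i$.

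Now conjugation by the \emph{fixed} matrix $\Hb^{1/2}$ does preserve the Loewner order, so from $\tilde\Gb\preceq\Ib$ we obtain $\Hb^{1/2}\tilde\Gb\Hb^{1/2}\preceq\Hb^{1/2}\Ib\Hb^{1/2}=\Hb$. Finally, Weyl's monotonicity theorem (equivalently, the Courant–Fischer min–max characterization) states that $\Ab\preceq\Bb$ forces $\mu_i(\Ab)\le\mu_i(\Bb)$ for all $i$ when eigenvalues are sorted in decreasing order; applying this with $\Ab=\Hb^{1/2}\tilde\Gb\Hb^{1/2}$ and $\Bb=\Hb$ yields $\mu_i\le\lambda_i$, as required. (The same argument, together with Lemma~\ref{lemma:mu_lower}, will presumably be used to pin down $\mu_i$ from both sides in the downstream estimation proofs.)

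I do not expect a genuine obstacle here: the whole argument rests on three elementary facts — the contraction bound $\tilde\Gb\preceq\Ib$, the $\Cb\Cb^\top$ versus $\Cb^\top\Cb$ spectral identity, and Weyl monotonicity. The only place where care is genuinely needed is the non-commutativity point flagged above, which is exactly why the reduction to $\Hb^{1/2}\tilde\Gb\Hb^{1/2}$ is the correct move rather than attempting to bound $\tilde\Gb^{1/2}\Hb\tilde\Gb^{1/2}$ directly.
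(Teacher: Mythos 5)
Your proposal is correct and follows essentially the same route as the paper's proof: both pass from $\tH=\tilde\Gb^{1/2}\Hb\tilde\Gb^{1/2}$ to the isospectral matrix $\Hb^{1/2}\tilde\Gb\Hb^{1/2}$, establish the Loewner bound $\Hb^{1/2}\tilde\Gb\Hb^{1/2}\preceq\Hb$, and conclude via Courant--Fischer/Weyl monotonicity. The only difference is in the middle step, where you derive the ordering from the contraction $\tilde\Gb\preceq\Ib$ together with congruence monotonicity, whereas the paper obtains it through the Woodbury identity applied to $(\beta\Zb\Zb^\top+\Hb^{-1})^{-1}$; your version is a touch more elementary but proves the same inequality.
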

\begin{proof}
Similarly to the proof of Lemma~\ref{lemma:mu_lower}, we can study the eigenvalues of $\Kb = \Hb^{1/2}\tilde\Gb\Hb^{1/2}$, as it has the same eigenspectrum as $\tilde\Gb^{1/2}\Hb \tilde\Gb^{1/2}$. Similarly, we can obtain,
\begin{align*}
\Kb = \Hb^{1/2}\tilde\Gb\Hb^{1/2} = \Hb^{1/2}(\beta\tilde\Xb\tilde\Xb^\top+\Ib)^{-1}\Hb^{1/2} = (\beta\Hb^{-1/2}\tilde\Xb\tilde\Xb^\top\Hb^{-1/2}+\Hb^{-1})^{-1}.
\end{align*}
Again, let $\Zb:=\Hb^{-1/2}\tilde\Xb$. Then, by Woodbury identity, we have,
\begin{align*}
    \Kb = (\beta \Zb\Zb^{\top} + \Hb^{-1})^{-1}  = \Hb - \Hb\Zb(\Ib+\Zb\Hb\Zb^{\top})^{-1}\Zb^{\top}\Hb  \preceq \Hb
\end{align*}
Therefore, we have that $\Kb \preceq \Hb$ and this means that for $\xb$, we have that 
\begin{align}\label{eq:hk_rel}
    \xb^{\top}\Kb \xb \preceq  \xb^{\top}\Hb \xb 
\end{align}
Courant-Fischer theorem states that the $i$-th eigenvalue of an arbitrary matrix is the minimum Rayleigh quotient of the largest subspace of dimension $i$.
Therefore, by Courant-Fischer theorem, we can write the $i$-th eigenvalue $\mu_i$ of matrix $\Kb$ as,
\begin{align}\label{eq:CF_eign}
    \mu_i = \max_{S:\mathrm{dim}(S) = i} \min_{\xb \in S / \{0\}} \frac{\xb^{\top}\Kb\xb}{\xb^{\top}\xb}. 
\end{align}
Let $S'$ be a subspace of dimension $i$ for which the maximum in (\ref{eq:CF_eign}) is attained. Then, combining (\ref{eq:hk_rel}), we can obtain that
\begin{align*}
     \mu_i &= \min_{\xb \in S' /\{0\}} \frac{\xb^{\top}\Kb\xb}{\xb^{\top}\xb} \\
     &\leq \min_{\xb \in S'/ \{0\}} \frac{\xb^{\top}\Hb\xb}{\xb^{\top}\xb}\\
     &\leq \max_{S:\mathrm{dim}(S) = i} \min_{\xb \in S / \{0\}} \frac{\xb^{\top}\Hb\xb}{\xb^{\top}\xb}\\
     &= \lambda_i.
\end{align*}
This completes the proof.
\end{proof}

The next lemma provides a bound on the trace of the transformed $\tH$ with the estimated $\tG$. 
\begin{lemma}\label{lemma:trace_bound_est}
Suppose  
$$\tilde{\Gb} = (\beta \tilde{\Xb}\tilde{\Xb}^\top + \Ib)^{-1},$$
then, with probability at least $1 - \exp(-\Omega(M))$, we have
\begin{align*}
\tr (\tilde{\Gb}  \mathbf{H}) \leq \frac{1}{1+C}  \tr(\mathbf{H}).
\end{align*}    
for some absolute positive constant $C$.
\end{lemma}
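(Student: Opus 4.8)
The plan is to reduce to the whitened picture and then isolate the few problematic directions by a leave‑one‑out argument. Since $\tilde\Gb\preceq\Ib$ we already have $\tr(\tilde\Gb\Hb)\le\tr(\Hb)$, so everything is about the constant‑factor improvement. Exactly as in the proof of Lemma~\ref{lemma:mu_lower}, $\tilde\Hb=\tilde\Gb^{1/2}\Hb\tilde\Gb^{1/2}$ has the same spectrum as $\Kb:=\Hb^{1/2}\tilde\Gb\Hb^{1/2}=(\Hb^{-1}+\beta\Zb\Zb^\top)^{-1}$, where $\Zb:=\Hb^{-1/2}\tilde\Xb$ is an $M$‑column matrix with i.i.d.\ standard Gaussian entries; hence $\tr(\tilde\Gb\Hb)=\tr(\Kb)$ and it suffices to show that the deficit $\tr(\Hb)-\tr(\Kb)$ is at least a constant fraction of $\tr(\Hb)$.

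First I would derive an exact per‑coordinate formula. Put $\zb^{(p)}:=\Zb^\top\vb_p\in\RR^M$; these are i.i.d.\ standard Gaussian vectors because $\{\vb_p\}$ is orthonormal. Applying the Woodbury identity to $\tilde\Gb=(\Ib+\beta\Hb^{1/2}\Zb\Zb^\top\Hb^{1/2})^{-1}$ and then one Sherman--Morrison step on the rank‑one term $\beta\lambda_i\zb^{(i)}(\zb^{(i)})^\top$ yields
\begin{align*}
\vb_i^\top\tilde\Gb\vb_i=\frac{1}{1+\beta\lambda_i s_i},\qquad s_i:=(\zb^{(i)})^\top\Cb_{-i}^{-1}\zb^{(i)},\quad \Cb_{-i}:=\Ib_M+\beta\textstyle\sum_{p\ne i}\lambda_p\,\zb^{(p)}(\zb^{(p)})^\top,
\end{align*}
so that $\tr(\tilde\Gb\Hb)=\sum_i\lambda_i/(1+\beta\lambda_i s_i)$. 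The key structural point is that $\Cb_{-i}$ is independent of $\zb^{(i)}$: conditionally on $\Cb_{-i}$, $s_i$ is a quadratic form in a standard Gaussian with matrix $\Cb_{-i}^{-1}\preceq\Ib_M$, so a Bernstein/$\chi^2$ tail bound gives $s_i\ge\tfrac12\tr(\Cb_{-i}^{-1})$ with probability $1-e^{-\Omega(\tr(\Cb_{-i}^{-1}))}$.

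The hard part is then to show that $\tr(\Cb_{-i}^{-1})\gtrsim M$ with probability $1-e^{-\Omega(M)}$ (which also upgrades the previous step to probability $1-e^{-\Omega(M)}$). Equivalently one needs $\sum_j\beta\nu_j/(1+\beta\nu_j)\le M/2$, where $\nu_1\ge\nu_2\ge\cdots$ are the eigenvalues of the rank‑$\le M$ matrix $\sum_{p\ne i}\lambda_p\zb^{(p)}(\zb^{(p)})^\top$ --- equivalently the nonzero eigenvalues of a Gaussian sample second‑moment matrix with population covariance $\Hb$. I would bound these by the standard high‑dimensional estimates $\nu_j\lesssim M\lambda_j+\sum_{p>j}\lambda_p$ (up to the usual lower‑order terms, and valid with probability $1-e^{-\Omega(M)}$, of the same flavour as the eigenvalue bounds from the benign‑overfitting analyses this paper builds on), and then use the finiteness of $\tr(\Hb)$ to conclude that, for the polynomially bounded $\beta$ and large $M$ relevant to the theorem, only $o(M)$ of the $\nu_j$ can exceed $1/\beta$; hence $\sum_j\beta\nu_j/(1+\beta\nu_j)\le o(M)+\beta\sum_{j:\,\beta\nu_j<1}\nu_j\le M/2$. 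This step is the genuine obstacle, and it is also the reason the naive ``$\tilde\Xb\tilde\Xb^\top\approx M\Hb$ in operator norm'' argument is unavailable in the overparameterized regime: $\tilde\Gb$ is an inverse, so it couples the leading eigenspace of $\Hb$ with its (possibly infinite‑dimensional) orthogonal complement, and the leave‑one‑out reduction above is precisely what confines the influence of that complement to the $M\times M$ matrix $\Cb_{-i}$, for which only the \emph{number} of large eigenvalues, not its operator norm, needs to be controlled.

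Finally I would assemble the estimate by a head/tail split. Fix the smallest constant $k_0$ with $\sum_{i>k_0}\lambda_i\le\tfrac14\tr(\Hb)$ (finite by Assumption~\ref{assump:data_distribution}; note $k_0=o(M)$ for $M$ large). Tail indices $i>k_0$ contribute at most $\sum_{i>k_0}\lambda_i\le\tfrac14\tr(\Hb)$ since $\vb_i^\top\tilde\Gb\vb_i\le1$. For head indices $i\le k_0$, the previous two steps give $\beta\lambda_i s_i\gtrsim\beta M\lambda_i\gtrsim\beta M\lambda_{k_0}\gtrsim1$ once $M$ is large (the regime in which the lemma is applied, since the choice of $\beta$ in the theorem satisfies $\beta M\lambda_{k_0}\to\infty$), so $\vb_i^\top\tilde\Gb\vb_i\le\tfrac14$ and the head contributes at most $\tfrac14\sum_{i\le k_0}\lambda_i\le\tfrac14\tr(\Hb)$. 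A union bound over the $O(1)$ head indices keeps the failure probability $e^{-\Omega(M)}$, and we obtain $\tr(\tilde\Gb\Hb)\le\tfrac12\tr(\Hb)$, i.e.\ the claim with $C=1$.
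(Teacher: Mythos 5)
Your opening reduction is exactly the paper's: after the Woodbury identity and one Sherman--Morrison step you arrive at the same leave-one-out formula $\tr(\tG\Hb)=\sum_i\lambda_i/(1+\beta\lambda_i s_i)$ with $s_i=\zb_i^\top\Ab_{-i}^{-1}\zb_i$ that the paper derives. The divergence --- and the gap --- is in how $s_i$ is lower-bounded. The paper closes this step by importing Lemma~7 of Tsigler and Bartlett, which delivers $s_i\ge C/(\beta+\beta^2\sum_{j>\hat k}\lambda_j)$ for all $i$ with probability $1-e^{-\Omega(M)}$, and then turns this into a per-index factor $\lambda_i/(1+C)$ using the definition of the $\beta$-dependent cut $\hat k$, so no split of $\tr(\Hb)$ into head and tail mass is needed. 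You instead try to prove the concentration from scratch: the conditional $\chi^2$ step giving $s_i\gtrsim\tr(\Ab_{-i}^{-1})$ is fine, but the crucial claim $\tr(\Ab_{-i}^{-1})\gtrsim M$ --- equivalently, that at most a constant fraction of the eigenvalues $\nu_j$ of the leave-one-out Gram matrix exceed $1/\beta$ --- is left as a sketch resting on ``standard high-dimensional estimates,'' an unspecified ``polynomially bounded $\beta$,'' and an unverified count ``$o(M)$ of the $\nu_j$.'' Making this rigorous requires the eigenvalue upper bounds you allude to \emph{plus} regime conditions tying $\beta$, $M$ and the spectrum together (with the theorem's choice $\beta=1/(8M\lambda_{k^*})$ it essentially needs $k^*=o(M)$, which in turn leans on generalizability and $M\eqsim N$). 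You correctly identify this as the genuine obstacle, but you do not resolve it; it is precisely the piece the paper outsources to the cited lemma, so as written the proposal has a real hole at its central quantitative step.

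A second, smaller issue is the final assembly. You only gain the constant factor on the $O(1)$ head directions $i\le k_0$ and bound the tail trivially by $\vb_i^\top\tG\vb_i\le1$, which forces the extra hypothesis $\beta M\lambda_{k_0}\gtrsim1$. With $\beta=1/(8M\lambda_{k^*})$ this reads $\lambda_{k_0}\gtrsim\lambda_{k^*}$, i.e. $k^*\ge k_0$; that is plausible in the regime where the lemma is invoked, but it is an added assumption not present in the statement of Lemma~\ref{lemma:trace_bound_est} (which allows arbitrary $\beta$ and claims an absolute constant), and your parenthetical ``$\beta M\lambda_{k_0}\to\infty$'' is not what that choice of $\beta$ gives in general. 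So even granting the concentration step, what you establish is a regime-conditional variant of the lemma obtained by a fixed, spectrum-dependent split at $k_0$, whereas the paper's route (via the $\beta$-dependent index $\hat k$ and the Tsigler--Bartlett bound) contracts every index by $1/(1+C)$ without such a split. If you want to complete your version, state the required conditions on $\beta$ and $M$ explicitly and prove the eigenvalue-count bound for $\Ab_{-i}$; otherwise, the cleanest fix is simply to invoke the same leave-one-out lemma the paper does.
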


\begin{proof}
    Note that 
    $$\Gb = \left( \beta \tilde{\Xb}\tilde{\Xb}^\top + \Ib \right)^{\dagger}  = \frac{1}{\beta}\left( \tilde{\Xb}\tilde{\Xb}^\top + \frac{1}{\beta}\Ib\right)^{\dagger},$$
    by the Woodbury identity~\citep{guha2013nearly}, we have
\begin{align*}
    \bigg (\tilde{\Xb}\tilde{\Xb}^\top + \frac{1}{\beta}\Ib \bigg )^{\dagger} = \beta \Ib - \beta^2 \tilde{\Xb}(\Ib + \beta \tilde{\Xb}^\top \tilde{\Xb})^{-1}\tilde{\Xb}^\top
\end{align*}
Substitute this back to $\tr (\Gb  \mathbf{H})$ we obtain
\begin{align*}
\tr (\Gb \Hb) & = \frac{1}{\beta} \tr  \left( \left( \tilde{\Xb}\tilde{\Xb}^\top + \frac{1}{\beta}\Ib \right)^{\dagger} \Hb \right)\\
& = \frac{1}{\beta} \tr  \left( \left(  \beta \Ib - \beta^2 \tilde{\Xb}(\Ib + \beta \tilde{\Xb}^\top \tilde{\Xb})^{-1}\tilde{\Xb}^\top \right) \Hb \right)\\
& = \frac{1}{\beta} \bigg ( \tr \left( \beta \Hb - \beta^2 \Hb \tilde{\Xb} (\Ib + \beta \tilde{\Xb}^\top \tilde{\Xb} )^{-1}\tilde{\Xb}^\top \right) \bigg )\\
& =    \tr (\Hb) - \beta \tr  \left( \tilde{\Xb}^\top \Hb \tilde{\Xb} (\Ib + \beta  \tilde{\Xb}^\top \tilde{\Xb} )^{-1} \right). \quad 
\end{align*}
Equivalently, we can rewrite $ \Hb = \sum_i \lambda_i \vb_i \vb_i^\top $, where $\lambda_i$ and $\vb_i$ denote the i-th largest eigenvalue of $\Hb$ and its corresponding eigenvector. Similarly, we can further define $\zb_i = \frac{\tilde{\Xb}^\top \vb_i}{\sqrt{\lambda_i}} \in \mathbb{R}^M$. Then we have the following identities:
\begin{align*}
\tilde{\Xb}^\top \tilde{\Xb} = \sum_i \lambda_i \zb_i \zb_i^\top, \quad \tilde{\Xb}^\top \Hb \tilde{\Xb} = \sum_i \lambda_i^2 \zb_i \zb_i^\top.
\end{align*}
Then, we can rewrite $\tr  \left( \tilde{\Xb}^\top \Hb \tilde{\Xb} (\Ib + \beta \tilde{\Xb}^\top \tilde{\Xb} )^{-1} \right) $ as follows,
\begin{align*}
\tr  \left( \tilde{\Xb}^\top \Hb \tilde{\Xb} (\Ib + \beta \tilde{\Xb}^\top \tilde{\Xb} )^{-1} \right) = \tr  \left( \sum_i \lambda_i^2 \cdot \zb_i^\top \left( \Ib + \beta  \sum_i \lambda_i \zb_i \zb_i^\top \right)^{-1} \zb_i \right)
\end{align*}
Then, we can obtain that, 
\begin{align*}
\tr (\Gb \Hb) & =    \tr (\Hb) - \beta \tr  \left( \tilde{\Xb}^\top \Hb \tilde{\Xb} (\Ib + \beta  \tilde{\Xb}^\top \tilde{\Xb} )^{-1} \right)\\
& =  \sum_i \lambda_i - \beta  \lambda_i^2 \cdot \zb_i^\top \left( \Ib + \beta  \sum_i \lambda_i \zb_i \zb_i^\top \right)^{-1} \zb_i.
\end{align*}
By the Sherman-Morrison formula, we can denote $\Ab= \Ib + \beta  \sum_i \lambda_i \zb_i \zb_i^\top $ and $ \Ab_{-i} = \Ib + \beta  \sum_{j \neq i} \lambda_j \zb_j \zb_j^\top $. Then, we have
\begin{align*}
  \zb_i^\top \left( \Ib + \beta  \sum_i \lambda_i \zb_i \zb_i^\top \right)^{-1} \zb_i &=  \zb_i^\top \left(\Ab_{-i}^{-1} -  \frac{\beta \lambda_i \Ab_{-i}^{-1}\zb_i \zb_i^\top\Ab_{-i}^{-1}}{1+\beta\lambda_i\zb_i^{\top}\Ab_{-i}^{-1}\zb_i} \right) \zb_i \\
  &= \zb_i^\top \Ab_{-i}^{-1}\zb_i -  \frac{\beta \lambda_i \left(\zb_i^{\top}\Ab_{-i}^{-1}\zb_i\right)^2}{1+\beta\lambda_i\zb_i^{\top}\Ab_{-i}^{-1}\zb_i}  \\
  & = \frac{\zb_i^{\top}\Ab_{-i}^{-1}\zb_i}{1+\beta\lambda_i\zb_i^{\top}\Ab_{-i}^{-1}\zb_i} 
\end{align*}
Then, substitute this back, we can obtain
\begin{align*}
\lambda_i - \beta  \lambda_i^2 \cdot \zb_i^\top \left( \Ib + \beta  \sum_i \lambda_i \zb_i \zb_i^\top \right)^{-1} \zb_i &= \lambda_i  -  \beta  \lambda_i^2  \frac{\zb_i^{\top}\Ab_{-i}^{-1}\zb_i}{1+\beta\lambda_i\zb_i^{\top}\Ab_{-i}^{-1}\zb_i}  \\
&=  \frac{\lambda_i}{1+\beta\lambda_i\zb_i^{\top}\Ab_{-i}^{-1}\zb_i}
\end{align*}
Putting all these result back, we have 
\begin{align*}
\tr (\tG \Hb) & = \sum_i \frac{\lambda_i}{1+\beta\lambda_i\zb_i^{\top}\Ab_{-i}^{-1}\zb_i}
\end{align*}

Then, we use the following result adapted from the proof of Lemma 7 in~\citep{tsigler2023benign}.
\begin{lemma}[Lemma 7 in~\citep{tsigler2023benign}]
   Let's denote $$\hat{k} := \min_k \{ k : \lambda_k \frac{1}{\beta} \leq  \frac{1}{\beta} + \sum_{i > k} \lambda_i\},$$ then with probability at least $1 - \exp(\Omega(n))$, it holds that
   $$\zb_i^{\top}\Ab_{-i}^{-1}\zb_i \geq C \cdot \frac{1}{\beta + 
   \beta^2 \sum_{j > \hat{k}}\lambda_j},$$
   where \( C \) is an absolute positive constant.
\end{lemma}
 Based on the above lemma, we can obtain
\begin{align*}
    \tr (\tG \Hb) & = \sum_i \frac{\lambda_i}{1+\beta\lambda_i\zb_i^{\top}\Ab_{-i}^{-1}\zb_i} \\
    &\leq \sum_i \lambda_i/ \left( {1+ C  \beta \lambda_i \cdot \frac{1}{\beta + \beta^2\sum_{j > \hat{k}}\lambda_j}} \right)   
\end{align*}
Next, we decompose the sum above with respect to $\hat{k}$.

For $ i \leq \hat{k} $, by definition of $\hat{k}$, we have that $ \lambda_i \frac{1}{\beta} \geq \frac{1}{\beta} + \sum_{j > \hat{k}}  \lambda_j$. Furthermore, $f(a) = \frac{a}{a+b}$ is a monotonic increasing function with respect to $a$ if $b >0$. Then, for $i \leq \hat{k}$, we can obtain that
\begin{align*}
   \lambda_i/\left( {1+ C  \beta \lambda_i \cdot \frac{1}{\beta + \beta^2\sum_{j > \hat{k}}\lambda_j}} \right) & =  \lambda_i/\left( {1+ \frac{C   \lambda_i}{1 + \beta \sum_{j > \hat{k}}\lambda_j}} \right) \\
       & \leq \lambda_i \left( {1+ \frac{C   \lambda_i}{\lambda_i}} \right)  \\
    & \leq \frac{\lambda_i}{1+C} 
\end{align*}

For $ i > \hat{k}$, we have 
\begin{align}
   \lambda_i/\left( {1+ C  \beta \lambda_i \cdot \frac{1}{\beta + \beta^2\sum_{j > \hat{k}}\lambda_j}} \right) & =  \lambda_i /\left( \frac{1 + \beta \sum_{j > \hat{k}}\lambda_j + C   \lambda_i}{1 + \beta \sum_{j > \hat{k}}\lambda_j}\right)  \notag \\ 
      & =  \frac{\lambda_i (1 + \beta \sum_{j > \hat{k}}\lambda_j)}{1 + \beta \sum_{j > \hat{k}}\lambda_j + C   \lambda_i} \notag
\end{align}
Let $\phi = 1 + \beta \sum_{j > \hat{k}}\lambda_j$. It is easy to verify by taking the derivative with respect to $\phi$ that the expression above is monotonically increasing with respect to $\phi$. Then, substituting the fact that $ \lambda_i  \geq 1 +  \beta \sum_{j > \hat{k}}  \lambda_j$. We can obtain, 
\begin{align*}
      \lambda_i/\left( {1+ C  \beta \lambda_i \cdot \frac{1}{\beta + \beta^2\sum_{j > \hat{k}}\lambda_j}} \right) & \leq \frac{\lambda_i \lambda_i}{\lambda_i + C \lambda_i} \\
      = \frac{\lambda_i}{1+C}
\end{align*}

This completes the proof.

\end{proof}

The following lemma is adapted from Theorem 4.1 in~\citep{loukas2017close}.
\begin{lemma}[Eigenvector of sample covariance, Theorem 4.1 in~\citep{loukas2017close}]\label{lemma:eigen_approx}
    Suppose the sample size for estimating the covariance matrix is $m$, let $\tv_i$ and $\vb_i$ be the eigenvector the sample and actual covariance respectively, for any real number $t >0$, with probability at least $\frac{t}{m}$, we have that
    \begin{align*}
        | \langle \tv_i, \vb_i \rangle | < t.
    \end{align*}
\end{lemma}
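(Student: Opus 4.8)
The plan is to re-derive, in a self-contained way under Assumptions~\ref{assump:data_distribution} and \ref{assump:fourth_moment}, the eigenvector perturbation estimate of \citep{loukas2017close}: view the estimated covariance $\mathbf{\Sigma}=M^{-1}\tilde{\Xb}\tilde{\Xb}^\top$ as a small random perturbation of the population covariance $\Hb$ and bound how far its eigenvectors move. Write $\mathbf{E}:=\mathbf{\Sigma}-\Hb$, an average of $M$ i.i.d.\ centered matrices $\tilde{\xb}_l\tilde{\xb}_l^\top-\Hb$ with $\EE[\mathbf{E}]=\mathbf{0}$. The first step is the classical first-order eigenvector bound (the $\sin\theta$ / Davis--Kahan theorem): if $\lambda_i$ is a simple eigenvalue of $\Hb$ with spectral gap $\delta_i:=\min_{j\neq i}|\lambda_i-\lambda_j|$, then on the event $\{\|\mathbf{E}\|\le\delta_i/2\}$ the sample and population eigenvectors obey $\sin\angle(\tv_i,\vb_i)\lesssim \|(\Ib-\vb_i\vb_i^\top)\mathbf{E}\vb_i\|/\delta_i + \|\mathbf{E}\|^2/\delta_i^2$.

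The core step is a second-moment estimate of the effective noise $\|(\Ib-\vb_i\vb_i^\top)\mathbf{E}\vb_i\|$. Expanding in the eigenbasis of $\Hb$, its square equals $\sum_{j\neq i}(\vb_j^\top\mathbf{E}\vb_i)^2$, and each coefficient $\vb_j^\top\mathbf{E}\vb_i=M^{-1}\sum_{l}\langle\tilde{\xb}_l,\vb_j\rangle\langle\tilde{\xb}_l,\vb_i\rangle$ is a zero-mean average (the $\Hb$-term drops since $\vb_j^\top\Hb\vb_i=0$); a short variance computation using Assumption~\ref{assump:fourth_moment} with the PSD matrix $\Ab=\vb_i\vb_i^\top$ gives $\EE[(\vb_j^\top\mathbf{E}\vb_i)^2]\lesssim \alpha\lambda_i\lambda_j/M$, so summing over $j$ and using $\tr(\Hb)<\infty$ (Assumption~\ref{assump:data_distribution}) yields $\EE\|(\Ib-\vb_i\vb_i^\top)\mathbf{E}\vb_i\|^2\lesssim \alpha\lambda_i\tr(\Hb)/M$. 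The quadratic remainder and the event $\{\|\mathbf{E}\|\le\delta_i/2\}$ are handled by a standard operator-norm concentration bound for the sample covariance — itself a consequence of the fourth-moment condition, e.g.\ through $\EE\|\mathbf{E}\|^2\le\EE\|\mathbf{E}\|_{\mathrm{HS}}^2\lesssim\alpha\tr(\Hb)^2/M$, upgraded to a $1-e^{-\Omega(M)}$ statement under the (sub-)Gaussian conditions used in the paper's other concentration arguments. Combining, $\EE[\sin^2\angle(\tv_i,\vb_i)]\lesssim \alpha\lambda_i\tr(\Hb)/(\delta_i^2 M)$; since, after fixing the sign of $\tv_i$, $|\langle\tv_i,\vb_i\rangle|=\sqrt{1-\sin^2\angle(\tv_i,\vb_i)}$ and $1-\sqrt{1-x}\le x$, Markov's inequality turns this into high-probability control of $\sin\angle(\tv_i,\vb_i)$ — equivalently of the deviation of $|\langle\tv_i,\vb_i\rangle|$ from $1$ and of the cross-coherences $|\langle\tv_i,\vb_j\rangle|$, $j\neq i$ — at the scale $t$ appearing in the statement.

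The main obstacle is the dependence on the eigengap $\delta_i$: the bound degrades as $\delta_i\to0$, and if $\Hb$ has clustered or repeated eigenvalues the individual eigenvectors are not even identified, so there is no control of $\langle\tv_i,\vb_i\rangle$ without either an explicit lower bound on the relevant gap or a reformulation in terms of the orthogonal projections $\mathbf{P}_i,\tilde{\mathbf{P}}_i$ onto eigenspaces (the subspace Davis--Kahan bound), which is gap-robust within a cluster. This is harmless in the way the lemma is used downstream, since it is needed only for the leading coordinates $i\le k^*$, which by construction of the split index $k^*$ sit in a portion of the spectrum separated from the tail, so one may take $\delta_i$ bounded below. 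A secondary but necessary point is that $\cH$ can be infinite-dimensional, so the sum over $j$ above must be shown to converge — which is precisely what $\tr(\Hb)<\infty$ delivers.
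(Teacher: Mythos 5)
You should first note what the paper actually does with this statement: it does not prove it at all. Lemma~\ref{lemma:eigen_approx} is imported as a black box (``Theorem 4.1 in \citep{loukas2017close}'') and immediately used, so there is no internal proof for your argument to parallel; your proposal is a from-scratch re-derivation of an external result, and judged on those terms it has genuine gaps. The moment computations themselves are fine: applying Assumption~\ref{assump:fourth_moment} with $\Ab=\vb_i\vb_i^\top$ does give $\EE[(\vb_j^\top\mathbf{E}\vb_i)^2]\le\alpha\lambda_i\lambda_j/M$ for $j\neq i$, hence $\EE\|(\Ib-\vb_i\vb_i^\top)\mathbf{E}\vb_i\|^2\le\alpha\lambda_i\tr(\Hb)/M$, and $\tr(\Hb)<\infty$ handles the infinite-dimensional sum. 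But what this machinery yields is a Davis--Kahan-type estimate, valid only when $\lambda_i$ is simple and on the event $\{\|\mathbf{E}\|\le\delta_i/2\}$, of the form $\EE[\sin^2\angle(\tv_i,\vb_i)]\lesssim\alpha\lambda_i\tr(\Hb)/(\delta_i^2M)$. That is not the lemma: the lemma (in whatever way one reads its odd ``probability at least $t/m$'' and its conclusion $|\langle\tv_i,\vb_i\rangle|<t$, which literally asserts near-orthogonality while the paper's prose and your proof concern alignment) carries no eigengap and no simplicity assumption, and your closing sentence --- that Markov's inequality gives control ``at the scale $t$ appearing in the statement'' --- does not actually transform your gap-dependent bound into the claimed probability-$t/m$ form. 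As written, you have proved a different statement.

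The second concrete gap is your dismissal of the eigengap dependence. The index $k^*$ is defined by comparing eigenvalue \emph{magnitudes} to the threshold $\hat{\lambda}/N$; it separates head from tail but gives no lower bound whatsoever on $\delta_i=\min_{j\neq i}|\lambda_i-\lambda_j|$ for $i\le k^*$. Clustered or exactly repeated leading eigenvalues are fully consistent with Assumptions~\ref{assump:data_distribution}--\ref{assump:model_noise} (and appear in the paper's own constructed examples), and in that regime individual sample eigenvectors are not identified, so no bound on $\langle\tv_i,\vb_i\rangle$ of the asserted kind can hold without either an explicit gap hypothesis or a reformulation in terms of spectral projectors onto eigen-blocks. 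Your proposal acknowledges this obstacle but resolves it by an unsupported claim; to make the argument rigorous you would need to either add a gap assumption (which the lemma and the downstream proofs of Theorems~\ref{theorem:HI_est} and~\ref{theorem:HM_est} do not state) or restate and use the lemma at the level of eigenspaces rather than eigenvectors. Likewise, the step upgrading $\EE\|\mathbf{E}\|^2\lesssim\alpha\tr(\Hb)^2/M$ to a $1-e^{-\Omega(M)}$ event invokes sub-Gaussianity that is only discussed as an example in the appendix, not assumed in the theorem statements.
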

The lemma above indicates that for a large enough sample size, with high probability, we have the eigenvector of the sample covariance matrix align with the eigenvector of the actual covariance matrix. For convenience, in the rest of the section, we assume the sample size for estimating the covariance matrix is large enough and we simply use $\vb_i$ for the analysis.

The next lemma characterizes the tail sum of the transformed covariance matrix.

\begin{lemma}\label{lemma:tail_sum_est}
Let  $\tilde \Hb = \tilde\Gb^{1/2}\Hb\tilde\Gb^{1/2}$. Assuming that $\|\wb\|_{\Hb}$ is upper bounded by some constant, then we have that,   
\begin{align*}
  \|\tw\|_{\tH_{k:\infty}} \leq  \|\wb\|_{\Hb_{k:\infty}}.
\end{align*}
\end{lemma}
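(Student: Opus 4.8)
\emph{Proof plan.} The plan is to reduce the inequality to a term-by-term identity in a shared eigenbasis. First I would invoke Lemma~\ref{lemma:eigen_approx} and the remark following it, so that for a large enough unlabeled sample $M$ we may treat the eigenvectors of $\tilde\Xb\tilde\Xb^\top$ --- equivalently of $\tilde\Gb=(\beta\tilde\Xb\tilde\Xb^\top+\Ib)^{-1}$ --- as the eigenvectors $\{\vb_i\}$ of $\Hb$. Under this alignment $\tilde\Gb$ and $\Hb$ commute, $\tilde\Hb=\tilde\Gb^{1/2}\Hb\tilde\Gb^{1/2}=\tilde\Gb\Hb$ shares the basis $\{\vb_i\}$, and writing $g_i:=\vb_i^\top\tilde\Gb\vb_i=(\beta s_i+1)^{-1}\in(0,1]$ with $s_i:=\vb_i^\top\tilde\Xb\tilde\Xb^\top\vb_i\ge 0$, the eigenvalue of $\tilde\Hb$ along $\vb_i$ is $\mu_i=g_i\lambda_i$.

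The key observation is a cancellation: since $\tw=\tilde\Gb^{-1/2}\wb$, we have $\vb_i^\top\tw=g_i^{-1/2}(\vb_i^\top\wb)$, so for every $i$,
\[
\mu_i\,(\vb_i^\top\tw)^2 \;=\; g_i\lambda_i\cdot g_i^{-1}(\vb_i^\top\wb)^2 \;=\; \lambda_i\,(\vb_i^\top\wb)^2 .
\]
Summing over all $i$ recovers $\norm{\tw}_{\tilde\Hb}^2=\norm{\wb}_{\Hb}^2$, the estimated-preconditioner analogue of Lemma~\ref{lemma:double_effect}; the hypothesis that $\norm{\wb}_{\Hb}$ is bounded then ensures that every partial sum below is finite. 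If, in addition, the span of the eigenvectors of $\tilde\Hb$ carrying its $d-k$ smallest eigenvalues is exactly $\mathrm{span}\{\vb_{k+1},\dots,\vb_d\}$, then summing the display over $i>k$ yields $\norm{\tw}_{\tilde\Hb_{k:\infty}}^2=\norm{\wb}_{\Hb_{k:\infty}}^2$, which is the claim (with equality).

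What remains --- and what I expect to be the main obstacle --- is to justify that this tail index set really is $\{k+1,\dots,d\}$. I would show the rescaling $\lambda_i\mapsto\mu_i=\lambda_i/(\beta s_i+1)$ is order-preserving for large $M$: a $\chi^2$ concentration argument like the one in the proof of Lemma~\ref{lemma:mu_lower} gives $s_i=(1+o(1))M\lambda_i$ simultaneously over all $i$, so $\mu_i\approx \lambda_i/(\beta M\lambda_i+1)$, which is increasing in $\lambda_i$ by Lemma~\ref{lemma:monotone}; hence the $i$-th largest eigenvalue of $\tilde\Hb$ sits on $\vb_i$ and $\tilde\Hb_{k:\infty}=\sum_{i>k}\mu_i\vb_i\vb_i^\top$. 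The delicate case is a near-tie $\lambda_k\approx\lambda_{k+1}$, where the $\tilde\Gb$-rescaling can swap the roles of $\vb_k$ and $\vb_{k+1}$; there the two eigenvalues involved are essentially equal by the same concentration, and combining $\mu_i\le\lambda_i$ from Lemma~\ref{lemma:eigen_value_rel} with the bound on $\norm{\wb}_{\Hb}$ absorbs the resulting discrepancy, so that the inequality $\le$ (rather than exact equality) still holds. I would expect this perturbation bookkeeping, together with making the ``treat $\tilde\Xb\tilde\Xb^\top$ as co-diagonal with $\Hb$'' reduction rigorous, to be the bulk of the formal write-up.
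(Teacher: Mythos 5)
Your proposal is correct in substance and, in its key step, takes a different (and sharper) route than the paper. The paper's own proof is a three-line computation: after invoking Lemma~\ref{lemma:eigen_approx} to identify the eigenbasis of $\tilde\Hb$ with $\{\vb_i\}$, it expands the tail norm coordinate-wise and concludes directly from Lemma~\ref{lemma:eigen_value_rel}, bounding each term via $\mu_i\le\lambda_i$ with the coordinates written as $(\vb_i^\top\wb)$; it never mentions the factor $\tilde\Gb^{-1/2}$ hidden in $\tw$, nor the ordering of the $\mu_i$'s. Your cancellation identity $\mu_i(\vb_i^\top\tw)^2=g_i\lambda_i\cdot g_i^{-1}(\vb_i^\top\wb)^2=\lambda_i(\vb_i^\top\wb)^2$ is exactly the missing justification for that substitution: applied literally to the coordinates of $\tw$ (which are inflated by $g_i^{-1/2}\ge 1$), the bound $\mu_i\le\lambda_i$ alone would not close the argument, whereas your identity shows the tail norms agree term-by-term under the co-diagonality reduction, giving the lemma with (approximate) equality and rendering Lemma~\ref{lemma:eigen_value_rel} a fallback rather than the engine. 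You also flag and address the order-preservation question (that the $d-k$ smallest eigenvalues of $\tilde\Hb$ sit on $\vb_{k+1},\dots,\vb_d$), via $s_i\approx M\lambda_i$ and the monotonicity of $a\mapsto a/(\beta Ma+1)$, which the paper silently assumes; your near-tie bookkeeping is admittedly sketchy, but since the paper itself works in the idealized aligned regime following Lemma~\ref{lemma:eigen_approx}, this does not put you behind the paper's level of rigor. In short: same reduction to a shared eigenbasis, but your proof replaces the paper's one-line eigenvalue comparison with an exact cancellation plus an ordering argument, which is both more careful and closer to what is actually needed.
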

\begin{proof}
    By definition and Lemma~\ref{lemma:eigen_approx}, we have that 
    \begin{align*}
           \|\tw\|_{\tH_{k:\infty}}  &= \sum_{i > k^*} \mu_i (\vb_i^{\top} \wb)   \\
   \shortintertext{By Lemma~\ref{lemma:eigen_value_rel}, we can obtain:}
         &  \leq   \sum_{i > k^*} \lambda_i (\vb_i^{\top} \wb)  \\
         & = \|\wb\|_{\Hb_{k:\infty}}
    \end{align*}
This completes the proof.
\end{proof}

\begin{proof}[Proof of Theorem~\ref{theorem:HM_est}]
The proof proceeds similarly to the one in Theorem~\ref{theorem:HI}. We set $k_1,k_2 = k^*$ and divide the analysis into bias and variance.

{\bf Bias.} Again, for the bias, we can decompose the bias of SGD into two intervals: 1) $i \leq k^*$ and 2) $i > k^*$.

We start with the second interval. For $i > k^*$, by Lemma~\ref{lemma:tail_sum_est}, we immediately obtain,
\begin{align*}
        \mathrm{SGDBias}[k^*:\infty] & = \|\tilde{\wb}^*\big\|_{\tilde{\Hb}_{k^*:\infty}}^2 \notag \\
        & \leq \|\wb^*\big\|_{\Hb_{k^*:\infty}}^2 \\
        & = \mathrm{RidgeBias}[k^*:\infty].
\end{align*}
For $i \leq k^*$, with Lemma~\ref{lemma:eigen_approx}, we follow the argument in the proof of Theomre~\ref{theorem:HI} with the updated eigenvalues $\mu_i$ and can obtain that,
\begin{align*}
     \mathrm{SGDBias}[i] & = \mathrm{RidgeBias}[i]  \bigg (\frac{1}{\mu_i}\frac{\lambda_i}{\eta \hat{\lambda}} \exp (-\eta N \mu_i) \bigg )^2  
\end{align*}
Again, it is easy to verify by taking the derivative with respect to $\mu_i$ that,
\begin{align*}
    \frac{1}{\mu_i}\frac{\lambda_i}{\eta \hat{\lambda}} \exp (-\eta N \mu_i)
\end{align*}
is monotonically decreasing with respect to $\mu_i$. Therefore, we can obtain that, 
\begin{align}
    \mathrm{SGDBias}[i] & = \mathrm{RidgeBias}[i]  \bigg (\frac{1}{\mu_i}\frac{\lambda_i}{\eta \hat{\lambda}} \exp (-\eta N \mu_i) \bigg )^2  \\
    & \leq \mathrm{RidgeBias}[i]  \bigg (\frac{1}{\eta \hat{\lambda}\mu_{k^*}} \exp (-\eta N \mu_{k^*}) \bigg )^2 \label{eq:bias_est1}
\end{align}
By Lemma~\ref{lemma:mu_lower}, for a estimation sample size of $M$, with high probability, we have,
\begin{align}\label{eq:mu_lower}
    \mu_i \geq \frac{\lambda_i}{8\beta M \lambda_{i} +1} 
\end{align}
Substitute (\ref{eq:mu_lower}) into (\ref{eq:bias_est1}), we obtain,
\begin{align}
    \mathrm{SGDBias}[i] & \leq \mathrm{RidgeBias}[i]  \bigg (\frac{1}{\eta \hat{\lambda}\mu_{k^*}} \exp (-\eta N \mu_{k^*}) \bigg )^2  \notag \\
    &\leq \mathrm{RidgeBias}[i]  \bigg (\frac{8 \beta M \lambda_{k^*}+1}{\eta \hat{\lambda}\lambda_{k^*}} \exp (-\eta N \frac{\lambda_{k^*}}{8\beta M \lambda_{k^*} + 1}) \bigg )^2  \notag
    \shortintertext{Substitute in the choice of $\beta = 1/(8M\lambda_{k^*})$, we obtain}
    &= \mathrm{RidgeBias}[i]  \bigg (\frac{2}{\eta \hat{\lambda}\lambda_{k^*}} \exp (-\eta N \frac{\lambda_{k^*}}{2}) \bigg )^2  
    \label{eq:bias_est2}
\end{align}

Similarly, we consider the analysis in two cases: {\bf Case I:} $\hlambda \geq \tr(\tH)$ and {\bf Case II:}  $\hlambda \leq \tr(\tH)$.

For {\bf Case I:}, we can pick $\eta = 1/\hlambda$ and substitute in (\ref{eq:bias_est2}). We obtain that,
\begin{align*}
        \mathrm{SGDBias}[i] &\leq \mathrm{RidgeBias}[i]  \bigg (\frac{2}{\eta \hat{\lambda}\lambda_{k^*}} \exp (- \eta N \frac{\lambda_{k^*}}{2}) \bigg )^2 \\
        &= \mathrm{RidgeBias}[i]  \bigg (\frac{2\hlambda}{ \hat{\lambda}\lambda_{k^*}} \exp (- N \frac{\lambda_{k^*}}{2\hlambda}) \bigg )^2 \\
        & = \mathrm{RidgeBias}[i] \bigg ( \frac{2 }{ \lambda_{k^*}} \exp\bigg(\frac{-N\lambda_{k^*}}{2\hlambda} \bigg) \bigg )^2\\
        \shortintertext{Provided $N \geq 2 \log (\frac{2}{\lambda_{k^*}})\frac{\hlambda}{\lambda_{k^*}}$ we can obtain}
     & \lesssim \mathrm{RidgeBias}[i] 
\end{align*}

Therefore, combining the results of the two intervals above, we have that 
$$\mathrm{SGDBias} \lesssim \mathrm{RidgeBias}$$

For {\bf Case II:}, we can pick $\eta = 1/\tr(\tH)$ and substitute in (\ref{eq:bias_est2}). We obtain that,
\begin{align*}
        \mathrm{SGDBias}[i] &\leq \mathrm{RidgeBias}[i]  \bigg (\frac{2}{\eta \hat{\lambda}\lambda_{k^*}} \exp (- \eta N \frac{\lambda_{k^*}}{2}) \bigg )^2 \\
        &= \mathrm{RidgeBias}[i]  \bigg (\frac{2\tr(\tH)}{ \hat{\lambda}\lambda_{k^*}} \exp (- N \frac{\lambda_{k^*}}{2\tr(\tH)}) \bigg )^2 \\
        & = \mathrm{RidgeBias}[i] \bigg ( \frac{2 \tr(\tH) }{ \lambda_{k^*} \hlambda} \exp\bigg(\frac{-N\lambda_{k^*}}{2\tr(\tH)} \bigg) \bigg )^2\\
        \shortintertext{Provided $N \geq 2 \log (\frac{2 \tr(\tH)}{\hlambda \lambda_{k^*}})\frac{\tr(\tH)}{\lambda_{k^*}}$ we can obtain}
     & \lesssim \mathrm{RidgeBias}[i] 
\end{align*}

Next, let's consider variance.
\begin{align}
        \mathrm{SGDVariance} & = (\sigma^2+ \|\tilde{\wb}\|_{\tilde{\Hb}}^2)\cdot\bigg(\frac{k^*}{N}+N\eta^2\sum_{i>k^*}{\mu}_i^2\bigg) \notag \\    
         & = (1+\frac{\|\tilde{\wb}\|_{\tilde{\Hb}}^2 }{ \sigma^2})\cdot \sigma^2 \bigg(\frac{k^*}{N}+N\eta^2\sum_{i>k^*}{\mu}_i^2\bigg) \label{eq:var_est1} 
\end{align}
Again, by Lemma~\ref{lemma:double_effect}, we have that 
\begin{align}\label{eq:stn_est}
    \frac{\|\tilde{\wb}\|_{\tilde{\Hb}}^2 }{ \sigma^2} = \frac{\|{\wb}\|_{{\Hb}}^2 }{ \sigma^2} = \Theta(1).
\end{align}
Substitute (\ref{eq:stn_est}) into (\ref{eq:var_est1}), we have,
\begin{align}\label{eq:var_est2}
    \mathrm{SGDVariance} \lesssim \Theta(1) \cdot \sigma^2 \bigg(\frac{k^*}{N}+N\eta^2\sum_{i>k^*}{\mu}_i^2\bigg) 
\end{align}

Similar to the bias analysis, we can divide the analysis into two cases: {\bf Case I:} $\hlambda \geq \tr(\tH)$ and {\bf Case II:}  $\hlambda \leq \tr(\tH)$.

For {\bf Case I:} $\hlambda \geq \tr(\tH)$, we pick $\eta = 1/\hlambda$ as for the bias:, 
\begin{align}\label{eq:var_hatlam}
 \mathrm{SGDVariance} & =  \Theta(1) \cdot \sigma^2 \bigg(\frac{k^*}{N}+\frac{N}{\hlambda^2}\sum_{i>k^*}{\mu}_i^2\bigg) 
 \end{align}
By Lemma~\ref{lemma:eigen_value_rel}, we have that $\mu_i \leq \lambda_i$. Substitute this into into (\ref{eq:var_hatlam}), we can obtain that 
\begin{align*}
     \mathrm{SGDVariance} &\lesssim \Theta(1)\cdot \sigma^2 \bigg(\frac{k^*}{N}+\frac{N}{\hat{\lambda}^2}\sum_{i>k^*}\mu_i^2\bigg)\\
         & \lesssim \Theta(1)\cdot\sigma^2 \bigg(\frac{k^*}{N}+\frac{N}{\hat{\lambda}^2}\sum_{i>k^*}\lambda_i^2\bigg) \\
          & \lesssim  \mathrm{RidgeVariance},
\end{align*}

For {\bf Case II:} $\hlambda \leq \tr(\tH)$, we can pick $\eta = 1/\tr(\tH)$ as for the bias and obtain that
\begin{align*}
 \mathrm{SGDVariance} & = \Theta(1) \cdot \sigma^2 \bigg(\frac{k^*}{N}+\frac{N}{\tr(\hH)^2}\sum_{i>k^*}{\mu}_i^2\bigg) \\
\shortintertext{By premise, we have $\hlambda \leq \tr(\tH)$. Therefore, we can obtain:}
  &\leq  \Theta(1) \cdot \sigma^2 \bigg(\frac{k^*}{N}+\frac{N}{\hlambda^2}\sum_{i>k^*}{\mu}_i^2\bigg) \\
 \shortintertext{Again, following a similar argument in {\bf Case I}, we obtain:}
          & \lesssim  \mathrm{RidgeVariance}
\end{align*}

Therefore, we have that 
$$ \mathrm{SGDVariance} \lesssim \mathrm{RidgeVariance}$$

Combining all the result above, we have that there exists an $\eta$ such that 
$$ \mathrm{SGDRisk} \lesssim \mathrm{RidgeRisk}.$$
\end{proof}

\subsection{Preconditioned SGD vs Preconditioned Ridge}
Next, we consider the case where ridge regression is also given a precondition matrix $\Mb$. First recall that by Theorem~\ref{theorem:precondition_ridge_fit}, the key quantities that characterize the excessive risk of preconditioned ridge are given by,
$$\hw = \Mb^{1/2}\wb,$$
and
$$\mathbf{\Sigma} = \tX\tX^{\top}, \quad \hH = \mathbf{\Sigma}^{1/2}\Mb^{-1}\mathbf{\Sigma}^{1/2}.$$

 Then, recall that the proposed preconditioned matrix for SGD in this case is given by,
\begin{align*}
    \tG = \frac{1}{\Mb} (\beta \hH + \Ib)^{-1}.
\end{align*}
with transformed covariance matrix,
\begin{align*}
    \tH = \tG^{1/2}\Hb\tG^{1/2}.
\end{align*}

First, it is easy to verify by replacing the relevant quantities in Lemma~\ref{lemma:mu_lower} and Lemma~\ref{lemma:eigen_value_rel} with the updated quantities above, we can get the following results.
\begin{lemma}\label{lemma:mu_lower2}
Let $\tilde \Hb = \tilde\Gb^{1/2}\Hb\tilde\Gb^{1/2}$ and $\mu_1,\dots,\mu_d$ be its sorted eigenvalues in descending order, we have for any $k=o(M)$, with probability at least $1-\exp(-\Omega(M))$,  it holds that
\begin{align*}
\mu_k \ge \frac{1}{8\beta M + \hlambda_k^{-1}},
\end{align*}
where $\hlambda_k = \frac{\lambda_k}{\gamma_k}$
\end{lemma}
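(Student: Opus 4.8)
The plan is to run the argument of Lemma~\ref{lemma:mu_lower} essentially verbatim, with the population covariance $\Hb$ there replaced by the $\Mb$-preconditioned covariance $\hH$ and every occurrence of $\lambda_k$ replaced by $\hlambda_k=\lambda_k/\gamma_k$. First I would note that $\tilde \Hb=\tilde\Gb^{1/2}\Hb\tilde\Gb^{1/2}$ has the same nonzero spectrum as $\Kb:=\Hb^{1/2}\tilde\Gb\Hb^{1/2}$, so lower-bounding $\mu_k(\tilde \Hb)$ is equivalent to upper-bounding $\mu_{d-k+1}(\Kb^{-1})$, the $k$-th smallest eigenvalue of $\Kb^{-1}$. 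Substituting $\tilde\Gb=\Mb^{-1/2}(\beta\hH+\Ib)^{-1}\Mb^{-1/2}$ and expanding the inverse,
\[
\Kb^{-1}=\beta\,\Hb^{-1/2}\Mb^{1/2}\,\hH\,\Mb^{1/2}\Hb^{-1/2}+\Hb^{-1/2}\Mb\Hb^{-1/2}.
\]
Since $\hH=\mathbf{\Sigma}^{1/2}\Mb^{-1}\mathbf{\Sigma}^{1/2}$ has the same spectrum as the sample covariance $\Mb^{-1/2}\tilde\Xb\tilde\Xb^\top\Mb^{-1/2}$ and $\Mb$ commutes with $\Hb$, this matches the shape $\beta\Zb\Zb^\top+\Hb^{-1}$ from Lemma~\ref{lemma:mu_lower}: a $\beta$-scaled Wishart-type term plus a ``regularization'' term that is diagonal in the common eigenbasis of $\Hb$ and $\Mb$ with entries $\gamma_i/\lambda_i=\hlambda_i^{-1}$.

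Next I would apply the Courant--Fischer identity
\[
\mu_{d-k+1}(\Kb^{-1})=\inf_{\cS\subseteq\RR^d,\ \mathrm{rank}(\cS)=k}\ \max_{\ub\in\cS,\ \|\ub\|_2=1}\ \ub^\top\Kb^{-1}\ub,
\]
taking $\cS$ to be the span of the top-$k$ eigenvectors of $\hH$, i.e.\ the span of $\vb_1,\dots,\vb_k$ since $\hH$ is diagonal in the $\Hb$-eigenbasis. On this subspace the regularization term contributes at most $\hlambda_k^{-1}$, and, after whitening (as in Lemma~\ref{lemma:mu_lower}, $\Zb:=\Hb^{-1/2}\tilde\Xb$ is a standard Gaussian matrix), the $\beta$-scaled term is $\beta\,\ub^\top\Zb\Zb^\top\ub=\beta\sum_{i=1}^M(\ub^\top\zb_i)^2\sim\beta\,\chi^2(M)$ for each fixed unit $\ub$. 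A standard $\chi^2$ tail bound gives $\ub^\top\Zb\Zb^\top\ub\le 2M$ with probability $1-e^{-M/8}$; combining with an $\epsilon$-net $\cN_\epsilon$ of cardinality $e^{O(k)}$ over $\cS$ (the net argument of Lemma~25 in \cite{bartlett2020benign}, as in Lemma~\ref{lemma:mu_lower}) and a union bound gives $\max_{\ub\in\cS}\ub^\top\Zb\Zb^\top\ub\le 8M$ with probability at least $1-e^{-\Omega(M)}$, using $k=o(M)$. Hence $\mu_{d-k+1}(\Kb^{-1})\le 8\beta M+\hlambda_k^{-1}$, and inverting yields $\mu_k\ge(8\beta M+\hlambda_k^{-1})^{-1}$.

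The step I expect to be the main obstacle, and the only genuinely new point relative to Lemma~\ref{lemma:mu_lower}, is the rearrangement of the non-commuting square roots $\mathbf{\Sigma}^{1/2}$, $\Mb^{-1/2}$ and $\Hb^{-1/2}$ so that the Wishart-type term really does reduce, up to unitary equivalence on the chosen subspace, to a constant multiple of $\Zb\Zb^\top$ with a clean Gaussian $\Zb$, while the \emph{same} subspace controls the regularization term through $\hlambda_k^{-1}$. Since $\Mb$ and $\Hb$ share an eigenbasis, this is a scalar rescaling inside the proof of Lemma~\ref{lemma:mu_lower}, so I would carry out the whole computation in that common eigenbasis; the one extra cost is that $\hH$ is itself formed from the finite unlabeled sample, so one must either absorb $\mathbf{\Sigma}$ into the whitening (keeping the Gaussian structure) or invoke Lemma~\ref{lemma:eigen_approx} together with concentration of $\mathbf{\Sigma}$ around $\Hb$ to replace the random eigenvalues of $\hH$ by $\hlambda_i=\lambda_i/\gamma_i$ up to constants — all within the claimed $1-e^{-\Omega(M)}$ failure probability.
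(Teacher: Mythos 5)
Your proposal follows essentially the same route as the paper, which proves Lemma~\ref{lemma:mu_lower2} simply by rerunning the Courant--Fischer/whitening/$\chi^2$-plus-net argument of Lemma~\ref{lemma:mu_lower} with $\Hb,\lambda_k$ replaced by the $\Mb$-preconditioned quantities, the regularization term becoming diagonal with entries $\hlambda_i^{-1}=\gamma_i/\lambda_i$ in the common eigenbasis of $\Hb$ and $\Mb$. The non-commutativity of $\mathbf{\Sigma}^{1/2}$ with $\Mb^{-1/2}$ that you flag is handled in the paper only implicitly (via Lemma~\ref{lemma:eigen_approx}, after which the analysis proceeds in the $\vb_i$ basis), so your explicit treatment of that step is, if anything, more careful than the paper's one-line justification.
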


\begin{lemma}\label{lemma:eigen_value_rel2}
    Let $\mu_1,\dots, \mu_d$ be the sorted eigenvalues of
    $\tH = \tilde\Gb^{1/2}\Hb\tilde\Gb^{1/2}$ and let $\hlambda_1, \dots, \hlambda_d$ be the sorted eigenvalues of $\hH$.
    Then, we have that $\forall i,$
    \begin{align*}
       \mu_i \leq \hlambda_i.
    \end{align*}
\end{lemma}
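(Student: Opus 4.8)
The plan is to follow the proof of Lemma~\ref{lemma:eigen_value_rel}, with the identity there replaced by $\Mb^{-1}$ and the population covariance $\Hb$ replaced by the preconditioned population covariance $\hH=\Hb^{1/2}\Mb^{-1}\Hb^{1/2}$, whose spectrum is exactly what the preconditioned-ridge risk lower bound of Theorem~\ref{theorem:precondition_ridge_fit} depends on. First I would pass to a convenient symmetric matrix: setting $\Ab=\tG^{1/2}\Hb^{1/2}$ one has $\tH=\tG^{1/2}\Hb\tG^{1/2}=\Ab\Ab^{\top}$ and $\Kb:=\Hb^{1/2}\tG\Hb^{1/2}=\Ab^{\top}\Ab$, so $\tH$ and $\Kb$ share the same spectrum and it suffices to bound the eigenvalues of
\[
\Kb=\Hb^{1/2}\Mb^{-1/2}\big(\beta\,\hH_{\bSigma}+\Ib\big)^{-1}\Mb^{-1/2}\Hb^{1/2},
\qquad \hH_{\bSigma}:=\bSigma^{1/2}\Mb^{-1}\bSigma^{1/2},
\]
where $\hH_{\bSigma}$ is the sample version of $\hH$ actually built into $\tG$.

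The core is a single Loewner-order estimate. Since $\hH_{\bSigma}\succeq0$ for every realization of $\tilde{\Xb}$, we get $\beta\,\hH_{\bSigma}+\Ib\succeq\Ib$, hence $(\beta\,\hH_{\bSigma}+\Ib)^{-1}\preceq\Ib$; conjugating this inequality by $\Mb^{-1/2}$ and then by $\Hb^{1/2}$ (conjugation preserves the semidefinite order) gives $\Kb\preceq\Hb^{1/2}\Mb^{-1}\Hb^{1/2}=\hH$. Equivalently, one can replay the Woodbury route of Lemma~\ref{lemma:eigen_value_rel} verbatim, writing $(\beta\,\hH_{\bSigma}+\Ib)^{-1}=\Ib-\bSigma^{1/2}(\beta^{-1}\Mb+\bSigma)^{-1}\bSigma^{1/2}$ so that $\Kb$ equals $\hH$ minus an explicit PSD term. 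Applying Courant--Fischer exactly as in the proof of Lemma~\ref{lemma:eigen_value_rel} (monotonicity of eigenvalues under $\preceq$) then yields, for every $i$, that the $i$-th largest eigenvalue of $\Kb$ is at most the $i$-th largest eigenvalue of $\hH$, i.e.\ $\mu_i\le\hlambda_i$. Under the standing assumption that $\Mb$ is co-diagonalizable with $\Hb$ with scalings $\gamma_i$ this reads $\mu_i\le\lambda_i/\gamma_i$, consistent with the $\hlambda_k=\lambda_k/\gamma_k$ used in Lemma~\ref{lemma:mu_lower2}.

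I do not anticipate a real obstacle: the argument is short and fully deterministic --- the estimate $(\beta\,\hH_{\bSigma}+\Ib)^{-1}\preceq\Ib$ is insensitive to the randomness in $\tilde{\Xb}$, which is why, in contrast to Lemma~\ref{lemma:mu_lower2}, the statement carries no probability qualifier. The only care points are bookkeeping: keeping straight the two uses of ``$\hH$'' in this subsection (the sample matrix $\hH_{\bSigma}$ defining $\tG$ versus the population target $\Hb^{1/2}\Mb^{-1}\Hb^{1/2}$ appearing in Theorem~\ref{theorem:precondition_ridge_fit}), and checking that conjugation by $\Mb^{-1/2}$ and $\Hb^{1/2}$ really preserves $\preceq$. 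The substantive work of the estimated-preconditioning comparison lives elsewhere --- in the matching lower bound $\mu_k\gtrsim(8\beta M+\hlambda_k^{-1})^{-1}$ of Lemma~\ref{lemma:mu_lower2} (an $\epsilon$-net plus $\chi^2$ tail argument) and in the eigenvector-alignment bound of Lemma~\ref{lemma:eigen_approx}.
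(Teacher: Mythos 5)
Your proposal is correct and takes essentially the same route as the paper, whose proof of this lemma is simply the adaptation of Lemma~\ref{lemma:eigen_value_rel}: pass to $\Kb=\Hb^{1/2}\tG\Hb^{1/2}$ (same spectrum as $\tH$), establish the Loewner bound $\Kb\preceq\hH=\Hb^{1/2}\Mb^{-1}\Hb^{1/2}$, and conclude by Courant--Fischer monotonicity. Your direct derivation of that bound from $(\beta\,\bSigma^{1/2}\Mb^{-1}\bSigma^{1/2}+\Ib)^{-1}\preceq\Ib$ via conjugation is a mild simplification of the paper's Woodbury computation (which you also note as an equivalent route), and your bookkeeping remark correctly identifies that the comparison should be against the population $\hH$, which is exactly what the downstream variance argument in Theorem~\ref{theorem:HM_est} requires.
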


\begin{proof}[Proof of Theorem~\ref{theorem:HM_est}]
The proof proceeds similarly to the one in Theorem~\ref{theorem:HM}. We set $k_1,k_2 = k^*$ and divide the analysis into bias and variance.

{\bf Bias.} Again, for the bias, we can decompose the bias of SGD into two intervals: 1) $i \leq k^*$ and 2) $i > k^*$.

We start with the second interval. For $i > k^*$, by Lemma~\ref{lemma:tail_sum_est}, we immediately obtain,
\begin{align*}
        \mathrm{SGDBias}[k^*:\infty] & = \|\tilde{\wb}^*\big\|_{\tilde{\Hb}_{k^*:\infty}}^2 \notag \\
        & \leq \|\wb^*\big\|_{\Hb_{k^*:\infty}}^2 \\
        & = \|\hw^*\big\|_{\hH_{k^*:\infty}}^2 \\
        & = \mathrm{RidgeBias}[k^*:\infty].
\end{align*}
where the second last equality is due to Lemma~\ref{lemma:double_effect2}.

For $i \leq k^*$, with Lemma~\ref{lemma:eigen_approx}, we follow the argument in the proof of Theorem~\ref{theorem:HM} with the updated eigenvalues $\mu_i$ and can obtain that,
\begin{align*}
     \mathrm{SGDBias}[i] & = \mathrm{RidgeBias}[i]  \bigg (\frac{1}{\mu_i}\frac{\lambda_i}{\eta \hat{\lambda}} \exp (-\eta N \mu_i) \bigg )^2  
\end{align*}
Again, it is easy to verify by taking the derivative with respect to $\mu_i$ that,
\begin{align*}
    \frac{1}{\mu_i}\frac{\lambda_i}{\eta \hat{\lambda}} \exp (-\eta N \mu_i)
\end{align*}
is monotonically decreasing with respect to $\mu_i$. Therefore, we can obtain that, 
\begin{align}
    \mathrm{SGDBias}[i] & = \mathrm{RidgeBias}[i]  \bigg (\frac{1}{\mu_i}\frac{\lambda_i}{\eta \hat{\lambda}} \exp (-\eta N \mu_i) \bigg )^2 \notag \\
    & \leq \mathrm{RidgeBias}[i]  \bigg (\frac{1}{\eta \hat{\lambda}\mu_{k^*}} \exp (-\eta N \mu_{k^*}) \bigg )^2 \label{eq:bias_estHM1}
\end{align}
By Lemma~\ref{lemma:mu_lower2}, for a estimation sample size of $M$, with high probability, we have,
\begin{align}\label{eq:mu_lowerHM}
    \mu_i \geq \frac{\hlambda_i}{8\beta M \hlambda_{i} +1} 
\end{align}
Substitute (\ref{eq:mu_lowerHM}) into (\ref{eq:bias_estHM1}), we obtain,
\begin{align}
    \mathrm{SGDBias}[i] & \leq \mathrm{RidgeBias}[i]  \bigg (\frac{1}{\eta \hat{\lambda}\mu_{k^*}} \exp (-\eta N \mu_{k^*}) \bigg )^2  \notag \\
    &\leq \mathrm{RidgeBias}[i]  \bigg (\frac{8 \beta M \hlambda_{k^*}+1}{\eta \hat{\lambda}\hlambda_{k^*}} \exp (-\eta N \frac{\hlambda_{k^*}}{8\beta M \lambda_{k^*} + 1}) \bigg )^2  \notag
    \shortintertext{Substitute in the choice of $\beta = 1/(8M\hlambda_{k^*})$, we obtain}
    &= \mathrm{RidgeBias}[i]  \bigg (\frac{2}{\eta \hat{\lambda}\hlambda_{k^*}} \exp (-\eta N \frac{\hlambda_{k^*}}{2}) \bigg )^2  
    \label{eq:bias_estHM2}
\end{align}

Similarly, we consider the analysis in two cases: {\bf Case I:} $\hlambda \geq \tr(\tH)$ and {\bf Case II:}  $\hlambda \leq \tr(\tH)$.

For {\bf Case I:}, we can pick $\eta = 1/\hlambda$ and substitute in (\ref{eq:bias_estHM2}). We obtain that,
\begin{align*}
        \mathrm{SGDBias}[i] &\leq \mathrm{RidgeBias}[i]  \bigg (\frac{2}{\eta \hat{\lambda}\hlambda_{k^*}} \exp (- \eta N \frac{\hlambda_{k^*}}{2}) \bigg )^2 \\
        &= \mathrm{RidgeBias}[i]  \bigg (\frac{2\hlambda}{ \hat{\lambda}\hlambda_{k^*}} \exp (- N \frac{\hlambda_{k^*}}{2\hlambda}) \bigg )^2 \\
        & = \mathrm{RidgeBias}[i] \bigg ( \frac{2 }{ \hlambda_{k^*}} \exp\bigg(\frac{-N\hlambda_{k^*}}{2\hlambda} \bigg) \bigg )^2\\
        \shortintertext{Provided $N \geq 2 \log (\frac{2}{\hlambda_{k^*}})\frac{\hlambda}{\hlambda_{k^*}}$ we can obtain}
     & \lesssim \mathrm{RidgeBias}[i] 
\end{align*}

Therefore, combining the results of the two intervals above, we have that 
$$\mathrm{SGDBias} \lesssim \mathrm{RidgeBias}$$

For {\bf Case II:}, we can pick $\eta = 1/\tr(\tH)$ and substitute in (\ref{eq:bias_estHM2}). We obtain that,
\begin{align*}
        \mathrm{SGDBias}[i] &\leq \mathrm{RidgeBias}[i]  \bigg (\frac{2}{\eta \hat{\lambda}\hlambda_{k^*}} \exp (- \eta N \frac{\hlambda_{k^*}}{2}) \bigg )^2 \\
        &= \mathrm{RidgeBias}[i]  \bigg (\frac{2\tr(\tH)}{ \hat{\lambda}\hlambda_{k^*}} \exp (- N \frac{\hlambda_{k^*}}{2\tr(\tH)}) \bigg )^2 \\
        & = \mathrm{RidgeBias}[i] \bigg ( \frac{2 \tr(\tH) }{ \hlambda_{k^*} \hlambda} \exp\bigg(\frac{-N\hlambda_{k^*}}{2\tr(\tH)} \bigg) \bigg )^2\\
        \shortintertext{Provided $N \geq 2 \log (\frac{2 \tr(\tH)}{\hlambda \hlambda_{k^*}})\frac{\tr(\tH)}{\hlambda_{k^*}}$ we can obtain}
     & \lesssim \mathrm{RidgeBias}[i] 
\end{align*}

Next, let's consider variance.
\begin{align}
        \mathrm{SGDVariance} & = (\sigma^2+ \|\tilde{\wb}\|_{\tilde{\Hb}}^2)\cdot\bigg(\frac{k^*}{N}+N\eta^2\sum_{i>k^*}{\mu}_i^2\bigg) \notag \\    
         & = (1+\frac{\|\tilde{\wb}\|_{\tilde{\Hb}}^2 }{ \sigma^2})\cdot \sigma^2 \bigg(\frac{k^*}{N}+N\eta^2\sum_{i>k^*}{\mu}_i^2\bigg) \label{eq:var_estHM} 
\end{align}
Again, by Lemma~\ref{lemma:double_effect}, we have that 
\begin{align}\label{eq:stn_estHM}
    \frac{\|\tilde{\wb}\|_{\tilde{\Hb}}^2 }{ \sigma^2} = \frac{\|{\wb}\|_{{\Hb}}^2 }{ \sigma^2} = \Theta(1).
\end{align}
Substitute (\ref{eq:stn_estHM}) into (\ref{eq:var_estHM}), we have,
\begin{align}\label{eq:var_estHM2}
    \mathrm{SGDVariance} \lesssim \Theta(1) \cdot \sigma^2 \bigg(\frac{k^*}{N}+N\eta^2\sum_{i>k^*}{\mu}_i^2\bigg) 
\end{align}

Similar to the bias analysis, we can divide the analysis into two cases: {\bf Case I:} $\hlambda \geq \tr(\tH)$ and {\bf Case II:}  $\hlambda \leq \tr(\tH)$.

For {\bf Case I:} $\hlambda \geq \tr(\tH)$, we pick $\eta = 1/\hlambda$ as for the bias:, 
\begin{align}\label{eq:var_hatlamHM}
 \mathrm{SGDVariance} & =  \Theta(1) \cdot \sigma^2 \bigg(\frac{k^*}{N}+\frac{N}{\hlambda^2}\sum_{i>k^*}{\mu}_i^2\bigg) 
 \end{align}
By Lemma~\ref{lemma:eigen_value_rel2}, we have that $\mu_i \leq \hlambda_i$. Substitute this into into (\ref{eq:var_hatlamHM}), we can obtain that 
\begin{align*}
     \mathrm{SGDVariance} &\lesssim \Theta(1)\cdot \sigma^2 \bigg(\frac{k^*}{N}+\frac{N}{\hat{\lambda}^2}\sum_{i>k^*}\mu_i^2\bigg)\\
         & \lesssim \Theta(1)\cdot\sigma^2 \bigg(\frac{k^*}{N}+\frac{N}{\hat{\lambda}^2}\sum_{i>k^*}\hlambda_i^2\bigg) \\
          & \lesssim  \mathrm{RidgeVariance},
\end{align*}

For {\bf Case II:} $\hlambda \leq \tr(\tH)$, we can pick $\eta = 1/\tr(\tH)$ as for the bias and obtain that
\begin{align*}
 \mathrm{SGDVariance} & = \Theta(1) \cdot \sigma^2 \bigg(\frac{k^*}{N}+\frac{N}{\tr(\hH)^2}\sum_{i>k^*}{\mu}_i^2\bigg) \\
\shortintertext{By premise, we have $\hlambda \leq \tr(\tH)$. Therefore, we can obtain:}
  &\leq  \Theta(1) \cdot \sigma^2 \bigg(\frac{k^*}{N}+\frac{N}{\hlambda^2}\sum_{i>k^*}{\mu}_i^2\bigg) \\
 \shortintertext{Again, following a similar argument in {\bf Case I}, we obtain:}
          & \lesssim  \mathrm{RidgeVariance}
\end{align*}

Therefore, we have that 
$$ \mathrm{SGDVariance} \lesssim \mathrm{RidgeVariance}$$

Combining all the result above, we have that there exists an $\eta$ such that 
$$ \mathrm{SGDRisk} \lesssim \mathrm{RidgeRisk}.$$

\end{proof}

\section{Setting Discussion and Extension}\label{appendix:extension}
In this paper, our main objective is to close an open question in the existing learning theory literature. As such, most of the settings follow the previous studies. In this appendix, we present a further discussion and illustration of the chosen setting and assumption. In addition, we discuss how our analysis and results here can be extended beyond the linear model to cases such as Relu regression.
\subsection{Discussion on  Assumption}
{\bf Example.} It has been illustrated in previous studies that distribution with sub-Gaussian property (which is common assumption in many learning theory studies) is enough to satisfy Assumption~\ref{assump:data_distribution},~\ref{assump:fourth_moment}, and ~\ref{assump:model_noise}. In addition, consider the whitened data $$\mathbf{z}:=\mathbf{H}^{-1/2}\mathbf{x}.$$ If $\mathbf{z}$ consists of i.i.d coordinates with bounded first-order moments, then possessing sub-exponential properties would be also sufficient to meet these assumptions.

{\bf Comparison.} Typical analysis of SGD~\citep{jain2018parallelizing} would require a bound on the second moment of the stochastic gradient.
In our context, this translates to a second moment bound on the gradient term given by,
$$(\la \mathbf{w}_t,\mathbf{x}_t \ra -y_t) \mathbf{x}_t,$$
which involves a product of terms  $\mathbf{x}_t$. This automatically implies a bound on the fourth-order moment of $\mathbf{x}$, which aligns well with our assumption. Therefore, our assumption is not stronger than the typical analysis of SGD.

\subsubsection{Memory-efficient Implementation of Tail-average SGD.}
Tail-average SGD has been shown to achieve better bias and is necessary for the excessive risk analysis of SGD. However, the analysis and the proof do not need the tail-average length to be exactly $N/2$. Anything that is in a similar order would work.

To see that tail-average SGD is more than just a theoretical interest, we discuss the efficient implementation for the tail-average SGD in practice. We divide the discussion into two cases: case I: $N$ is known and case II: $N$ is unknown. For case I, f $N$ is known, the tail averaging can be efficiently implemented with a simple moving average. On the other hand, even if $N$  is unknown, efficient heuristics can be designed based on the scenario. For example, if the rough range of $N$ is known, we can design a simple dynamic moving average tracker. To illustrate, suppose $2^r \leq N \leq 2^{r+1}$, let
\begin{align*}
        \mathbf{w}_a &= \frac{1}{2^{r-1}}\sum_{t = 2^{r-1}}^{2^r}\mathbf{w}^{(t)} \\
        \mathbf{w}_b &= \frac{1}{N-2^{r}}\sum_{t = 2^{r}}^{N}\mathbf{w}^{(t)}
\end{align*}
   
Consider the tail average with
\begin{align*}
    \bar{\mathbf{w}} &= \frac{\mathbf{w}_a 2^{r-1} + \mathbf{w}_b (N-2^r)}{2^{r-1}+N-2^r}, \quad \mathrm{if}\quad N-2^{r} < 2^{r-1} \\
    \bar{\mathbf{w}} &= \mathbf{w}_b, \quad \mathrm{otherwise}
\end{align*}

This would guarantee a tail-averaging of the same order of $N/2$. Therefore, our theory can still be applied and we only need to store $\mathbf{w}_a$ and $\mathbf{w}_b$.

\subsection{ReLU Regression}\label{appendix:relu}
Similarly, we denote a feature vector in a separable Hilbert space $\cH$ as $\xb\in\cH$. The dimensionality of $\cH$ is denoted as $d$, which can be infinite-dimensional when applicable. We use $y\in\RR$ to denote a response. Then, the ReLU regression problem instance is to minimize the following objective:
\begin{align*}
& L(\wb^*) = \min_{\wb\in\cH}L(\wb),\\
& L(\wb):= \EE_{(\xb,y)\sim \mathbb{D}}\big[(\mathrm{ReLU}(\xb^\top \wb) - y)\big]^2,
\end{align*}
where  $\mathrm{ReLU}(.) := \max\{., 0\}$ is the Rectified Linear Unit (ReLU). An problem of interest to the learning theory community related to ReLU regression to compare the performance of SGD and the performance of GLM-tron~\citep{kakade2011efficient}. Under ReLU regression, the SGD update is given by,
\begin{align*}
    & \wb_t = \wb_{t-1} - \eta_t \cdot \gb_t, \\
    & \gb_t := (\mathrm{ReLU}(\xb_t^\top \wb_{t-1}) -\yb_t )\xb_t \cdot  \mathbf{1}[\xb_t^\top \wb_{t-1} > 0],
\end{align*}
where $\mathbf{1}[.]$ is the standard indicator function and $\eta_t$ is the learning rate or step size at the $t$-th iteration.  

Furthermore, the GLM-tron update is given by,
\begin{align*}
    \wb_t = \wb_{t-1} - \eta_t \cdot (\mathrm{ReLU}(\xb_t^\top \wb_{t-1}) - y_t)\xb_t.
\end{align*}
Comparing these two algorithms, the only difference is that GLM-tron ignores the derivative of ReLU(·) in its updates.

Excessive risk bound and instance-wise comparison between SGD and GLM-tron in ReLU regression has recently been given in~\citep{wu2023finite}. In particular, they study this problem under a decaying step size schedule $(\eta_t)_t$ given by,
\begin{align}
    \eta_t =  \left\{
    \begin{array}{ll}
          \eta_{t-1}/2, & t \% (N\log N) = 0, \\
          \eta_{t-1}, & \mathrm{otherwise}, \\
    \end{array} 
    \right.
\end{align}
where $N$ is the sample size. Under the ``well-specified'' setting (bounded noise/variance) setting, \cite{wu2023finite} has provided a sharp excess risk characterization for both SGD and GLM-tron for the ReLU regression, which is also dependent on the interplay between the eigenspectrum of the data covariance and the optimal model parameter. 

In addition, \cite{wu2023finite} has conducted an instance-based comparison between SGD and GLM-tron for symmetric Bernoulli data ($\PP(x) = \PP(-x)$). In particular, they have shown that when there is no noise in the data, SGD unavoidably suffers from a constant risk in expectation, while GLM-tron can still obtain a small risk. Under the case with ``well-specified'' noise, GLM-tron can provably achieve (at least) comparable performance to SGD, i.e.,
\begin{align*}
    \text{GLM-tron Risk} \lesssim \text{SGD Risk}.
\end{align*}

The overall result in~\citep{wu2023finite} suggests that GLM-tron might be more effective in ReLU regression. 

In particular, the excess risk upper bound of GLM-tron in ReLU regression is given by,

\begin{align}
    \text{GLM-tronRisk} \lesssim \underbrace{\left\lVert \prod_{t=1}^{N}\left( \Ib -\frac{\eta_t}{2} \Hb \right) \wb^* \right\lVert_{\Hb}^2}_{\text{GLM-tronBias}} + \underbrace{(\sigma^2+ \|\wb^*\|_{\Hb}^2) \left(\frac{\log N k}{N} + \frac{N\eta_0^2}{\log N}\sum_{i > k} \lambda_i^2\right)}_{\text{GLM-tronVariance}}, 
\end{align}
where $k > 0$ is arbitrary. In particular, when we pick 
$$ k = k^* = \max \{ k : \lambda_k \geq \log N/(\eta_0 N)\},$$
We can arrive the excess risk lower bound for GLM-tron
\begin{align}
    \text{GLM-tronRisk} \gtrsim \underbrace{\left\lVert \prod_{t=1}^{N}\left( \Ib -\frac{\eta_t}{2} \Hb \right) \wb^* \right\lVert_{\Hb}^2}_{\text{GLM-tronBias}} + \underbrace{(\sigma^2+ \|\wb^*\|_{\Hb}^2) \left(\frac{\log N k^*}{N} + \frac{N\eta_0^2}{\log N}\sum_{i > k^*} \lambda_i^2\right)}_{\text{GLM-tronVariance}}, 
\end{align}
From the results above, we can observe that the excess risk of GLM-tron is again dependent on the alignment between the eigenspectrum of the covariance matrix and the optimal model parameter. The analysis and proof in ~\citep{wu2023finite} reveals that the iterate dynamic of GLM-tron is closely related to the dynamic of linear regression and therefore, can be analysed by comparing it to linear regression. Therefore, we can naturally extend the analysis in the paper to study the reconditioning of GLM-tron in ReLU regression which is given by,
\begin{align*}
    \wb_t = \wb_{t-1} - \eta_t \cdot (\mathrm{ReLU}(\xb_t^\top \wb_{t-1}) - y_t)\cdot \Gb \xb_t,
\end{align*}
where $\Gb$ is the preconditioning matrix. The iterative dynamic of the learning process of ReLU regression is similar to the one for linear regression. Therefore, through a similar computation as in Theorem~\ref{theorem:precondition_sgd_fit2}, we can see that the excess risk of preconditioned GLM-tron can again be obtained by changing the covariance matrix and optimal model parameter into $\tH = \Gb^{1/2}\Hb \Gb^{1/2}$ and $\tw = \Gb^{-1/2}\wb^*$. This leads to the following excess risk bound for preconditioned GLM-tron,
\begin{align}
    \text{GLM-tronRisk} \lesssim \underbrace{\left\lVert \prod_{t=1}^{N}\left( \Ib -\frac{\eta_t}{2} \tH \right) \tw^* \right\lVert_{\tH}^2}_{\text{GLM-tronBias}} + \underbrace{(\sigma^2+ \|\tw^*\|_{\tH}^2) \left(\frac{\log N k}{N} + \frac{N\eta_0^2}{\log N}\sum_{i > k} \tlambda_i^2\right)}_{\text{GLM-tronVariance}}, 
\end{align}
where $k > 0$ is arbitrary. In particular, when we pick 
$$ k = k^* = \max \{ k : \tlambda_k \geq \log N/(\eta_0 N)\},$$
We can arrive the excess risk lower bound for GLM-tron
\begin{align}
    \text{GLM-tronRisk} \gtrsim \underbrace{\left\lVert \prod_{t=1}^{N}\left( \Ib -\frac{\eta_t}{2} \tH \right) \tw^* \right\lVert_{\tH}^2}_{\text{GLM-tronBias}} + \underbrace{(\sigma^2+ \|\tw^*\|_{\Hb}^2) \left(\frac{\log N k^*}{N} + \frac{N\eta_0^2}{\log N}\sum_{i > k^*} \tlambda_i^2\right)}_{\text{GLM-tronVariance}}, 
\end{align}

 Therefore, based on the above analysis, we can readily extend the result and framework in this paper to study preconditioning for GLM-tron in ReLU regression.  Given the significance of GLM-tron in ReLU regression, it would be interesting to study whether preconditioning can help optimize the excess risk of GLM-tron. This renders an interesting future research direction but beyond the scope of this paper.



\section{Concrete Example for Comparing Ridge and SGD} \label{appendix:concrete_example}
In this appendix, we present concrete examples where preconditioned SGD achieve better performance than ridge regression and where preconditioning closes the performance gap between SGD and ridge regression.
\subsection{Preconditioned SGD outperforms Ridge Regression}
Consider a problem instance with  $\mathbf{H}$'s spectrum such that,
\begin{align}
    \lambda_i =  \left\{
    \begin{array}{ll}
         \frac{\log N}{ N^{1/2}} , & i = 1, \\
          \frac{1 - \log N/ N^{1/2}}{N}, & 1<i\leq N, \\
          0, & \mathrm{Otherwise}
    \end{array} 
    \right.
\end{align}
 
and with the true parameter $\mathbf{w}^*$ defined as:
\begin{align}
    \wb^*[i] =  \left\{
    \begin{array}{ll}
        \sigma \sqrt{\frac{N^{1/2}}{\log N}} , & i = 1, \\
          0, & \mathrm{Otherwise}
    \end{array} 
    \right.
\end{align}
By picking the step size $\eta = N^{-1/2}$, using Theorem~\ref{thm:standard_sgd} one can show the excess risk of SGD is given by,
\begin{align*}
    \mathrm{SGDRisk} \lesssim \frac{\sigma^2}{N}
\end{align*}
Similarly, using Theorem~\ref{thm:lowerbound_ridge}, one can get that the excess risk of the ridge is given by,
\begin{align*}
    \mathrm{RidgeRisk} \gtrsim \frac{\sigma^2}{N^{1/2}\log N}
\end{align*}
Therefore, we have, 
\begin{align*}
    \mathrm{SGDRisk} \lesssim \mathrm{RidgeRisk}
\end{align*}

\subsection{Preconditioning Closing the Gap between SGD and Ridge}
Consider the problem instance  where $\mathbf{H}$ is a diagonal matrix with $\mathbf{H}[1][1] = 1, \mathbf{H}[2][2] = \frac{1}{N \kappa(N)}$ (where $\kappa(N) = \mathrm{tr}(\mathbf{H})/(N \lambda_{ \mathrm{min} (N, d)})$ , and the remaining entries are zero. Set $\mathbf{w}^*[2] = N \kappa(N)$ and all other entries of $\mathbf{w}^*$ to zero. Under this case, it is challenging for SGD to learn $\mathbf{w}^*[2]$ due to the small magnitude of the second eigenvalue. By applying our proposed preconditioning with $\beta = \frac{1}{N \kappa(N)}$, we can show that PreSGD's performance becomes comparable with that of ridge regression.

\section{Comparison to Newton's method}\label{appendix:newton}
In this appendix, we present a more in-depth discussion and comparison between our proposed preconditioning matrix with Newton's method.

First, recall that Newton's method can be viewed as a preconditioning technique with the second derivative (Hession) which corresponds to the covariance matrix in our setting.  On the other hand, recall that our design of $\mathbf{G}$ is given by $$(\beta \mathbf{H} + I)^{-1}.$$ 
As $\beta$ becomes large, $\beta \mathbf{H}$ would become dominant and $\mathbf{G}$ would become more similar to a version of Newton’s method. This adjustment tends to reduce bias and accelerate the convergence rate but may increase the risk of overfitting (larger variance). Conversely, a small $\beta$ makes $\mathbf{G}$ approximate an identity matrix, reducing PreSGD to plain SGD. In this case, PreSGD will have a relatively larger bias due to its insufficient optimization for some small eigenvalues. Therefore, we can achieve a better bias-variance trade-off by adjusting the $\beta$ in $\mathbf{G}$.
\section{Preconditioned Ridge Matching Preconditioned SGD}\label{appendix:ridge_match_sgd}
In the previous results, we have shown that for every preconditioned Ridge, we can tune the preconditioned matrix and learning rate for SGD so that it is guaranteed to have a comparable performance to the preconditioned Ridge. A natural follow-up question is whether the inverse direction is true. In this appendix, we show that the inverse direction is indeed true. In other words, given an instance of preconditioned SGD, we can tune the preconditioned matrix and the regularization parameter of ridge regression so that it is guaranteed to have comparable performance to the preconditioned SGD. Similarly, we consider a representative family of $\Gb$ that has the same eigenbasis and preserve the order of eigenvalue of the covariance matrix. Then, formally, we have the following theorem.

\begin{theorem}[Preconditioned-Ridge comparable to preconditioned SGD] \label{theorem:MG}
Let $\Gb$ be the preconditioned matrix for SGD. Let $N$ be a sufficiently large training sample size for both ridge and SGD.  Then, for any SGD solution that is generalizable and any $\eta < \lambda_1$, there exists a choice of the preconditioned matrix $\Mb^*$ and a regularization parameter $\lambda^*$  such that 
$$  \cE\big[\wb_{\mathrm{ridge}}(N,\Mb^*;\lambda^*)\big] \lesssim  \cE\big[{\wb}_{\mathrm{sgd}}(N,\Gb; \eta)\big] .$$
\end{theorem}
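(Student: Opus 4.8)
The plan is to construct, from the given SGD preconditioner $\Gb$ and step size $\eta$, a ridge preconditioner $\Mb^{*}$ and regularization parameter $\lambda^{*}$ so that preconditioned ridge regression \emph{imitates} preconditioned SGD eigendirection by eigendirection, and hence has excess risk, up to absolute constants, no larger. Two ingredients beyond the main text are needed: the \emph{lower} bound on the excess risk of preconditioned SGD (the two-sided counterpart of Theorem~\ref{theorem:precondition_sgd_fit2}, inherited from~\citep{zou2023benign} through the change of variables $\tH=\Gb^{1/2}\Hb\Gb^{1/2}$, $\tw^{*}=\Gb^{-1/2}\wb^{*}$ used in its proof), and the \emph{upper} bound on the excess risk of preconditioned ridge (the two-sided counterpart of Theorem~\ref{theorem:precondition_ridge_fit}, from~\citep{tsigler2023benign} through $\hH=\Hb^{1/2}\Mb^{-1}\Hb^{1/2}$, $\hw^{*}=\Mb^{1/2}\wb^{*}$). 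Since $\Gb$ is restricted to the representative family, it commutes with $\Hb$ and preserves the order of eigenvalues, so all matrices below are simultaneously diagonalizable in the eigenbasis of $\Hb$; writing $g_i$ for the eigenvalues of $\Gb$ we have $\tlambda_i=g_i\lambda_i$ and $\tw^{*}[i]=\wb^{*}[i]/\sqrt{g_i}$.

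\textbf{Construction.} Let $s:=\max\{k:\,N\eta\,\tlambda_k\ge c\}$ be the index up to which the SGD bias decays exponentially (finite because $\tr(\tH)<\infty$ forces $\tlambda_k\to 0$), where $c$ is the absolute constant appearing in the SGD lower bound. Take
\[
  \Mb^{*}:=\Gb^{-1}\exp\!\big(-N\eta\,\tH\big),
  \qquad
  \lambda^{*}:=\frac{bce^{c}}{\eta}-\sum_{i>s}\hlambda_i,
\]
where $b>1$ is the constant from the ridge bound and $\hlambda_i$ are the eigenvalues of $\hH$. Then $\Mb^{*}$ is positive definite, commutes with $\Hb$, has eigenvalues $m_i=e^{-N\eta\tlambda_i}/g_i$, and induces $\hlambda_i=\tlambda_i\,e^{N\eta\tlambda_i}$ and $\hw^{*}[i]=e^{-N\eta\tlambda_i/2}\,\tw^{*}[i]$; since $\mu\mapsto\mu e^{N\eta\mu}$ is increasing, the $\hlambda_i$ remain non-increasing, so $\Mb^{*}$ is again in the representative family. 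This exponential inflation of $\hlambda_i$ is the crux: the ridge truncation rule $N\hlambda_k\lesssim\hat\lambda$ becomes $(N\eta\tlambda_k)e^{N\eta\tlambda_k}\lesssim\hat\lambda\eta$, and with $\hat\lambda\eqsim 1/\eta$ this is equivalent to $N\eta\tlambda_k\lesssim 1$, i.e. exactly SGD's truncation rule, so the ridge critical index equals $s$ (up to the standard off-by-one). Moreover $\lambda^{*}>0$: the step-size restriction behind the SGD bound, $\eta\lesssim 1/\tr(\tH)$, gives $\sum_{i>s}\hlambda_i\le e^{c}\sum_{i>s}\tlambda_i\le e^{c}\tr(\tH)\le e^{c}/\eta<bce^{c}/\eta$ (using $bc>1$), and one checks that $(\lambda^{*},\,k^{*}=s,\,\hat\lambda=bce^{c}/\eta)$ solves the ridge threshold equation self-consistently.

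\textbf{Comparison.} With $k^{*}=s$ on both sides the two bounds match termwise. In the tail ($i>s$) the bias contributions are in fact equal, since $\hlambda_i(\hw^{*}[i])^{2}=\tlambda_i e^{N\eta\tlambda_i}e^{-N\eta\tlambda_i}(\tw^{*}[i])^{2}=\tlambda_i(\tw^{*}[i])^{2}$, so both equal $\|\tw^{*}\|_{\tH_{s:\infty}}^{2}=\|\wb^{*}\|_{\Hb_{s:\infty}}^{2}$ by Lemma~\ref{lemma:double_effect}. In the head ($i\le s$) the $i$-th ridge bias term is
\[
  \frac{\hat\lambda^{2}}{N^{2}}\frac{(\hw^{*}[i])^{2}}{\hlambda_i}
  \;\eqsim\;
  \frac{1}{N^{2}\eta^{2}}\frac{e^{-N\eta\tlambda_i}(\tw^{*}[i])^{2}}{\tlambda_i\,e^{N\eta\tlambda_i}}
  \;=\;
  \frac{1}{N^{2}\eta^{2}}\frac{(\tw^{*}[i])^{2}}{\tlambda_i}\,e^{-2N\eta\tlambda_i},
\]
which is, up to constants, the $i$-th SGD bias term in the SGD lower bound — the hidden factor $e^{N\eta\tlambda_i}$ in $\hlambda_i$ was chosen precisely to reproduce SGD's $e^{-2N\eta\tlambda_i}$ decay. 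Hence $\mathrm{RidgeBias}\lesssim\mathrm{SGDBias}$. For the variance, $\hlambda_i\eqsim\tlambda_i$ for $i>s$ (there $N\eta\tlambda_i<c$, so $e^{\pm N\eta\tlambda_i}=\Theta(1)$) and $\hat\lambda\eqsim 1/\eta$, so $\mathrm{RidgeVariance}\eqsim\sigma^{2}\big(s/N+N\eta^{2}\sum_{i>s}\tlambda_i^{2}\big)\lesssim\mathrm{SGDVariance}$, because the SGD variance carries the extra nonnegative factor $\sigma^{2}+\|\tw^{*}\|_{\tH}^{2}=\sigma^{2}+\|\wb^{*}\|_{\Hb}^{2}$ and $\rho^{2}=\Theta(1)$. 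Adding the bias and variance estimates yields $\cE\big[\wb_{\mathrm{ridge}}(N,\Mb^{*};\lambda^{*})\big]\lesssim\cE\big[\wb_{\mathrm{sgd}}(N,\Gb;\eta)\big]$.

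\textbf{Main obstacle.} The hard part is not the construction but obtaining and aligning the two \emph{two-sided} risk characterizations: the main text states only the SGD upper bound and the ridge lower bound, whereas this argument needs the SGD lower bound and the ridge upper bound, and — crucially — it needs the SGD lower bound to be governed by the \emph{same} critical index $s$ that defines $\Mb^{*}$, so that the head/tail decomposition is consistent on both sides. The remaining delicate point is purely constant-bookkeeping: making ``$N\eta\tlambda_k\ge c$'' (SGD) and ``$bN\hlambda_{k+1}\le\hat\lambda$'' (ridge) select the same index while simultaneously keeping $\lambda^{*}>0$ and $\hat\lambda\eqsim 1/\eta$, which is where $\eta\lesssim 1/\tr(\tH)$ enters; this requires chasing absolute constants but no new analytic idea.
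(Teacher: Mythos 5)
Your proposal is correct and follows essentially the same route as the paper's own proof: invoke the preconditioned SGD excess-risk \emph{lower} bound and the preconditioned ridge \emph{upper} bound, choose $\lambda^*$ so that the ridge critical index coincides with SGD's, and take $\Mb^*$ diagonal in the eigenbasis of $\Hb$ so that each ridge bias/variance term is dominated by the corresponding SGD term. The only substantive difference is that you commit to the single closed-form choice $\Mb^*=\Gb^{-1}\exp(-N\eta\tH)$ for all coordinates, whereas the paper only imposes per-index constraints on the eigenvalues $m_i$ of $\Mb^*$ (an upper bound $m_i\le e^{-N\eta\tlambda_i}\hlambda^*/g_i$ on the head and a lower bound $m_i\ge \hlambda^*/(g_i\eta)$ on the tail); both constructions, like the paper, leave the same constant-level bookkeeping (index alignment up to absolute constants, positivity of $\lambda^*$ under the stated step-size condition) unverified in detail.
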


To prove the above theorem, we would need the excess risk lower bound for SGD and the excessive risk upper bound for the ridge. We start by restating the standard excess risk results from the existing literature. According to~\cite{zou2021benign} and~\cite{bartlett2020benign}, we have the following SGD risk upper bound and ridge risk lower bound.

\begin{theorem}[SGD risk lower bound and Ridge risk upper bound]\label{thm:standard_sgd_lower}
    Consider SGD with tail-averaging with initialization $\wb_0=\bm{0}$.
Suppose the stepsize satisfies $\eta \leq 1/\lambda_1$.
Then the excess risk can be lower-bounded as follows,
\begin{align*}
\mathrm{SGDRisk}
&\geq  \sgdbias + \sgdvar,
\end{align*}
where,
\begin{align*}
 \sgdbias & \gtrsim \frac{1}{\eta^2N^2}\cdot\big\|\exp\big(-N\eta \Hb \big)\wb^*\big\|_{\Hb_{0:k^*}^{-1}}^2 + \big\|\wb^*\big\|_{\Hb_{k^*:\infty}}^2, \\
 \sgdvar & \gtrsim \frac{\sigma^2 }{N}\cdot\bigg(k^* + N^2\eta^2 \sum_{i> k^*}\lambda_i^2\bigg) +  \|\wb^*\|_\Hb^2 \frac{\eta}{\lambda_1} \sum_{i > k^{\dagger}} \lambda_i^2,
\end{align*}
where $k^* = \max \{k:\lambda_k \geq 1/(N\eta)\}$, and $k^{\dagger} = \max \{k: \lambda_k \geq 2/(3N\eta)\}$.

 Let $\lambda\ge 0$ be the regularization parameter, $n$ be the training sample size and $\wb_{\mathrm{ridge}}(N;\lambda)$ be the output of ridge regression. 
 Then 
\begin{align*}
\mathrm{RidgeRisk} \leq \ridgebias +  \ridgevar ,
\end{align*}
where,
\begin{align*}
\ridgebias &\lesssim \bigg(\frac{\lambda+\sum_{i>\kr}\lambda_i}{N}\bigg)^2\cdot\|\wb^*\|_{\Hb_{0:\kr}^{-1}}^2+\|\wb^*\|_{\Hb_{\kr:\infty}}^2,\notag\\
\ridgevar &\lesssim\sigma^2\cdot\bigg\{\frac{\kr}{N}+\frac{N\sum_{i>\kr}\lambda_i^2}{\big(\lambda+\sum_{i>\kr}\lambda_i\big)^2}\Bigg\},
\end{align*}
where $ \kr := \min\left\{k: b  \lambda_{k+1} \le  (\lambda+\sum_{i>k}\lambda_i)/n\right\} $.
\end{theorem}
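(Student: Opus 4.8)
The plan is to pass both sides to a common canonical coordinate system and then construct $\Mb^*$ and $\lambda^*$ so that the ridge upper bound of Theorem~\ref{thm:standard_sgd_lower} is dominated term-by-term by the SGD lower bound of the same theorem --- this is the mirror image of the proofs of Theorems~\ref{theorem:HI} and~\ref{theorem:HM}, with the roles of SGD and ridge exchanged. By the reduction in the proof of Theorem~\ref{theorem:precondition_sgd_fit2}, the iterate dynamics (hence the excess risk) of $\wb_{\mathrm{sgd}}(N,\Gb;\eta)$ coincide with those of standard constant-stepsize, tail-averaged SGD on the transformed instance with covariance $\tH=\Gb^{1/2}\Hb\Gb^{1/2}$ and target $\tw^*=\Gb^{-1/2}\wb^*$; since $\Gb$ lies in the representative family it is simultaneously diagonalizable with $\Hb$, so $\tH$ and $\tw^*$ are diagonal in the eigenbasis $\{\vb_i\}$ of $\Hb$ with $\tlambda_i=g_i\lambda_i$ (non-increasing, because the family preserves the eigenvalue order) and $\tw^*[i]^2=\wb^*[i]^2/g_i$, where $g_i$ is the $i$-th eigenvalue of $\Gb$. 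Applying the SGD lower bound of Theorem~\ref{thm:standard_sgd_lower} to this transformed instance, with $k^*:=\max\{k:\tlambda_k\ge 1/(N\eta)\}$, gives
\begin{align*}
\cE\big[\wb_{\mathrm{sgd}}(N,\Gb;\eta)\big]\gtrsim \frac{1}{\eta^2N^2}\big\|e^{-N\eta\tH}\tw^*\big\|_{\tH_{0:k^*}^{-1}}^2+\big\|\tw^*\big\|_{\tH_{k^*:\infty}}^2+\frac{\sigma^2}{N}\Big(k^*+N^2\eta^2\sum_{i>k^*}\tlambda_i^2\Big).
\end{align*}

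Next I would choose the ridge preconditioner $\Mb^*$ diagonal in $\{\vb_i\}$ so that the ridge canonical covariance $\hH=\Hb^{1/2}(\Mb^*)^{-1}\Hb^{1/2}$ has eigenvalues $\hlambda_i=\tlambda_i\,e^{N\eta\tlambda_i}$ for $i\le k^*$ and $\hlambda_i=\tlambda_i$ for $i>k^*$ (equivalently, the $i$-th eigenvalue of $\Mb^*$ is $\lambda_i/\hlambda_i$, i.e. $g_i^{-1}e^{-N\eta\tlambda_i}$ for $i\le k^*$ and $g_i^{-1}$ otherwise), and set the regularization parameter $\lambda^*:=b/\eta-\sum_{i>k^*}\tlambda_i$, where $b>1$ is the absolute constant appearing in the ridge cutoff of Theorem~\ref{thm:standard_sgd_lower}. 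Since SGD is generalizable its variance cannot diverge, which forces $\eta\lesssim 1/\tr(\tH)$ and hence $\lambda^*\ge(b-1)/\eta>0$; moreover the ridge effective regularization is $\Lambda:=\lambda^*+\sum_{i>k^*}\hlambda_i=b/\eta\eqsim 1/\eta$. Because $\hlambda_i\ge e\,\tlambda_i>1/(N\eta)$ for $i\le k^*$ while $\hlambda_i=\tlambda_i<1/(N\eta)$ for $i>k^*$, a short check of the fixed-point condition $b\hlambda_{k+1}\le(\lambda^*+\sum_{i>k}\hlambda_i)/N$ shows the ridge cutoff equals $k^*$. Finally, for every $i$ one has $\hlambda_i\,\hw^*[i]^2=\lambda_i\,\wb^*[i]^2$ with $\hw^*=(\Mb^*)^{1/2}\wb^*$ (the tail of the ridge canonical signal is invariant under this family), so in particular $\|\hw^*\|_{\hH_{k^*:\infty}}^2=\|\tw^*\|_{\tH_{k^*:\infty}}^2$.

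It then remains to apply the standard ridge upper bound of Theorem~\ref{thm:standard_sgd_lower} to the transformed instance $(\hH,\hw^*,\lambda^*)$ --- legitimate because, by the equivalence established in the proof of Theorem~\ref{theorem:precondition_ridge_fit}, preconditioned ridge with $\Mb^*$ is standard ridge run on $\hat{\Xb}=(\Mb^*)^{-1/2}\Xb$, and Assumptions~\ref{assump:fourth_moment}--\ref{assump:model_noise} transfer to the transformed data with the same constants --- and to bound each of its four terms. Substituting $\Lambda=b/\eta$, $\hw^*[i]^2=(\lambda_i/\hlambda_i)\wb^*[i]^2$ and $\hlambda_i=g_i\lambda_i e^{N\eta\tlambda_i}$ for $i\le k^*$ gives the ridge head bias $=b^2\cdot\frac{1}{\eta^2N^2}\|e^{-N\eta\tH}\tw^*\|_{\tH_{0:k^*}^{-1}}^2$; the ridge tail bias $=\|\tw^*\|_{\tH_{k^*:\infty}}^2$; the ridge head variance $=\sigma^2k^*/N$; and the ridge tail variance $=\sigma^2N\sum_{i>k^*}\hlambda_i^2/\Lambda^2=(\sigma^2/b^2)\,N\eta^2\sum_{i>k^*}\tlambda_i^2$. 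Each of these is, up to an absolute constant, at most the corresponding term in the displayed SGD lower bound, so summing yields $\cE[\wb_{\mathrm{ridge}}(N,\Mb^*;\lambda^*)]\lesssim\cE[\wb_{\mathrm{sgd}}(N,\Gb;\eta)]$.

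The step I expect to be the main obstacle is making the cutoff identification fully rigorous: the constant $b>1$ in the ridge cutoff definition, and the fact that $\hlambda_i$ for indices near $k^*$ is controlled only up to $O(1)$ factors, mean one must verify carefully that the choice $\Lambda=b/\eta$ pins the ridge cutoff exactly to $k^*$ (or, failing that, to a constant multiple of it, which still suffices after adjusting constants), and one must double-check that the reduction from preconditioned to standard SGD in Theorem~\ref{theorem:precondition_sgd_fit2} is faithful at the level of the \emph{lower} bound, not only the upper bound used there. The remaining bookkeeping is identical to that in the proof of Theorem~\ref{theorem:HI}, read with the inequalities reversed, and the step-size restriction ``$\eta<\lambda_1$'' together with generalizability is exactly what is needed to keep $\lambda^*$ non-negative without a separate case analysis.
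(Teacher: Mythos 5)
Your proposal does not prove the statement in question. The statement (Theorem~\ref{thm:standard_sgd_lower}) asserts two things about the \emph{unpreconditioned} algorithms: a lower bound on the excess risk of constant-stepsize, tail-averaged SGD in terms of $\Hb$, $\wb^*$, $k^*$ and $k^\dagger$, and an upper bound on the excess risk of standard ridge regression in terms of $\kr$. Your argument never establishes either bound; on the contrary, it \emph{invokes} both of them ("Applying the SGD lower bound of Theorem~\ref{thm:standard_sgd_lower} \dots", "apply the standard ridge upper bound of Theorem~\ref{thm:standard_sgd_lower} \dots") as known ingredients and then constructs $\Mb^*$ and $\lambda^*$ so that the ridge bound is dominated termwise by the SGD bound. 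What you have sketched is a proof of the downstream comparison result, Theorem~\ref{theorem:MG} (preconditioned ridge matching preconditioned SGD), not of Theorem~\ref{thm:standard_sgd_lower} itself; as a proof of the stated theorem it is circular.

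For the record, the paper does not derive these bounds either: it imports them directly from the literature (the SGD lower bound from \cite{zou2021benign} and the ridge upper bound from \cite{bartlett2020benign}/\cite{tsigler2023benign}), and the substantive new work in that appendix is exactly the construction you carried out. If you were asked to actually prove Theorem~\ref{thm:standard_sgd_lower}, you would need the underlying machinery of those papers: for SGD, the bias--variance decomposition of the tail-averaged iterates via the update operators $\betab_t^{\bias}$, $\betab_t^{\var}$ and a matching \emph{lower} bound on both contributions (not just the upper bound reproduced in Theorem~\ref{thm:standard_sgd}), including the extra variance term $\|\wb^*\|_{\Hb}^2\,\frac{\eta}{\lambda_1}\sum_{i>k^\dagger}\lambda_i^2$ coming from the fourth-moment/misspecification effect; for ridge, the non-asymptotic bias and variance upper bounds organized around the cutoff $\kr$ defined through the constant $b$. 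None of that appears in your write-up, so as an answer to this particular statement there is a genuine gap, even though the construction itself (the choice $\hlambda_i=\tlambda_i e^{N\eta\tlambda_i}$ on the head, $\lambda^*=b/\eta-\sum_{i>k^*}\tlambda_i$, and the cutoff identification) is a reasonable mirror of the paper's Theorem~\ref{theorem:MG} argument.
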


By a similar argument as for the proof for Theorem~\ref{theorem:precondition_ridge_fit} and Theorem~\ref{theorem:precondition_sgd_fit2}, we can immediately obtain the lower bound of excess risk for preconditioned SGD and the upper bound of excessive for preconditioned Ridge.

\begin{theorem}[Excessive risk lower bound of preconditioned SGD]\label{theorem:precondition_sgd_fit_lower}    
Let $\Gb$ be a given preconditioned matrix. Consider SGD update rules with (\ref{eq:pred_sgd}) and (\ref{eq:pred_out}). Suppose Assumptions~\ref{assump:data_distribution}, ~\ref{assump:fourth_moment} and ~\ref{assump:model_noise} hold. Let 
$\tilde{\Hb} = \Gb^{1/2}\Hb\Gb^{1/2}, \quad \text{and} \quad \tilde{\wb}^*=\Gb^{-1/2} \wb^*.$ Suppose the step size $\eta$ satisfies $\eta \leq 1/ \tlambda_1$, where $\tlambda_1,\dots, \tlambda_d$ are the sorted eigenvalues of $\tH$ in descending order. Then SGD has the following excessive risk lower bound:
\begin{align*}
    & \mathrm{SGDRisk} \gtrsim  \sgdbias + \sgdvar, \label{eq:predsgd_risk} \\
 \sgdbias & \gtrsim \frac{1}{\eta^2N^2}\cdot\big\|\exp\big(-N\eta \tH \big)\tw^*\big\|_{\tH_{0:k^*}^{-1}}^2 + \big\|\tw^*\big\|_{\tH_{k^*:\infty}}^2, \\
 \sgdvar & \gtrsim \frac{\sigma^2 }{N}\cdot\bigg(k^* + N^2\eta^2 \sum_{i> k^*}\tlambda_i^2\bigg) +  \|\tw^*\|_{\tH}^2 \frac{\eta}{\tlambda_1} \sum_{i > k^{\dagger}} \tlambda_i^2,
\end{align*}
where $k^* = \max \{k:\tlambda_k \geq 1/(N\eta)\}$, and $k^{\dagger} = \max \{k: \tlambda_k \geq 2/(3N\eta)\}$.
\end{theorem}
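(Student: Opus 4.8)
The plan is to derive this lower bound by the \emph{same} change of variables used to establish the upper bound in Theorem~\ref{theorem:precondition_sgd_fit2}, only swapping the appeal to the standard SGD risk \emph{upper} bound for the standard SGD risk \emph{lower} bound. Recall that in the proof of Theorem~\ref{theorem:precondition_sgd_fit2} one left‑multiplies the preconditioned recursion $\wb_{t+1}-\wb^*=(\Ib-\eta\Gb\xb_t\xb_t^\top)(\wb_t-\wb^*)-\eta\varepsilon\,\Gb\xb_t$ by $\Gb^{-1/2}$, turning it into the ordinary constant‑stepsize SGD recursion for the iterate $\Gb^{-1/2}\wb_t$ driven by the ``data'' $\Gb^{1/2}\xb_t$. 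Thus the preconditioned run is exactly a (tail‑averaged, constant‑stepsize) \emph{plain} SGD run on the transformed problem with covariance $\tH=\Gb^{1/2}\Hb\Gb^{1/2}$, target $\tw^*=\Gb^{-1/2}\wb^*$, and the same noise $\varepsilon$; and, as already shown there, the excess risk is invariant under the transformation, $\cE(\wb_t)=\langle\mathrm{Err}_t,\tH\rangle$ with $\mathrm{Err}_t=\EE[\Gb^{-1/2}(\wb_t-\wb^*)(\wb_t-\wb^*)^\top\Gb^{-1/2}]$. Hence any lower bound on the excess risk of plain SGD on $(\tH,\tw^*)$ is automatically a lower bound on $\cE(\wb_{\mathrm{sgd}}(N,\Gb;\eta))$, and it suffices to produce the former.

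First I would check that the transformed problem meets the hypotheses of the standard SGD lower bound (Theorem~\ref{thm:standard_sgd_lower}, the SGD half, from \citep{zou2021benign,bartlett2020benign}). Under the conjugation $\xb\mapsto\Gb^{1/2}\xb$ the second moment becomes $\tH$, a PSD test matrix $\Ab$ becomes $\Gb^{1/2}\Ab\Gb^{1/2}$ (still PSD), and the cyclicity identities $\tr(\Gb^{1/2}\Ab\Gb^{1/2}\,\Hb)=\tr(\Ab\,\Gb^{1/2}\Hb\Gb^{1/2})=\tr(\Ab\tH)$ together with $\Gb^{1/2}\Hb\Gb^{1/2}\Ab\Gb^{1/2}\Hb\Gb^{1/2}=\tH\Ab\tH$ make Assumptions~\ref{assump:data_distribution}--\ref{assump:model_noise} close up for the transformed problem with the \emph{same} constants $\alpha,\sigma$. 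Moreover the stepsize requirement of Theorem~\ref{thm:standard_sgd_lower} — namely $\eta\le 1/\lambda_{\max}(\text{covariance})$ — reads precisely as $\eta\le 1/\tlambda_1$, the hypothesis of the present theorem (this is also why the lower bound tolerates a larger stepsize than the $\eta\lesssim 1/\tr(\tH)$ needed for the upper bound in Theorem~\ref{theorem:precondition_sgd_fit2}). Then I would simply invoke Theorem~\ref{thm:standard_sgd_lower} on $(\tH,\tw^*)$: substituting $\Hb\to\tH$, $\wb^*\to\tw^*$, $\lambda_i\to\tlambda_i$ and taking $k^*=\max\{k:\tlambda_k\ge 1/(N\eta)\}$ and $k^\dagger=\max\{k:\tlambda_k\ge 2/(3N\eta)\}$ reproduces verbatim the stated formulas for $\sgdbias$ and $\sgdvar$, including the multiplicative‑noise fluctuation term $\|\tw^*\|_{\tH}^2\,\frac{\eta}{\tlambda_1}\sum_{i>k^\dagger}\tlambda_i^2$.

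I do not expect a genuine obstacle: the whole content is that preconditioning is an exact reparametrisation of a plain SGD trajectory, which was already worked out in the proof of Theorem~\ref{theorem:precondition_sgd_fit2}. The only point requiring care is that that earlier proof recorded merely the \emph{upper} recursion $\mathrm{Err}_{t+1}\preceq(\cdot)$, which is enough for a risk upper bound; for the lower bound one instead uses the exact decomposition of $\mathrm{Err}_t$ into its noise‑free (bias) part and its noise‑driven (variance) part in the transformed coordinates and applies the known lower‑bound analysis to each — so the remaining task is just to confirm that the transformed noise/fourth‑moment quantities supply whatever (two‑sided) control that analysis uses and to quote it. Assembling these pieces and reading the result through the risk invariance $\cE(\wb_t)=\langle\mathrm{Err}_t,\tH\rangle$ yields $\mathrm{SGDRisk}\gtrsim\sgdbias+\sgdvar$ with the claimed expressions.
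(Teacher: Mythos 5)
Your proposal matches the paper's own argument: the paper proves this lower bound exactly by the reparametrisation $\wb_t \mapsto \Gb^{-1/2}\wb_t$, $\xb_t \mapsto \Gb^{1/2}\xb_t$ worked out in the proof of Theorem~\ref{theorem:precondition_sgd_fit2}, and then invokes the standard SGD lower bound of Theorem~\ref{thm:standard_sgd_lower} on the transformed instance $(\tH,\tw^*)$ with $\eta\le 1/\tlambda_1$. Your additional remark — that the earlier proof only recorded the one-sided recursion $\mathrm{{\bf Err}}_{t+1}\preceq(\cdot)$ and that the lower bound needs the exact bias/variance decomposition in the transformed coordinates — is a fair and correct observation about a detail the paper leaves implicit, but it does not change the route.
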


\begin{theorem}[Excessive risk upper bound for ridge with preconditioning]\label{theorem:precondition_ridge_fit_upper}    
Consider ridge regression with parameter $\lambda > 0$ and precondition matrix $\Mb$. Suppose Assumptions~\ref{assump:data_distribution}, ~\ref{assump:fourth_moment} and ~\ref{assump:model_noise} hold. Let 
$\hat{\Hb} = \Hb^{1/2}\Mb^{-1}\Hb^{1/2}, \quad \text{and} \quad \hat{\wb}^*=\Mb^{1/2} \wb^*.$
For a constant $\hat{\lambda}$ depending on $\lambda$, $\hat{\Hb}$, $N$ , and $\kr = \min\{k: N\hlambda_k \lesssim \hat{\lambda}\}$, ridge regression has the following excessive risk upper bound:
\begin{align*}
\mathrm{RidgeRisk} \lesssim \ridgebias +  \ridgevar ,
\end{align*}
where,
\begin{align*}
\ridgebias &\lesssim \bigg(\frac{\lambda+\sum_{i>\kr}\hlambda_i}{N}\bigg)^2\cdot\|\hw^*\|_{\hH_{0:\kr}^{-1}}^2+\|\hw^*\|_{\hH_{\kr:\infty}}^2,\notag\\
\ridgevar &\lesssim\sigma^2\cdot\bigg\{\frac{\kr}{N}+\frac{N\sum_{i>\kr}\hlambda_i^2}{\big(\lambda+\sum_{i>\kr}\hlambda_i\big)^2}\Bigg\},
\end{align*}
where $\hlambda_1,\dots, \hlambda_d$ are the sorted eigenvalues for $\hH$ in descending order.
\end{theorem}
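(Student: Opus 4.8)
The plan is to mirror, almost verbatim, the proof of Theorem~\ref{theorem:precondition_ridge_fit}, with the single change that the standard (unpreconditioned) ridge \emph{lower} bound invoked there (Theorem~\ref{thm:lowerbound_ridge}) is replaced by the standard ridge \emph{upper} bound, which is exactly the second half of Theorem~\ref{thm:standard_sgd_lower} (equivalently the non-asymptotic ridge bounds of \citep{tsigler2023benign,bartlett2020benign}). First I would reduce the preconditioned problem to an unpreconditioned one by the change of variables $\hat\wb = \Mb^{1/2}\wb$, i.e.\ $\wb = \Mb^{-1/2}\hat\wb$ together with $\hat\Xb = \Xb\Mb^{-1/2}$. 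Since $\|\wb\|^2_{\Mb} = \|\hat\wb\|_2^2$ and $\Xb\wb = \hat\Xb\hat\wb$, the estimator $\wb_{\mathrm{ridge}}(N,\Mb;\lambda)$ is the image under $\Mb^{-1/2}$ of the ordinary ridge estimator for the design $\hat\Xb$ with regularization $\lambda$; moreover $y = \la\xb,\wb^*\ra + \epsilon = \la\Mb^{-1/2}\xb,\,\Mb^{1/2}\wb^*\ra + \epsilon$, so in the transformed problem the population second moment is $\Mb^{-1/2}\Hb\Mb^{-1/2}$, which is orthogonally similar to $\hat\Hb = \Hb^{1/2}\Mb^{-1}\Hb^{1/2}$ and hence has the same eigenvalues $\hlambda_1,\dots,\hlambda_d$, while the true parameter (read off in the matching eigenbasis) is $\hat\wb^* = \Mb^{1/2}\wb^*$. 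The excess risk is invariant under this reparameterization because $\cE(\wb) = \tfrac12(\wb-\wb^*)^\top\Hb(\wb-\wb^*) = \tfrac12(\hat\wb-\hat\wb^*)^\top(\Mb^{-1/2}\Hb\Mb^{-1/2})(\hat\wb-\hat\wb^*)$.

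Second I would check that the three regularity assumptions carry over to the transformed data $\hat\xb = \Mb^{-1/2}\xb$ with the \emph{same} constants: $\Mb^{-1/2}\Hb\Mb^{-1/2}$ is strictly positive definite with trace $\tr(\Mb^{-1}\Hb) < \infty$ (Assumption~\ref{assump:data_distribution}); conjugating the fourth-moment inequality of Assumption~\ref{assump:fourth_moment} by $\Mb^{-1/2}$ and using $\tr(\Mb^{-1/2}\Ab\Mb^{-1/2}\Hb) = \tr(\Ab\,\Mb^{-1/2}\Hb\Mb^{-1/2})$ preserves it with the same $\alpha$; and $\EE[\epsilon^2\hat\xb\hat\xb^\top] = \Mb^{-1/2}\EE[\epsilon^2\xb\xb^\top]\Mb^{-1/2} \preceq \sigma^2\,\Mb^{-1/2}\Hb\Mb^{-1/2}$ (Assumption~\ref{assump:model_noise}). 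Consequently the standard ridge upper bound from Theorem~\ref{thm:standard_sgd_lower} applies to the transformed problem with covariance spectrum $\{\hlambda_i\}$, parameter $\hat\wb^*$, and effective-dimension index $k^* = \min\{k: b\,\hlambda_{k+1} \le (\lambda + \sum_{i>k}\hlambda_i)/N\}$; setting $\hat\lambda := \lambda + \sum_{i>k^*}\hlambda_i$ and translating the resulting bias and variance terms back into the $\hat\Hb$-weighted norms of $\hat\wb^*$ — exactly as done at the end of the proof of Theorem~\ref{theorem:precondition_ridge_fit} — yields the stated upper bound.

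The only step requiring any care is the bookkeeping in the reduction: (i) checking that the spectrum of the transformed covariance $\Mb^{-1/2}\Hb\Mb^{-1/2}$ coincides with that of $\hat\Hb = \Hb^{1/2}\Mb^{-1}\Hb^{1/2}$ — these are $C^\top C$ and $CC^\top$ for $C = \Hb^{1/2}\Mb^{-1/2}$, hence share all nonzero eigenvalues, and the eigenvector correspondence $v_i \propto C u_i$ converts the spectral projections of $\hat\wb^*$ into the $\hat\Hb$-based weighted norms written in the statement — and (ii) confirming that the regularity constants are genuinely unchanged, so the hidden constants behind $\lesssim$ remain absolute. Both checks are the ones already performed for the matching lower bound (Theorem~\ref{theorem:precondition_ridge_fit}), so no new technical ingredient is needed: the content of the theorem is entirely inherited from the unpreconditioned ridge upper bound through this invariance argument.
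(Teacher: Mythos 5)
Your proposal is correct and takes essentially the same route as the paper: the paper obtains this theorem by the very same $\hat{\wb}=\Mb^{1/2}\wb$, $\hat{\Xb}=\Xb\Mb^{-1/2}$ reduction used in the proof of Theorem~\ref{theorem:precondition_ridge_fit}, only with the standard ridge \emph{upper} bound (the second half of Theorem~\ref{thm:standard_sgd_lower}, from \citep{tsigler2023benign,bartlett2020benign}) substituted for the lower bound of Theorem~\ref{thm:lowerbound_ridge}. Your extra bookkeeping (verifying the assumptions transfer to $\Mb^{-1/2}\xb$ and identifying the spectra of $\Mb^{-1/2}\Hb\Mb^{-1/2}$ and $\hat{\Hb}$ as those of $C^\top C$ and $CC^\top$) is sound and in fact more explicit than the paper's one-line argument.
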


Now that we have all the ingredients needed, we can start proving Theorem~\ref{theorem:MG}. The overall proof structure is similar to the previous results. The idea is to first align the excess risk of Ridge and SGD by tuning the regularization parameter $\lambda$. Then, we can construct a preconditioning matrix $\Mb$ based on the excess risk profile of preconditioned SGD for the given $\Gb$.

\begin{proof}[Proof of Theorem~\ref{theorem:MG}]
    Recall that to show that preconditioned ridge is comparable with preconditioned SGD is enough to show that 
    \begin{align*}
        \mathrm{RidgeRisk} \lesssim \mathrm{SGDRisk}.
    \end{align*}

    Then, by Theorem~\ref{theorem:precondition_sgd_fit_lower}, we have the excess risk lower bound of SGD given by,

    \begin{align*}
     \mathrm{SGDRisk} & \gtrsim   \underbrace{\frac{1}{\eta^2N^2}\cdot\big\|\exp\big(-N\eta \tH \big)\tw^*\big\|_{\tH_{0:k^*}^{-1}}^2 + \big\|\tw^*\big\|_{\tH_{k^*:\infty}}^2}_{\mathrm{SGDBias}} \\
     & + \underbrace{\frac{\sigma^2 }{N}\cdot\bigg(k^* + N^2\eta^2 \sum_{i> k^*}\tlambda_i^2\bigg) +  \|\tw^*\|_{\tH}^2 \frac{\eta}{\tlambda_1} \sum_{i > k^{\dagger}} \tlambda_i^2,}_{\mathrm{SGDVariance}}
    \end{align*}
    where $\tlambda_1, \dots \tlambda_d$ are sorted eigenvalues for $\tH$, $\tilde{\Hb} = \Gb^{1/2}\Hb\Gb^{1/2}$,   $\tilde{\wb}^*=\Gb^{-1/2} \wb^*.$, $k^* = \max \{k:\tlambda_k \geq 1/(N\eta)\}$, and $k^{\dagger} = \max \{k: \tlambda_k \geq 2/(3N\eta)\}$.

    Similarly, by Theorem~\ref{theorem:precondition_ridge_fit_upper}, we have the excess risk upper bound of ridge given by,
    \begin{align*}
\mathrm{RidgeRisk}  \lesssim   & \underbrace{\bigg(\frac{\lambda+\sum_{i>\kr}\hlambda_i}{N}\bigg)^2\cdot\|\hw^*\|_{\hH_{0:\kr}^{-1}}^2+\|\hw^*\|_{\hH_{\kr:\infty}}^2}_{\mathrm{RidgeBias}} \\
 & + \underbrace{\sigma^2\cdot\bigg(\frac{\kr}{N}+\frac{N\sum_{i>\kr}\hlambda_i^2}{\big(\lambda+\sum_{i>\kr}\hlambda_i\big)^2}\Bigg)}_{\mathrm{RidgeVariance}},
\end{align*}
where $\hlambda_1,\dots, \hlambda_d$ are the sorted eigenvalues for $\hH$ in descending order, $\hat{\Hb} = \Hb^{1/2}\Mb^{-1}\Hb^{1/2}$, $\hat{\wb}^*=\Mb^{1/2} \wb^*$ and  $ \kr := \min\left\{k: b  \lambda_{k+1} \le  (\lambda+\sum_{i>k}\lambda_i)/n\right\}$.

    Let $\Gb$ be a given preconditioned matrix for SGD and let $g_i$ denote the weight of $\Gb$ on the i-th direction. By assumption, $\Gb$ preserves the orientation and order of eigenvalues of $\Hb$. By Theorem~\ref{theorem:precondition_sgd_fit_lower}, we can wrtie,
    $$\tlambda_i = g_i \lambda_i.$$

    As our goal is just to show the existence of matrix $\Mb$, we can consider a preconditioned $\Mb$ with similar property as $\Gb$ and write, 
    $$\hlambda_i = \frac{\lambda_i}{m_i},$$
    where $m_i$ is the weight of $\Mb$ on the i-th direction. 

    Similar to the previous proof, we can decompose the risk profile into bias and variance and then compare them separately. 
    
  We start with aligning the risk profiles of SGD and ridge by picking the ridge regression regularization 

    $$\lambda^* = \frac{b}{\eta} - \sum_{i > k^*}\hlambda_i.$$
    
    Doing so leads to $\kr = k^*$. 
    
    {\bf Variance.} Then we start by comparing the variance. We denote $\hlambda^* = \lambda^*+\sum_{i>k^*}\hlambda_i$ and have the variance for preconditioned Ridge as,
    \begin{align*}
         \ridgevar & \lesssim \sigma^2\cdot\bigg(\frac{k^*}{N}+\frac{N}{(\hlambda^*)^2}\sum_{i>k^*}\hlambda_i^2\bigg)\\ 
         \intertext{By making $m_i \geq \hlambda^*/(g_i\eta)$ for $i > k^*$, we can get} 
          & \lesssim \frac{\sigma^2 }{N}\cdot\bigg(k^* + N^2\eta^2 \sum_{i> k^*}\tlambda_i^2\bigg) \\
          & \lesssim \frac{\sigma^2 }{N}\cdot\bigg(k^* + N^2\eta^2 \sum_{i> k^*}\tlambda_i^2\bigg) +  \|\tw^*\|_{\tH}^2 \frac{\eta}{\tlambda_1} \sum_{i > k^{\dagger}} \tlambda_i^2, \\
         & = \sgdvar
    \end{align*}
    Next, we focus on the bias. Under same set up as the variance, the Ridge bias is given by,
    \begin{align*}
        \ridgebias \lesssim \bigg(\frac{\hlambda^*}{N}\bigg)^2\cdot\|\hw^*\|_{\hH_{0:k^*}^{-1}}^2+\|\hw^*\|_{\hH_{k^*:\infty}}^2
    \end{align*}
Therefore, we have that 
$$ \mathrm{RidgeVariance} \lesssim \mathrm{SGDVariance}$$
    
{\bf Bias.} Similar to the previous analysis, we can decompose the bias of Ridge into two intervals: 1) $i \leq k^*$ and 2) $i > k^*$.

We start with the second interval. For $i > k^*$, by Lemma~\ref{lemma:double_effect2}, we immediately obtain,
\begin{align*}
        \mathrm{RidgeBias}[k^*:\infty] & = \|\hw^*\big\|_{{\hH}_{k^*:\infty}}^2 \notag \\
        & = \|\wb^*\big\|_{\Hb_{k^*:\infty}}^2 \\
        & = \|\tw^*\big\|_{\tH_{k^*:\infty}}^2 \\
        & = \mathrm{SGDBias}[k^*:\infty].
\end{align*}
where the second last equality is due to Lemma~\ref{lemma:double_effect}.

Therefore, we have $$  \mathrm{SGDBias}[k^*:\infty] = \mathrm{RidgeBias}[k^*:\infty].$$

For $i \leq k^*$, similar to the previous analysis, we decompose each term of bias bound as follows, 
\begin{align}
    \mathrm{SGDBias[i]} &= (\tilde{\wb}^*[i])^2\frac{1}{N^2 \eta ^2\tilde{\lambda}_i}\exp \bigg(-2\eta N\tilde{\lambda}_i\bigg) \\
    \intertext{subsitute $\tilde{\lambda}_i$ and $\tilde{\wb}^*[i]$, we can obtain :} 
     &= (\wb^*[i])^2 \frac{1}{N^2}\frac{1}{\lambda_i g_i^2 \eta^2} \exp (-2\eta N \lambda_i g_i) \notag \\
     \intertext{by making $m_i \leq \exp(-N\eta \lambda_i g_i)\hlambda^*/g_i$, we get}
     &\gtrsim (\wb^*[i])^2\frac{m_i^2}{N^2}\frac{(\hlambda^*)^2}{{\lambda_i}}  \notag\\
     &= (\tw^*[i])^2\frac{1}{N^2}\frac{(\hlambda^*)^2}{{\tlambda_i}} \notag\\
     &= \mathrm{RidgeBias}[i]  \notag
\end{align}
Therefore, we have that 
$$ \mathrm{RidgeBias} \lesssim \mathrm{SGDBias}$$

Combining all the results above, we have that there exists an $\Mb$ and $\lambda$ such that 
$$ \mathrm{RidgeRisk} \lesssim \mathrm{SGDRisk}.$$

\end{proof}

\end{document}